\newcommand{\ie}{i.e.\xspace}
\newcommand{\fig}[1]{Fig.~\ref{fig:#1}}
\newcommand{\fign}[1]{\ref{fig:#1}}
\newcommand{\tbl}[1]{Table~\ref{tab:#1}}
\newcommand{\sctn}[1]{Sec.~\ref{sec:#1}}
\newcommand{\mb}[1]{\mathbf{#1}}
\providecommand{\norm}[1]{\left\lVert#1\right\rVert}
\DeclarePairedDelimiterX{\normsz}[1]{\lVert}{\rVert}{#1}
\DeclareMathOperator*{\argmin}{arg\,min}
\def \co {\mathcal{O}}
\def \E  {\mathbb{E}}
\def \R  {\mathbb{R}}
\def \C  {\mathbb{C}}
\def \F  {\mathcal{F}}
\def \G  {\mathcal{G}}
\def \T {\mathbf{T}}
\def \fpr {\mathbf{R}}
\newtheorem{assume}{Assumption}
\newtheorem{prop}{Proposition}
\date{\today}
\begin{document}

\title{First and Second Order Methods for Online Convolutional Dictionary Learning\thanks{Published in SIAM J. Imaging Sci., 11(2), 1589--1628, 2018. doi:10.1137/17M1145689
\funding{This research was supported by the U.S. Department of Energy through the LANL/LDRD Program. The work of Jialin Liu and Wotao Yin was supported in part by NSF grant DMS-1720237 and ONR grant N000141712162.}}}

\author{Jialin Liu\thanks{Department of Mathematics, UCLA, Los
    Angeles, CA 90095 (\texttt{danny19921123@gmail.com})}
  \and Cristina Garcia-Cardona\thanks{CCS Division, Los Alamos
    National Laboratory, Los Alamos, NM 87545 (\texttt{cgarciac@lanl.gov})}
  \and Brendt Wohlberg\thanks{T Division, Los Alamos
    National Laboratory, Los Alamos, NM 87545  (\texttt{brendt@lanl.gov})}
   \and Wotao Yin\thanks{Department of Mathematics, UCLA, Los
    Angeles, CA 90095 (\texttt{wotaoyin@math.ucla.edu})}
}

\maketitle

\begin{abstract}
Convolutional sparse representations are a form of sparse representation with a structured, translation invariant dictionary.  Most convolutional dictionary learning algorithms to date operate in batch mode, requiring simultaneous access to all training images during the learning process, which results in very high memory usage, and severely limits the training data size that can be used.  Very recently, however, a number of authors have considered the design of online convolutional dictionary learning algorithms that offer far better scaling of memory and computational cost with training set size than batch methods.  This paper extends our prior work, improving a number of aspects of our previous algorithm; proposing an entirely new one, with better performance, and that supports the inclusion of a spatial mask for learning from incomplete data; and providing a rigorous theoretical analysis of these methods.
\end{abstract}

\begin{keywords}
convolutional sparse coding, convolutional dictionary learning, online dictionary learning, stochastic gradient descent, recursive least squares
\end{keywords}

\section{Introduction}
\label{sec:intro}

\subsection{Sparse representations and dictionary learning}

\emph{Sparse signal representation} aims to represent a given signal by a linear combination of only a few elements of a fixed set of signal components~\cite{mairal-2014-sparse}. For example, we can approximate an $N$-dimensional signal $\mb{s}\in\R^N$ as
\begin{equation}
\label{eqn:sparse-coding}
\mb{s} \approx D \mb{x} = \mb{d}_1 x_1 + \ldots + \mb{d}_M x_M \;,
\end{equation}
where $D = [\mb{d}_1, \mb{d}_2, \cdots, \mb{d}_M] \in \R^{N\times M}$
is the \emph{dictionary} with $M$ \emph{atoms} and $\mb{x} = [x_1, x_2, \cdots, x_M]^T \in\R^M$ is the sparse representation. The problem of computing the sparse representation $\mb{x}$ given $\mb{s}$ and $D$ is referred to as \emph{sparse coding}. Among a variety of formulations of this problem, we focus on Basis Pursuit Denoising (BPDN)~\cite{chen-1998-atomic}
\begin{equation}
\label{eqn:bpdn}
\min_{\mb{x}} \; (1/2)\norm{D\mb{x}-\mb{s}}_2^2 + \lambda \norm{\mb{x}}_1 \;.
\end{equation}

Sparse representations have been used in a wide variety of applications, including denoising~\cite{elad2006image, mairal-2014-sparse}, super-resolution \cite{yang2010image, zhang2015survey}, classification \cite{wright2009robust}, and face recognition \cite{wright2010sparse}. A key issue when solving sparse coding problems as in (\ref{eqn:bpdn}) is how to choose the dictionary $D$. Early work on sparse representations used a fixed basis \cite{rubinstein2010dictionaries} such as wavelets \cite{mallat1999wavelet} or Discrete Cosine Transform (DCT) \cite{huang2007sparse},  but learned dictionaries can provide better performance \cite{aharon2006rm, elad2006image}.

\emph{Dictionary learning} aims to learn a good dictionary $D$ for a given distribution of signals. If $\mb{s}$ is a random variable,
the dictionary learning problem can be formulated as
\begin{equation}
\label{eqn:dl}
\min_{D\in C} \E_{\mb{s}} \bigg\{ \min_{\mb{x}} \frac{1}{2}\norm{D\mb{x}-\mb{s}}_2^2 + \lambda \norm{\mb{x}}_1\bigg\} \;,
\end{equation}
where $C = \{D \;|\; \norm{\mb{d}_m}_2^2 \leq1,\forall m\}$ is the constraint set, which is necessary to resolve the scaling ambiguity between $D$ and $\mb{x}$.

\emph{Batch dictionary learning methods} (e.g.~\cite{engan1999frame, engan1999method, aharon2006rm, xu-2016-fast}) sample a batch of training signals $\{\mb{s}_1, \mb{s}_2, \ldots, \mb{s}_K\}$ before training, and minimize an objective function such as
\begin{equation}
\label{eqn:batch-dl}
\min_{D\in C, \mb{x}} \sum_{k=1}^K \bigg\{ \frac{1}{2}\norm{D\mb{x}_k -\mb{s}_k}_2^2 + \lambda \norm{\mb{x}_k}_1\bigg\} \;.
\end{equation}
These methods require simultaneous access to all the training samples during training.

In contrast, \emph{online dictionary learning methods} process training samples in a streaming fashion. Specifically, let $\mb{s}^{(t)}$ be the chosen sample at the $t^{\text{th}}$ training step. The framework of online dictionary learning is
\begin{equation}
\label{eqn:online-dl}
\begin{split}
\mb{x}^{(t)} &= \text{SC} \Big(D^{(t-1)}; \mb{s}^{(t)}\Big) \;,\\
D^{(t)} &= D\text{-update} \bigg(\{D^{(\tau)}\}_{\tau=0}^{t-1}, \{\mb{x}^{(\tau)}\}_{\tau=1}^{t}, \{\mb{s}^{(\tau)}\}_{\tau=1}^{t}\bigg) \;.
\end{split}
\end{equation}
where SC denotes sparse coding, for instance, (\ref{eqn:bpdn}), and $D$-update computes a new dictionary $D^{(t)}$ given the past information $\{D^{(\tau)}\}_{\tau=0}^{t-1}, \{\mb{x}^{(\tau)}\}_{\tau=1}^{t}, \{\mb{s}^{(\tau)}\}_{\tau=1}^{t}$.  While each outer iteration of a batch dictionary learning algorithm involves computing the coefficient maps $\mb{x}_k$ for all training samples, online learning methods compute the coefficient map $\mb{x}^{(t)}$ for only one, or a small number, of training sample $\mb{s}^{(t)}$ at each iteration, the other coefficient maps $\{\mb{x}^{(\tau)}\}_{\tau=1}^{t-1}$ used in the $D$-update having been computed in previous iterations. Thus, these algorithms can be implemented for large sets of training data or dynamically generated data. Online $D$-update methods and the corresponding online dictionary learning algorithms can be divided into two classes:

\emph{Class I: first-order algorithms} \cite{wang2010locality, mairal2012task, aharon2008sparse} are inspired by Stochastic Gradient Descent (SGD), which only uses first-order information, the gradient of the loss function, to update the dictionary $D$.

\paragraph{Class II: second-order algorithms}
These algorithms are inspired by Recursive Least Squares (RLS) \cite{skretting-2010-recursive, eksioglu2014online}, Iterative Reweighted Least Squares (IRLS) \cite{lu2013online, wang2013online}, Kernel RLS \cite{gao2014online}, second-order Stochastic Approximation (SA) \cite{mairal2009online, szabo2011online, zhang2012online, slavakis2014online, zhang2015online, kasiviswanathan2012online}, etc. They use previous information $\{D^{(\tau)}\}_{\tau=0}^{t-1}, \{\mb{x}^{(\tau)}\}_{\tau=1}^{t}, \{\mb{s}^{(\tau)}\}_{\tau=1}^{t}$ to construct a surrogate function $\F^{(t)}(D)$ to estimate the true loss function of $D$ and then update $D$ by minimizing this surrogate function. These surrogate functions involve both first-order and second-order information, i.e. the gradient and Hessian of the loss function, respectively.

The most significant difference between the two classes is that Class I algorithms only need access to information from the current step, $t$, \ie $D^{(t-1)}, \mb{x}^{(t)},\mb{s}^{(t)}$, while Class II algorithms use the entire history up to step $t$, \ie $\{D^{(\tau)}\}_{\tau=0}^{t-1}$, $\{\mb{x}^{(\tau)}\}_{\tau=1}^{t}$, $\{\mb{s}^{(\tau)}\}_{\tau=1}^{t}$. However, as we discuss in~\sctn{surrogate} below, it is possible to store this information in aggregate form so that the memory requirements do not scale with $t$.

\subsection{Convolutional form}

\emph{Convolutional Sparse Coding (CSC)} \cite{lewicki-1999-coding, zeiler-2010-deconvolutional} \cite[Sec. II]{wohlberg-2016-efficient}, a highly structured sparse representation model, has recently attracted increasing attention for a variety of imaging inverse problems~\cite{gu-2015-convolutional, liu-2016-image, zhang-2016-convolutional, quan-2016-compressed, wohlberg-2016-convolutional2, zhang-2017-convolutional}. CSC aims to represent a given signal $\mb{s}\in\R^N$ as a sum of convolutions,
\begin{equation}
\label{eqn:csc}
\mb{s} \approx \mb{d}_1 \ast \mb{x}_1 + \ldots + \mb{d}_M \ast \mb{x}_M \;,
\end{equation}
where dictionary atoms $\{\mb{d}_m\}_{m=1}^M$ are linear filters and the representation $\{\mb{x}_m\}_{m=1}^M$ is a set of \emph{coefficient maps}, each map $\mb{x}_m$ having the same size $N$\ as the signal $\mb{s}$.  Since we implement the convolutions in the frequency domain for computational efficiency, it is convenient to adopt \emph{circular boundary conditions} for the convolution operation.

Given $\{\mb{d}_m\}$ and $\mb{s}$, the maps $\{\mb{x}_m\}$ can be obtained by solving the Convolutional Basis Pursuit DeNoising (CBPDN) $\ell_1$-minimization problem
\begin{align}
  \min_{\{\mb{x}_{m}\}} \frac{1}{2}
  \normsz[\Big]{\sum_{m=1}^M \mb{d}_m \ast \mb{x}_{m} - \mb{s}}_2^2 +
  \lambda  \sum_{m=1}^M \norm{\mb{x}_{m}}_1  \; .
\label{eq:cbpdn}
\end{align}
The corresponding dictionary learning problem is called \emph{Convolutional Dictionary Learning (CDL)}.  Specifically, given a set of $K$ training signals $\{\mb{s}_k\}_{k=1}^K$, CDL is implemented via minimization of the function
\begin{align}
\label{eqn:batch-cdl}
  \min_{\{\mb{d}_{m}\},\{\mb{x}_{k,m}\}} \frac{1}{2} \sum_{k=1}^K
  \normsz[\Big]{\sum_{m=1}^M \mb{d}_m \ast \mb{x}_{k,m} - \mb{s}_k}_2^2 +
  \lambda \sum_{k=1}^K \sum_{m=1}^M \norm{\mb{x}_{k,m}}_1   \nonumber \\
 \text{ subject to}  \norm{\mb{d}_{m}}_2 \leq 1,
\; \forall m \in \{1,\ldots,M\} \; ,
\end{align}
where the coefficient maps ${\mb{x}_{k,m}}$, $k \in \{1,\ldots,K\}$, $m \in \{1,\ldots,M\}$, represent $\mb{s}_k$, and the norm constraint avoids the scaling ambiguity between $\mb{d}_m$ and $\mb{x}_{k,m}$.

The \emph{masked CDL problem}~\cite{heide-2015-fast, wohlberg-2016-boundary}, which is able to learn the convolutional dictionary from training signals with missing samples, is a variant of (\ref{eqn:batch-cdl}),
\begin{align}
\label{eqn:batch-cdl-ms}
  \min_{\{\mb{d}_{m}\},\{\mb{x}_{k,m}\}} \frac{1}{2} \sum_{k=1}^K
  \normsz[\Big]{\sum_{m=1}^M W \odot \big( \mb{d}_m \ast \mb{x}_{k,m} - \mb{s}_k\big)}_2^2 +
  \lambda \sum_{k=1}^K \sum_{m=1}^M \norm{\mb{x}_{k,m}}_1   \nonumber \\
 \text{ subject to}  \norm{\mb{d}_{m}}_2 \leq 1,
\; \forall m \in \{1,\ldots,M\} \; ,
\end{align}
where the \emph{masking matrix} $W$ is usually a $\{0,1\}$-valued matrix that masks unknown or unreliable pixels, and operator $\odot$ denotes pointwise multiplication.

\paragraph{Complexity of batch CDL}
Most current CDL algorithms \cite{bristow-2013-fast, heide-2015-fast, gu-2015-convolutional, wohlberg-2016-efficient, sorel-2016-fast, wohlberg-2016-boundary, garcia-2017-subproblem, chun-2017-convolutional} are batch learning methods that alternatively minimize over $\{\mb{x}_{k,m}\}$ and $\{\mb{d}_{m}\}$, dealing with the entire training set at each iteration.
When $K$ is large, the $\mb{d}_m$ update subproblem is computationally expensive, e.g. the single step complexity and memory usage are both $\co(KMN\log(N))$ for one of the current state-of-the-art methods~\cite{sorel-2016-fast, garcia-2017-subproblem}. For example, for a medium-sized problem with $K=40, N=256\times256, M=64$, we have $KMN\log(N) \approx 10^9$, which is computationally very expensive.

\subsection{Contribution of this article}

The goal of the present work is to develop online convolutional dictionary learning methods for training data sets that are much larger than those that are presently feasible.  We develop online methods for CDL in two directions: first-order method and second-order method. The contribution of this article includes:
\begin{enumerate}
\item An efficient first-order online CDL method (Algorithm \ref{algo:sgd}), that provides a new framework with lower learning time and memory requirements than our previous state-of-the-art online method~\cite{liu-2017-online}.
\item An efficient second-order online CDL method (Algorithm
  \ref{algo:surro-splitting}) that improves the algorithm proposed
  in~\cite{liu-2017-online} and proves its convergence.
\item An online CDL method for masked images, which is, to the best of our knowledge, the first online algorithm able to learn dictionaries from partially masked training set.
\item An analysis of the forgetting factor\footnote{This technique is used in previous works \cite{skretting-2010-recursive, mairal-2010-online, szabo2011online, slavakis2014online}, but not theoretically analyzed.} used in Algorithm \ref{algo:surro-splitting}.
\item An analysis of the stopping condition in the $D$-update.
\item An analysis of the effects of circular boundary conditions on dictionary learning.
\end{enumerate}

\paragraph{Relationship with other works}
Recently, two other works on online CDL \cite{degraux-2017-online, wang2017online} have appeared. Both of them study second-order SA methods. They use the same framework as \cite{mairal2009online} but different methods to update $D$: \cite{degraux-2017-online} uses projected coordinate descent and \cite{wang2017online} uses the iterated Sherman-Morrison update~\cite{wohlberg-2016-efficient}. Our previous work \cite{liu-2017-online} uses frequency-domain FISTA to update $D$, with a \emph{forgetting factor} technique inspired by \cite{skretting-2010-recursive, mairal-2010-online} to correct the surrogate function, and uses ``region-sampling'' to reduce the memory cost.  In this paper, the second-order SA algorithm, Algorithm \ref{algo:surro-splitting}, improves the algorithm in~\cite{liu-2017-online} by introducing two additional techniques: an improved stopping condition for FISTA and image-splitting. The former technique greatly reduces the number of inner-loop iterations of the $D$-update. The latter, compared with ``region-sampling'', fully utilizes all the information in the training set. With these techniques, Algorithm \ref{algo:surro-splitting} converges faster than the algorithm in~\cite{liu-2017-online}.

The method in \cite{degraux-2017-online} is designed for a different problem than (\ref{eqn:batch-cdl}), and is therefore not directly comparable with our methods. The other recent paper on online CDL~\cite{wang2017online}, which appeared while we were completing this work, proposed an algorithm that uses the same framework as our previous work \cite{liu-2017-online}, and is therefore expected to offer similar performance to our initial method.

\section{Preliminaries}
\label{sec:pre}

Here we introduce our notation. The signal is denoted by $\mb{s}\in\R^N$, and the dictionaries by $\mb{d} = (\mb{d}_1 \; \mb{d}_2 \; \ldots\; \mb{d}_M)^T \in \R^{ML}$, where the dictionary kernels (or filters) are $\mb{d}_m \in \R^{L}$.  The coefficient maps are denoted by $\mb{x} = (\mb{x}_1 \; \mb{x}_2 \; \ldots\; \mb{x}_M)^T \in \R^{MN}$, where $\mb{x}_m\in\R^{N}$ is the coefficient map corresponding to $\mb{d}_m$.  In addition to the \emph{vector form}, $\mb{x}$, of the coefficient maps, we define an \emph{operator form} $X$. First we define a linear operator $X_m$ on $\mb{d}_m$ such that $X_{m}\mb{d}_m=\mb{d}_m \ast \mb{x}_{m}$ and let $X \triangleq \big(X_{1}~X_{2}\cdots X_{M}\big)$. Then, we have
\begin{equation}\label{eqn:operatorX}
 X \mb{d} \triangleq \sum_{m=1}^MX_m\mb{d}_m = \sum_{m=1}^M \mb{d}_m \ast \mb{x}_{m} \approx \mb{s} \;.
\end{equation}
Hence, $X : \R^{ML}\to\R^{N}$, a linear operator defined from the dictionary space to the signal space, is the operator form of $\mb{x}$.

\subsection{Problem settings}
\label{sec:settings}

Now we reformulate (\ref{eqn:batch-cdl}) into a more general form. (The masked problem (\ref{eqn:batch-cdl-ms}) will be discussed in Section \ref{sec:msk}.) Usually, the signal is sampled from a large training set, but we consider the training signal $\mb{s}$ as a random variable following the distribution $\mb{s} \sim P_S (\mb{s})$.  Our goal is to optimize the dictionary $\mb{d}$.  Given $\mb{s}$, the loss function $l$ to evaluate $\mb{d},\mb{x}$ is defined as
\begin{equation}
\label{eqn:basic_l}
l(\mb{d},\mb{x}; \mb{s}) = (1/2) \norm{X \mb{d} - \mb{s}}_2^2 + \lambda \norm{\mb{x}}_1 \;.
\end{equation}
Given $\mb{s}$, the loss function $f$ to evaluate $\mb{d}$ and the corresponding minimizer are respectively,
\begin{equation}
f(\mb{d}; \mb{s}) \triangleq \min_{\mb{x}} l(\mb{d},\mb{x};\mb{s})
 \quad \text{and} \quad
\mb{x}^*(\mb{d};\mb{s}) \triangleq \argmin_{\mb{x}}  l(\mb{d},\mb{x};\mb{s}) \;. \label{eqn:cbpdn}
\end{equation}
A general CDL problem can be formulated as
\begin{equation}
\label{eqn:cdl}
\min_{\mb{d}\in \text{C}} \E_{\mb{s}}[ f(\mb{d}; \mb{s})]\;,
\end{equation}
where $\text{C}$ is the constraint set of
$\text{C}=\{ \mb{d} \;| \; \norm{\mb{d}_m}^2\leq1, \forall m \}$.

\subsection{Two online frameworks}
\label{sec:framework}

Now we consider the CDL problem (\ref{eqn:cdl}) when the training signals $\mb{s}^{(1)}, \mb{s}^{(2)}, \cdots, \mb{s}^{(t)},\cdots$ arrive in a streaming fashion. Inspired by online methods for standard dictionary learning problems,
we propose two online frameworks for CDL problem (\ref{eqn:cdl}). One is a first order method based on Projected Stochastic Gradient Descent (SGD)~\cite{wang2010locality, mairal2012task, aharon2008sparse}:
\begin{equation}
\label{eqn:psgd}
\mb{d}^{(t)} = \text{Proj}_{\text{C}}\Big(\mb{d}^{(t-1)} - \eta^{(t)}\nabla f\big(\mb{d}^{(t-1)}; \mb{s}^{(t)}\big) \Big) \;.
\end{equation}

The other is a second order method, which is inspired by least squares estimator for dictionary learning~\cite{mairal2009online, skretting-2010-recursive, szabo2011online, zhang2012online, slavakis2014online, zhang2015online, kasiviswanathan2012online}.  A naive least squares estimator can be written as
\[
\mb{d}^{(t)} = \argmin_{\mb{d}\in\text{C}}\Big\{ \underbrace{\min_{\mb{x}}\ell(\mb{d},\mb{x},\mb{s}^{(1)}) + \cdots + \min_{\mb{x}}\ell(\mb{d},\mb{x},\mb{s}^{(t)})}_{\text{Objective function on training samples $F^{(t)}(\mb{d})$}} \Big\} \;.
\]
This is not practical because the inner minimizer of $\mb{x}$ depends on $\mb{d}$, which is unknown.
To solve this problem, we can fix $\mb{d}$ when we minimize over $\mb{x}$, i.e.
\begin{subequations}
\begin{align}
\mb{x}^{(t)} =& \argmin_{\mb{x}} \ell(\mb{d}^{(t-1)},\mb{x};\mb{s}^{(t)}).\label{eqn:cbpdn_t}\\
\mb{d}^{(t)} =& \argmin_{\mb{d}\in\text{C}} \Big\{ \underbrace{\ell(\mb{d},\mb{x}^{(1)},\mb{s}^{(1)}) + \cdots + \ell(\mb{d},\mb{x}^{(t)},\mb{s}^{(t)})}_{\text{Surrogate function } \F^{(t)}(\mb{d})} \Big\} \;.
\end{align}
\label{eqn:surro}
\end{subequations}
Direct application of these methods to the CDL problem is very computationally expensive, but we propose a number of techniques to reduce the time and memory usage.  The details are discussed in Sections \ref{sec:sgd} and \ref{sec:surrogate} respectively.

\subsection{Techniques to calculate operator $X$}
\label{sec:freq}

Before introducing our algorithms for (\ref{eqn:cdl}), we consider a basic problem and two computational techniques that are used in this section as well as in Sections \ref{sec:sgd} and \ref{sec:surrogate}.

With $\mb{s}$ and $\mb{x}$ fixed, the basic problem is
\begin{equation}
\label{eqn:basic}
\min_{\mb{d}\in\R^{ML}} l(\mb{d},\mb{x};\mb{s}) + \iota_{\text{C}}(\mb{d}) \;,
\end{equation}
where $\iota_{\mathrm{C}}(\cdot)$ is the indicator function\footnote{The indicator function is defined as: $\iota_{\mathrm{C}}(\mb{d})=\begin{dcases} 0,& \text{if } \mb{d} \in \text{C}\\     +\infty,              & \text{otherwise}\end{dcases}$.} of set $\mathrm{C}$.
To solve this problem we can apply projected gradient descent (GD)
\cite{bertsekas1999nonlinear}
\begin{equation}
\label{eqn:basic_gd}
\mb{d}^{(t)} = \text{Proj}_{\text{C}}\Big(\mb{d}^{(t-1)} - \eta^{(t)}X^T \big(X \mb{d}^{(t-1)} - \mb{s}\big) \Big)\;,
\end{equation}
where $(t)$ is the iteration index and $ X^T \big(X \mb{d} - \mb{s}\big)$ is the gradient of $l$ with respect to $\mb{d}$.
Since $X$ is a linear operator from $\R^{ML}$ to $\R^{N}$, the cost of directly computing (\ref{eqn:basic_gd}) is $\co(NM
L)$.
However, we can exploit the sparsity or the structure of operator $X$ to yield a more efficient computation that greatly reduces the time complexity.

\subsubsection{Computing with sparsity property}

The first option is to utilize the sparsity of $X$. Specifically, $X$ is saved as a triple array $(i,j,v)$, which records the indices $(i,j)$ and values $v$ of the non-zero elements of $X$, so that only the nonzero entries in $X$ contribute to the computational time. This triple array is commonly referred as a \emph{coordinate list} and is a standard way of representing a sparse matrix.

Let us compute the non-zero entries of operator $X$. The operator form $X_m$ of the $N$-dimensional vectors $\mb{x}_m = ((x_m)_1, \cdots, (x_m)_N)^T$ can be written as
\[X_m =
\begin{bmatrix*}[l]
    (x_m)_1       & (x_m)_N & (x_m)_{N-1} & \dots & (x_m)_{N-L+2} \\
    (x_m)_2      & (x_m)_1 & (x_m)_N & \dots & (x_m)_{N-L+3}  \\
(x_m)_3 &               (x_m)_2 & (x_m)_1 & \dots & (x_m)_{N-L+4}  \\
  \vdots & \vdots & \vdots & \ddots & \vdots \\
    (x_m)_N      & (x_m)_{N-1} & (x_m)_{N-2} & \dots & (x_m)_{N-L+1}
\end{bmatrix*},\]
where each column is a circular shift of $\mb{x}_m$ and $L$ is the dimension of each dictionary kernel. Thus, the density of $\mb{x}_m$ and $X_m$ are the same. Assuming the density of vector $\mb{x}$ is $\rho$, the number of nonzero entries of operator $X$ is $NML\rho$, giving a single step complexity of $\co(NML\rho)$ for computing (\ref{eqn:basic_gd}).

\subsubsection{Computing in the frequency domain} Another option is to utilize the structure of $X$.
It is well known that convolving two signals of the same size corresponds to the pointwise multiplication of their frequency representations. Our method below takes advantage of this property.  First, we zero-pad each $\mb{d}_m$ from $\R^L$ to $\R^N$ to match the size of $\mb{s}$. Then the basic problem can be written as
\vspace{-1mm}
\begin{equation}
\label{eqn:basic_zeropad}\min_{\mb{d}\in\R^{MN}} l(\mb{d},\mb{x};\mb{s}) + \iota_{\text{C}_\text{PN}}(\mb{d}) \;,
\vspace{-1mm}
\end{equation}
where the set $\text{C}_\text{PN}$ is defined as
\vspace{-1mm}
\begin{equation}
\label{eqn:cpn}
\text{C}_\text{PN} \triangleq \{ \mb{d}_m \in \R^N: (I-P)\mb{d}_m=0, \norm{\mb{d}_m}^2\leq1 \} \;.
\vspace{-1mm}
\end{equation}
Operator $P$ preserves the desired support of $\mb{d}_m$ and masks the remaining part to zeros.  Projected GD (\ref{eqn:basic_gd}) has an equivalent form:
\vspace{-1mm}
\begin{equation}
\label{eqn:basic_gd_spatial}
\mb{d}^{(t)} = \text{Proj}_{\text{C}_\text{PN}}\Big(\mb{d}^{(t-1)} - \eta^{(t)}\frac{\partial l}{\partial \mb{d}}(\mb{d}^{(t-1)},\mb{x}; \mb{s}) \Big) \;.
\vspace{-1mm}
\end{equation}

Then, using the Plancherel formula, we can write the loss function\footnote{We ignore the term $\lambda\norm{\mb{x}}_1$ in $l$ here because $\mb{x}$ is fixed in this problem.} $l$ as
\vspace{-1mm}
\begin{equation}
\label{eqn:basic_l_freq}
l(\mb{d},\mb{x};\mb{s}) = \underbrace{ \normsz[\Big]{\sum_m \mb{d}_m \ast \mb{x}_{m} - \mb{s}}_2^2}_{\norm{X\mb{d} - \mb{s}}^2} = \underbrace{\normsz[\Big]{\sum_m \hat{\mb{d}}_m \odot \hat{\mb{x}}_{m} - \hat{\mb{s}}}_2^2}_{\norm{\hat{X}\hat{\mb{d}} - \hat{\mb{s}}}^2} \;,
\vspace{-1mm}
\end{equation}
where $\hat{\cdot}$ denotes the corresponding quantity in the frequency domain and $\odot$ means pointwise multiplication. Therefore, we have $\hat{\mb{d}} \in \C^{MN}$, and $\hat{X}=\big(\hat{X}_{1}~\hat{X}_{2}\cdots \hat{X}_{M}\big)$ is a linear operator. Define the loss function in the frequency domain
\vspace{-1mm}
\begin{equation}
\label{eqn:basic_l_freq_def}
\hat{l}(\hat{\mb{d}},\hat{\mb{x}};\hat{\mb{s}})  = (1/2) \normsz[\big]{\hat{X}\hat{\mb{d}} - \hat{\mb{s}}}^2 \;,
\vspace{-1mm}
\end{equation}
which is a real valued function defined in the complex domain.
The Cauchy-Riemann condition \cite{ahlfors1953complex} implies that (\ref{eqn:basic_l_freq_def}) is not differentiable unless it is constant. However, the \emph{conjugate cogradient}\footnote{The conjugate cogradient of function $f(x):\C^n\to\R$ is defined as: $\frac{\partial f}{\partial \Re(x)} + i \frac{\partial f}{\partial \Im(x)}$, where $\Re(x)$, $I\Im(x)$ are the real part and imaginary part of $x$. The derivation of (\ref{eqn:basic_l_freq_def}) is given in Appendix~\ref{app:derivation}.}~\cite{sorber2012unconstrained}
\begin{equation}
\label{eqn:basic_gradient_freq}
\frac{\partial \hat{l}}{\partial \hat{\mb{d}}}(\hat{\mb{d}},\hat{\mb{x}};\hat{\mb{s}}) \triangleq \hat{X}^H(\hat{X} \hat{\mb{d}} - \hat{\mb{s}}) \;.
\end{equation}
exists and can be used for minimizing (\ref{eqn:basic_l_freq_def}) by gradient descent.

Since each item $\hat{X}_m$ in $\hat{X}$ is diagonal, the gradient is easy to compute, with a complexity of $\co(NM)$, instead of $\co(NML)$. Based on (\ref{eqn:basic_gradient_freq}), we have the following modified gradient descent:
\begin{equation}
\label{eqn:basic_gd_freq}
\mb{d}^{(t)} = \text{Proj}_{\text{C}_\text{PN}}\bigg(\text{IFFT}\Big(\hat{\mb{d}}^{(t-1)} - \eta^{(t)}\frac{\partial \hat{l}}{\partial \hat{\mb{d}}}\big(\hat{\mb{d}}^{(t-1)},\hat{\mb{x}}; \hat{\mb{s}}\big) \Big)\bigg) \;.
\end{equation}
To compute (\ref{eqn:basic_gd_freq}), we transform $\mb{d}^{(t)}$ into its frequency domain counterpart $\hat{\mb{d}}^{(t)}$, perform gradient descent in the frequency domain, return to the spatial domain, and project the result onto the set $\text{C}_\text{PN}$.

In our modified method (\ref{eqn:basic_gd_freq}), the iterate $\mb{d}^{(t)}$ is transformed between the frequency and spatial domains because the gradient is cheaper to compute in the frequency domain, but projection is cheaper to compute in the spatial domain.

\textbf{Equivalence of (\ref{eqn:basic_gd_spatial}) and (\ref{eqn:basic_gd_freq}).} We can prove
\begin{equation}
\label{eqn:equal_freq_spatial}
\hat{X}^H(\hat{X} \hat{\mb{d}} - \hat{\mb{s}}) = \text{FFT}\big(X^T(X\mb{d}-\mb{s})\big)\; , \quad \forall \mb{x},\mb{d},\mb{s} \;,
\end{equation}
which means that the conjugate cogradient of $\hat{l}$ is equivalent to the gradient of $l$. Thus, modified GD (\ref{eqn:basic_gd_freq}) coincides with standard GD (\ref{eqn:basic_gd_spatial}) using conjugate cogradient.

A proof of (\ref{eqn:equal_freq_spatial}) given in Appendix \ref{app:equal}. A similar result is also given in \cite{rippel2015spectral} under the name ``conjugate symmetry''.

\section{First-order method: Algorithm \ref{algo:sgd}}
\label{sec:sgd}

Recall the Projected SGD step (\ref{eqn:psgd})
\[
\mb{d}^{(t)} = \text{Proj}_{\text{C}}\Big(\mb{d}^{(t-1)} - \eta^{(t)}\nabla f(\mb{d}^{(t-1)}; \mb{s}^{(t)}) \Big) \;,
\]
where parameter $\eta^{(t)}$ is the \emph{step size}\footnote{Some authors refer to it as the \emph{learning rate}.}. Given the definition of $f$ in (\ref{eqn:cbpdn}), $\nabla f(\mb{d}^{(t-1)}; \mb{s}^{(t)})$ is the partial derivative with respect to $\mb{d}$ at the optimal $\mb{x}$ \cite{mairal-2010-online,danskin1966theory}, i.e. $\nabla f(\mb{d};\mb{s}) = \frac{\partial l}{\partial \mb{d}} (\mb{d},\mb{x}^*(\mb{d},\mb{s}); \mb{s})$, where $\mb{x}^*$ is defined by (\ref{eqn:cbpdn}).

Thus, to compute the gradient $\nabla f(\mb{d}^{(t-1)}; \mb{s}^{(t)})$, we should first compute the coefficient maps $\mb{x}^{(t)}$ of the $t^{\mathrm{th}}$ training signal $\mb{s}^{(t)}$ with dictionary $\mb{d}^{(t-1)}$, which is given by (\ref{eqn:cbpdn_t}).  Then we can compute the gradient as
\[\nabla f\big(\mb{d}^{(t-1)}; \mb{s}^{(t)}\big) =\frac{\partial l}{\partial \mb{d}} \big(\mb{d}^{(t-1)},\mb{x}^{(t)}; \mb{s}^{(t)} \big) = \big(X^{(t)}\big)^T\Big(X^{(t)}\mb{d}^{(t-1)}-\mb{s}^{(t)}\Big) \;.
\]

Based on the discussion in Section \ref{sec:freq}, we can perform gradient descent either in the spatial-domain or the frequency-domain. In the frequency domain, the conjugate cogradient of $\nabla \hat{f}$ is:
\[
\nabla \hat{f}(\hat{\mb{d}}^{(t-1)}; \hat{\mb{s}}^{(t)}) = \frac{\partial \hat{l}}{\partial \hat{\mb{d}}} \big(\hat{\mb{d}}^{(t-1)},\hat{\mb{x}}^{(t)}; \hat{\mb{s}}^{(t)} \big) = \big(\hat{X}^{(t)}\big)^H \Big(\hat{X}^{(t)}\hat{\mb{d}}^{(t-1)}-\hat{\mb{s}}^{(t)}\Big) \;.
\]
The full algorithm is summarized in Algorithm \ref{algo:sgd}.

\begin{algorithm2e}[h]
\SetKwInOut{initial}{Initialize}
\initial{Initialize $\mb{d}^{(0)}$ with a random dictionary.}
\For{$t=1,\cdots, T$} {
Sample a signal $\mb{s}^{(t)}$.\\
Solve convolutional sparse coding problem (\ref{eqn:cbpdn_t}) to obtain $\mb{x}^{(t)}$.\\
\uIf{Option I} {Update dictionary in the spatial-domain with sparse matrix $X^{(t)}$:
\vspace{-2mm}
\[
\mb{d}^{(t)} = \text{Proj}_{\text{C}}\Big(\mb{d}^{(t-1)} - \eta^{(t)}\big(X^{(t)}\big)^T\big(X^{(t)}\mb{d}^{(t-1)}-\mb{s}^{(t)}\big)\Big)
\vspace{-2mm}
\]
}
\ElseIf{Option II }{Update dictionary in the frequency-domain:
\vspace{-2mm}
\[
\begin{aligned}
\hat{\mb{x}}^{(t)} = \;& \text{FFT}(\mb{x}^{(t)})\\
\mb{d}^{(t)} = \;& \text{Proj}_{\text{C}_\text{PN}}\bigg(\text{IFFT}\Big(\hat{\mb{d}}^{(t-1)} - \eta^{(t)}\big(\hat{X}^{(t)}\big)^H\big(\hat{X}^{(t)}\hat{\mb{d}}^{(t-1)}-\hat{\mb{s}}^{(t)}\big)\Big)\bigg)
\end{aligned}
\]
\vspace{-5mm}}

}
\KwOut{$\mb{d}^{(T)}$}
\caption{Online Convolutional Dictionary Learning (Modified SGD)}\label{algo:sgd}
\end{algorithm2e}

\textbf{Complexity analysis of Algorithm \ref{algo:sgd}.} We list the single-step complexity and memory usage of different options in Table \ref{tab:summary}. Both the frequency-domain update and sparse matrix technique reduce single-step complexities. The comparison between these two computational techniques depends on the sparsity of $X^{(t)}$ and the dictionary kernel size $L$.  In Section \ref{sec:sgd_rslt}, we will numerically compare these methods.

\begin{table}[t]
\centering
\begin{tabular}{|b{0.28\linewidth}|b{0.45\linewidth}|b{0.18\linewidth}|}
\hline
Scheme & Single step complexity & Memory usage\\
\hline \hline
Spatial (dense matrix) & $T_{\text{CBPDN}} + \co(NML)$ & $\co(NML)$\\
\hline
Spatial (sparse matrix) & $T_{\text{CBPDN}} + \co(NML\rho)$ & $\co(NML\rho)$\\
\hline
Frequency update & $T_{\text{CBPDN}} + \co(NM\log(N)) + \co(NM)$ & $\co(MN)$\\
\hline
\end{tabular}
\caption{Single step complexity and memory usage of Algorithm \ref{algo:sgd}. $N$: signal dimension; $M$: number of dictionary kernels; $L$: size of each kernel; $\rho$: average density of the coefficient maps.
}
\label{tab:summary}
\end{table}

\textbf{Convergence of Algorithm \ref{algo:sgd}.} Algorithm \ref{algo:sgd}, by (\ref{eqn:equal_freq_spatial}), is equivalent to the standard projected SGD. Thus, by properly choosing step sizes $\eta^{(t)}$, Algorithm \ref{algo:sgd} converges to a stationary point \cite{ghadimi2013stochastic}. A diminishing step size rule $\eta^{(t)} = a / (b+t)$ is used in other dictionary learning works \cite{aharon2008sparse,mairal2009online}. The convergence performance with different step sizes are numerically tested in Section \ref{sec:sgd_rslt}.

\section{Second-order method: Algorithm \ref{algo:surro-splitting}}
\label{sec:surrogate}

In this section, we first introduce some details of directly applying second order stochastic approximation method (\ref{eqn:surro}) to CDL problems, then we discuss some issues and our resolutions.

Aggregating the true loss function $f(\mb{d};\mb{s}^{(t)})$ on the $t^{\text{th}}$ sample $\mb{s}^{(t)}$, the objective function on the first $t$ training samples is
\begin{equation}
\label{eqn:F_standard}
F^{(t)}(\mb{d}) = \frac{1}{t}\Big(\sum_{\tau=1}^t f\big(\mb{d};\mb{s}^{(\tau)}\big)\Big) \approx F(\mb{d}) = \E_{\mb{s}}[ f(\mb{d}; \mb{s})] \;.
\end{equation}

The central limit theorem tells us that $F^{(t)}\to F$ as $t\to\infty$. However, as discussed in Section \ref{sec:framework}, $F^{(t)}$ is not computationally tractable. To update $\mb{d}$ efficiently, we introduce the \emph{surrogate function} $\F^{(t)}$ of $F^{(t)}$.  Given $\mb{s}^{(t)}$, $\mb{x}^{(t)}$ is computed by CBPDN (\ref{eqn:cbpdn}) using the latest dictionary $\mb{d}^{(t-1)}$, then a surrogate of $f(\mb{d};\mb{s}^{(t)})$ is given as
\begin{equation}
\label{eqn:loss_surrogate}
\mb{x}^{(t)} = \argmin_{\mb{x}} \ell(\mb{d}^{(t-1)},\mb{x};\mb{s}^{(t)}), \qquad f^{(t)}(\mb{d})\triangleq l\Big(\mb{d}, \mb{x}^{(t)}; \mb{s}^{(t)}\Big) \;,
\end{equation}
The surrogate function of $F^{(t)}$ is defined as
\begin{equation}
\label{eqn:surrogate}
\F^{(t)}(\mb{d}) = \frac{1}{t}\Big(f^{(1)}(\mb{d}) + \cdots + f^{(t)}(\mb{d}) \Big)\;.
\end{equation}
Then, at the $t^{\text{th}}$ step, the dictionary is updated as
\begin{equation}
\label{eqn:d-update}
\mb{d}^{(t)} = \argmin_{\mb{d}\in\R^{ML}} \F^{(t)}(\mb{d}) + \iota_{\mathrm{C}}(\mb{d})  \;.
\end{equation}

\textbf{Solving subproblem (\ref{eqn:d-update}).}\label{para:fista} To solve (\ref{eqn:d-update}), we apply Fast Iterative Shrinkage-Thresholding
(FISTA)~\cite{beck2009fast}, which needs to compute a gradient at each step. The gradient for the surrogate function can be computed as
\[
\nabla \F^{(t)}(\mb{d}) = \frac{1}{t} \Big(\sum_{\tau=1}^t  \big(X^{(\tau)}\big)^T X^{(\tau)}\Big) \mb{d} - \frac{1}{t}\Big(\sum_{\tau=1}^t  \big(X^{(\tau)}\big)^T \mb{s}^{(\tau)}\Big) \;.
\]
We cannot follow this formula directly since the cost increases linearly in $t$.
Instead we perform the recursive updates
\begin{equation}
\label{eqn:A}
A^{(t)} = A^{(t-1)} + (X^{(t)})^T X^{(t)} \;, \quad \mb{b}^{(t)} = \mb{b}^{(t-1)} + (X^{(t)})^T \mb{s}^{(t)} \;,
\end{equation}
where $(X^{(t)})^T X^{(t)}$ is the Hessian matrix of $f^{(t)}$. These updates, which have a constant cost per step, yield $\nabla \F^{(t)}(\mb{d}) = (A^{(t)}\mb{d} - \mb{b}^{(t)})/t$. The matrix $A^{(t)}/t$, the Hessian matrix of the surrogate function $\F^{(t)}$, accumulates the Hessian matrices of all the past loss functions. This is why we call this method the \emph{second-order stochastic approximation} method.

\textbf{Practical issues}
\begin{itemize}
\item Inaccurate loss function: The surrogate function $\F^{(t)}$ involves old loss functions $f^{(1)}, f^{(2)}, \cdots$, which contain old information $\mb{x}^{(1)}, \mb{x}^{(2)},\cdots$. For example, $\mb{x}^{(1)}$ is computed using $\mb{d}^{(0)}$ (cf.  (\ref{eqn:loss_surrogate})).
\item Large single step complexity and memory usage: handling a whole image $\mb{s}^{(t)}$ at each time is still a large-scale problem.
\item FISTA is slow at solving subproblem (\ref{eqn:d-update}): FISTA takes many steps to reach a sufficient accuracy.
\end{itemize}
To address these points, four modifications are given in this section\footnote{Improvements I and II have been addressed in our previous work \cite{liu-2017-online}. In the present article, we include their theoretical analysis and introduce the new enhancement of the stopping criterion (Improvement III).}.

\subsection{Improvement I: forgetting factor}
\label{para:modify}

At time $t$, the dictionary is the result of an accumulation of past coefficient maps $\mb{x}^{(\tau)}_m$, $\tau < t$, which were computed with the then-available dictionaries. A way to balance accumulated past contributions and the information provided by the new training samples is to compute a weighted combination of these contributions~\cite{skretting-2010-recursive, mairal-2010-online, szabo2011online, slavakis2014online}. This combination gives more weight to more recent updates since those are the result of a more extensively trained dictionary. Specifically, we consider the following weighted (or modified) surrogate function:
\begin{equation}
\label{eqn:surrogate_mod_explicit}
\F^{(t)}_{\mathrm{mod}} (\mb{d})= \frac{1}{ \Lambda^{(t)}} \sum_{\tau=1}^t  (\tau/t)^p f^{(\tau)}(\mb{d})\;, \quad \Lambda^{(t)} = \sum_{\tau=1}^t (\tau/t)^p \;.
\end{equation}
This function can be written in recursive form as
\begin{align}
\Lambda^{(t)} =& \alpha^{(t)} \Lambda^{(t-1)} + 1 \label{eqn:Lambda} \;,\\
\Lambda^{(t)}\F^{(t)}_{\mathrm{mod}} (\mb{d}) =& \alpha^{(t)} \Lambda^{(t-1)} \F^{(t-1)}_{\mathrm{mod}}(\mb{d}) + f^{(t)}(\mb{d}) \label{eqn:surrogate_mod} \;.
\end{align}
Here $\alpha^{(t)}\in (0,1)$ is a \emph{forgetting factor}, which has its own time evolution:
\vspace{-1mm}
\begin{equation}
\label{eqn:forget}
\alpha^{(t)} = (1-1/t)^p
\vspace{-1mm}
\end{equation}
regulated by the \emph{forgetting exponent} $p>0$. As $t$ increases, the factor $\alpha^{(t)}$ increases ($\alpha^{(t)} \to 1$ as $t \to \infty$), reflecting the increasing accuracy of the past information as the training progresses.
The dictionary update  (\ref{eqn:d-update}) is  modified correspondingly to
\vspace{-2mm}
\begin{equation}
\label{eqn:d-update-mod}
\mb{d}^{(t)} = \argmin_{\mb{d}\in\R^{ML}} \F^{(t)}_{\mathrm{mod}}(\mb{d}) + \iota_{\mathrm{C}}(\mb{d}) \;.
\vspace{-2mm}
\end{equation}

This technique has been used in some previous dictionary learning works, as we mentioned before, but was not theoretically analyzed. In this paper, we prove in Propositions \ref{lemma:weight_clt} and \ref{lemma:weight} that $F^{(t)}_{\mathrm{mod}} \to F$ as $t\to\infty$, where $F^{(t)}_{\mathrm{mod}}$ is a weighted approximation of $F$:
\begin{equation}
\label{eqn:F}
F^{(t)}_{\mathrm{mod}}(\mb{d}) = \frac{1}{\Lambda^{(t)}}\Big(\sum_{\tau=1}^t (\tau/t)^p f(\mb{d};\mb{s}^{(\tau)})\Big) \;.
\end{equation}
Moreover, in Theorem \ref{prop:main}, $\F^{(t)}_{\mathrm{mod}}$, the surrogate of $F^{(t)}_{\mathrm{mod}}$, is also proved to be convergent on the current dictionary, \ie $\F^{(t)}_{\mathrm{mod}}(\mb{d}^{(t)}) - F^{(t)}_{\mathrm{mod}}(\mb{d}^{(t)}) \to 0$.

\textbf{Effect of the forgetting exponent $p$.} A small $p$ tends to lead to a stable algorithm since all the training signals are given nearly equal weights and $F^{(t)}_{\mathrm{mod}}$ is a stochastic approximation of $F$ with small variance. Propositions \ref{lemma:weight_clt} and \ref{lemma:weight} give theoretical explanations of this phenomenon.
However, a small $p$ leads to an inaccurate surrogate loss function $\F^{(t)}_{\mathrm{mod}}$ since it gives large weights to old information. In the extreme case, as $p\to0$, the modified surrogate function (\ref{eqn:surrogate_mod_explicit}) reduces to the standard one (\ref{eqn:surrogate}). Section \ref{sec:compare_p} reports the related numerical results.

\subsection{Improvement II: image-splitting}
\label{sec:region-sample}

Both the single-step complexity and memory usage are related to the signal dimension $N$. For a typical imaging problem, $N = 256\times256$ or greater, which is large. To reduce the complexities, we use small regions \footnote{In our previous work \cite{liu-2017-online}, we sample some small regions from the whole signals in the limited memory algorithm, which performs worse than the algorithm training with the whole signals. We claimed that the performance sacrifice is caused by the circular boundary condition. In fact, this is caused by the sampling. In that paper, we sample small regions with random center position and fixed size. If we sample small regions in this way, some parts of the image are not sampled, but some are sampled several times.
Consequently, in the present paper, we propose the ``image-splitting'' technique in Algorithm \ref{algo:surro-splitting}, which avoids this issue. It only shows worse performance when the splitting size is smaller than a threshold, which is actually caused by the boundary condition.}
instead of the whole signal. Specifically, as illustrated in~\fig{regionsample}, we split a signal $\mb{s}^{(t)} \in N$ into small regions ${\mb{s}^{(t)}_{\text{split},1}, \mb{s}^{(t)}_{\text{split},2}, ... } \in \tilde{N}$, with $\tilde{N} < N$, and treat them as if they were distinct signals.  In this way, the training signal sequence becomes
\vspace{-2mm}
\[
\{\mb{s}_{\text{split}}\} \triangleq \{ \mb{s}^{(1)}_{\text{split},1}, \cdots,\mb{s}^{(1)}_{\text{split},n}, \mb{s}^{(2)}_{\text{split},1}, \cdots, \mb{s}^{(2)}_{\text{split},n}, \cdots\} \;.
\vspace{-1mm}
\]

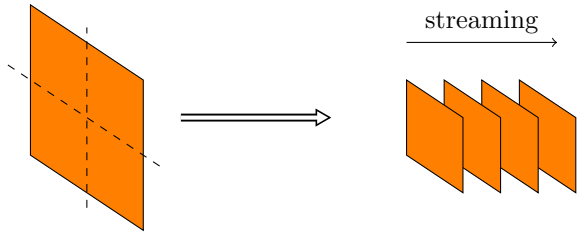
\begin{figure}[t]
\centering
\tikzstyle{vecArrow} = [thick, decoration={markings,mark=at position
   1 with {\arrow[semithick]{open triangle 60}}},
   double distance=1.4pt, shorten >= 5.5pt,
   preaction = {decorate},
   postaction = {draw,line width=1.4pt, white,shorten >= 4.5pt}]

\begin{tikzpicture}






\shade[top color = orange, bottom color = orange] ((0,0) -- (0,2) -- (1.5, 1) -- (1.5,-1) --cycle;
\draw[black, thin] (0,0) -- (0,2) -- (1.5, 1) -- (1.5,-1) --cycle;

\draw[black, dashed] (-0.3,1.2) -- (1.8,-0.2);
\draw[black, dashed] (0.75,-0.7) -- (0.75,1.7);

\shade[top color = orange, bottom color = orange] (6.5,0) -- (6.5,1) -- (7.25, 0.5) -- (7.25,-0.5) --cycle;
\draw[black,thin] (6.5,0) -- (6.5,1) -- (7.25, 0.5) -- (7.25,-0.5) --cycle;

\shade[top color = orange, bottom color = orange] (6,0) -- (6,1) -- (6.75, 0.5) -- (6.75,-0.5) --cycle;
\draw[black,thin] (6,0) -- (6,1) -- (6.75, 0.5) -- (6.75,-0.5) --cycle;

\shade[top color = orange, bottom color = orange] (5.5,0) -- (5.5,1) -- (6.25, 0.5) -- (6.25,-0.5) --cycle;
\draw[black,thin] (5.5,0) -- (5.5,1) -- (6.25, 0.5) -- (6.25,-0.5) --cycle;

\shade[top color = orange, bottom color = orange] (5,0) -- (5,1) -- (5.75, 0.5) -- (5.75,-0.5) --cycle;
\draw[black,thin] (5,0) -- (5,1) -- (5.75, 0.5) -- (5.75,-0.5) --cycle;

\draw[->] (5,1.5) -- (7,1.5) node[above, midway] {streaming};

   \draw[vecArrow] (2,0.5) to (4, 0.5);

\end{tikzpicture}
\caption{An example of image splitting: $N=256\times256 \to \tilde{N}=128\times128$. }
\label{fig:regionsample}
\end{figure}

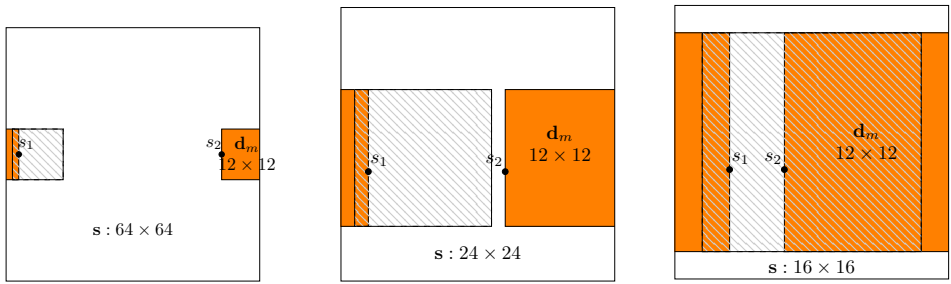
\begin{figure}[t]
    \centering
\subfigure[When the signal size $64\times64$ is much larger than the kernel size $12\times12$, pixels $s_1,s_2$ in the same filter are far from each other. Thus, they do not interact with each other.]{
      \resizebox{0.3\textwidth}{!}{ \definecolor{mygray}{gray}{0.8}
\begin{tikzpicture}

\shade[top color = orange, bottom color = orange] (5,2) -- (4.25,2) -- (4.25, 3) -- (5,3) --cycle;
\shade[top color = orange, bottom color = orange] (0,2) -- (0.25,2) -- (0.25, 3) -- (0,3) --cycle;

\draw[black,thin] (0,0) -- (0,5) -- (5, 5) -- (5,0) --cycle;

\draw[black,thin] (5,2) -- (4.25,2) -- (4.25, 3) -- (5,3) ;

\draw[black,thin] (0,2) -- (0.25,2) -- (0.25, 3) -- (0,3) ;

\draw[dashed,thin] (0.125,2) -- (1.125,2) -- (1.125, 3) -- (0.125,3) --cycle;
\draw[pattern=north west lines, pattern color=mygray] (0.125,2) rectangle (1.125, 3);


\node[circle,draw=black, fill=black, inner sep=0pt,minimum size=3pt] (b) at (0.25,2.5) {};
\node[circle,draw=black, fill=black, inner sep=0pt,minimum size=3pt] (b) at (4.25,2.5) {};

\begin{scope}[ execute at begin node = $\displaystyle
                 , execute at end node   = $]
      \node at (2.5,1) {\mb{s}: 64 \times 64} ;
	  \node at (4.75,2.7) {\mb{d}_m} ;
	  \node at (4.75,2.3) {12 \times 12};
	  \node at (0.4,2.7) {s_1};
	  \node at (4.1,2.7) {s_2};
    \end{scope}
\end{tikzpicture}\label{fig:boundary1}}}
    \hfill
\subfigure[When the signal size $24\times24$ is twice the kernel size $12\times12$, $s_1,s_2$ still do not interact. It is the smallest signal size to avoid boundary artifacts.]{
      \resizebox{0.3\textwidth}{!}{ \definecolor{mygray}{gray}{0.8}
\begin{tikzpicture}

\shade[top color = orange, bottom color = orange] (5,1) -- (3,1) -- (3, 3.5) -- (5,3.5) --cycle;
\shade[top color = orange, bottom color = orange] (0,1) -- (0.5,1) -- (0.5, 3.5) -- (0,3.5) --cycle;

\draw[black,thin] (0,0) -- (0,5) -- (5, 5) -- (5,0) --cycle;

\draw[black,thin] (5,1) -- (3,1) -- (3, 3.5) -- (5,3.5);

\draw[black,thin] (0,1) -- (0.5,1) -- (0.5, 3.5) -- (0,3.5) ;

\draw[dashed,thin] (0.25,1) -- (2.75,1) -- (2.75, 3.5) -- (0.25,3.5) --cycle;
\draw[pattern=north west lines, pattern color=mygray] (0.25,1) rectangle (2.75, 3.5);

\node[circle,draw=black, fill=black, inner sep=0pt,minimum size=3pt] (b) at (0.5,2) {};
\node[circle,draw=black, fill=black, inner sep=0pt,minimum size=3pt] (b) at (3,2) {};

\begin{scope}[ execute at begin node = $\displaystyle
                 , execute at end node   = $]
      \node at (2.5,0.5) {\mb{s}: 24 \times 24} ;
	  \node at (4,2.7) {\mb{d}_m} ;
	  \node at (4,2.3) {12 \times 12};
	  \node at (0.7,2.2) {s_1};
	  \node at (2.8,2.2) {s_2};
    \end{scope}
\end{tikzpicture}\label{fig:boundary2}}}
    \hfill
\subfigure[When the signal size $16\times16$ is less than twice the kernel size $12\times12$, $s_1,s_2$ interact with one another. This leads to artifacts in practice.]{
      \resizebox{0.3\textwidth}{!}{ \definecolor{mygray}{gray}{0.8}
\begin{tikzpicture}

\shade[top color = orange, bottom color = orange] (5,0.5) -- (2,0.5) -- (2, 4.5) -- (5,4.5) --cycle;
\shade[top color = orange, bottom color = orange] (0,0.5) -- (1,0.5) -- (1, 4.5) -- (0,4.5) --cycle;

\draw[black,thin] (0,0) -- (0,5) -- (5, 5) -- (5,0) --cycle;

\draw[black,thin] (5,0.5) -- (2,0.5) -- (2, 4.5) -- (5,4.5) ;

\draw[black,thin] (0,0.5) -- (1,0.5) -- (1, 4.5) -- (0,4.5) ;

\draw[dashed,thin] (0.5,0.5) -- (4.5,0.5) -- (4.5, 4.5) -- (0.5,4.5) --cycle;
\draw[pattern=north west lines, pattern color=mygray] (0.5, 0.5) rectangle (4.5, 4.5);

\node[circle,draw=black, fill=black, inner sep=0pt,minimum size=3pt] (b) at (1,2) {};
\node[circle,draw=black, fill=black, inner sep=0pt,minimum size=3pt] (b) at (2,2) {};

\begin{scope}[ execute at begin node = $\displaystyle
                 , execute at end node   = $]
      \node at (2.5,0.2) {\mb{s}: 16 \times 16} ;
	  \node at (3.5, 2.7) {\mb{d}_m} ;
	  \node at (3.5, 2.3) {12 \times 12};
	  \node at (1.2,2.2) {s_1};
	  \node at (1.8,2.2) {s_2};
    \end{scope}
\end{tikzpicture}\label{fig:boundary3}}}
    \caption{An illustration of the boundary artifacts
with two-dimensional square signals and dictionary kernels.
}
 \label{fig:boundary}
\end{figure}

\textbf{Boundary issues.} The use of \emph{circular boundary conditions} for signals that are not periodic has the potential to introduce boundary artifacts in the representation, and therefore also in the learned dictionary~\cite{zeiler-2010-deconvolutional}. When the size of the training images is much larger than the kernels, there is some evidence that the effect on the learned dictionary is negligible ~\cite{bristow-2013-fast}, but it is reasonable to expect that these effects will become more pronounced for smaller training images, such as the regions we obtain when using a small splitting size $\tilde{N}$. The possibility of severe artifacts when the image size approaches the kernel size is illustrated in~\fig{boundary}.
In~\sctn{compare_n}, we study this effect and show that using a splitting size that is twice the kernel size
in each dimension
is sufficient to avoid artifacts, as expected from the argument illustrated in~\fig{boundary}.

\subsection{Improvement III: stopping FISTA early}
\label{para:inexact}

Another issue in surrogate function method is the stopping condition of FISTA.  A small fixed tolerance will result in too many inner-loop iterations for the initial steps. Another strategy, as used in SPAMS \cite{mairal2009online, jenatton2010proximal} is a fixed number of inner-loop iterations, but it does not have any theoretical convergence guarantee.

In this article, we propose a ``diminishing tolerance'' scheme in which subproblem (\ref{eqn:d-update-mod}) is solved \emph{inexactly}, but the online learning algorithm is still theoretically guaranteed to converge.
The stopping accuracy is increasing as $t$ increases.
Specifically, the stopping tolerance is decreased as $t$ increases. Moreover, with a warm start (using $\mb{d}^{(t-1)}$ as the initial solution for the $t^{\text{th}}$ step), the number of inner-loop iterations stays moderate as $t$ increases, which is validated by the results in \fig{fix_vs_tol}.

\textbf{Stopping metric.} We use the Fixed Point Residual (FPR) \cite{davis2016convergence}
\begin{equation}
\label{eq:fpr}
\fpr^{(t)}(\mb{g}) \triangleq \normsz[\Big]{ \mb{g} - \text{Proj}_{\text{C}}\big(\mb{g}-\eta\nabla \F_{\mathrm{mod}}^{(t)}(\mb{g})\big)} \;.
\end{equation}
for two reasons. One is its simplicity; if FISTA is used to solve (\ref{eqn:d-update-mod}), this metric can be computed directly as $\fpr^{(t)}(\mb{g}^{j}_{\text{aux}}) = \norm{\mb{g}^{j+1}-\mb{g}^j_{\text{aux}}}$. The other is that a small FPR implies a small distance to the exact solution of the subproblem, as shown in Proposition \ref{lemma:fista} below.

\textbf{Stopping condition.} In this paper, we consider the following stopping condition:
\begin{equation}
\label{eqn:stop_condition}
\fpr^{(t)}(\mb{g}^{j}_{\text{aux}}) \leq \tau^{(t)} \triangleq \tau_0 / (1 + \alpha t) \;,
\end{equation}
where the tolerance $\tau^{(t)}$ is large during the first several steps and reduces to zeros at the rate of $\co(1/t)$ as $t$ increases.  In the $t^{\text{th}}$ step, once (\ref{eqn:stop_condition}) is satisfied, we stop the $D$-update (FISTA) and continue to the next step. The effect of this stopping condition is theoretically analyzed in Propositions \ref{lemma:fista} and \ref{lemma:d_residual}, and numerically demonstrated in~\sctn{fix-vs-diminish} below.

\subsection{Improvement IV: computational techniques in solving subproblem (\ref{eqn:d-update-mod})}
\label{para:freq}

 Based on the discussion in Section \ref{sec:freq}, we have two options to solve subproblem (\ref{eqn:d-update-mod}). One is to solve in the spatial domain utilizing sparsity. The gradient of $\F^{(t)}_{\mathrm{mod}}(\mb{d}) $ is
\[
\nabla \F^{(t)}_{\mathrm{mod}}(\mb{d}) = \frac{1}{\Lambda^{(t)}} \sum_{\tau=1}^t  (\tau/t)^p \Big((X^{(t)})^T X^{(t)} \mb{d} - (X^{(\tau)})^T \mb{s}^{(\tau)}\Big) =  \frac{1}{\Lambda^{(t)}} \Big(A^{(t)}_{\mathrm{mod}} \mb{d} - \mb{b}^{(t)}_{\mathrm{mod}}\Big) \;,
\]
where $A^{(t)}_{\mathrm{mod}}$ and $\mb{b}^{(t)}_{\mathrm{mod}}$ are calculated in a recursive form in the line 5 of Algorithm \ref{algo:surro-splitting}. The other option is to update in the frequency domain. The conjugate cogradient of $\hat{\F}^{(t)}_{\mathrm{mod}}(\hat{\mb{d}}) $ is
\[
\nabla \hat{\F}^{(t)}_{\mathrm{mod}}(\hat{\mb{d}}) = \frac{1}{\Lambda^{(t)}} \sum_{\tau=1}^t  (\tau/t)^p \Big((\hat{X}^{(t)})^T \hat{X}^{(t)} \hat{\mb{d}} - (\hat{X}^{(\tau)})^T \hat{\mb{s}}^{(\tau)}\Big) =  \frac{1}{\Lambda^{(t)}} \Big(\hat{A}^{(t)}_{\mathrm{mod}} \hat{\mb{d}} - \hat{\mb{b}}^{(t)}_{\mathrm{mod}}\Big) \;,
\]
where $\hat{A}^{(t)}_{\mathrm{mod}}$ and $\hat{\mb{b}}^{(t)}_{\mathrm{mod}}$ are calculated in a recursive form in the line 7 of Algorithm \ref{algo:surro-splitting}.
With the gradients, we can apply FISTA or frequency-domain FISTA on the problem (\ref{eqn:d-update-mod}), as in Algorithm \ref{algo:surro-splitting}.

\begin{algorithm2e}[h]
\SetKwInOut{initial}{Initialize}
\initial{Initialize $\mb{d}^{(0)}$, let $A^{(0)}_{\mathrm{mod}} \leftarrow 0, \mb{b}^{(0)}_{\mathrm{mod}} \leftarrow 0$ or $\hat{A}^{(0)}_{\mathrm{mod}} \leftarrow 0, \hat{\mb{b}}^{(0)}_{\mathrm{mod}} \leftarrow 0$. }
\For{$t=1,\cdots, T$} {
Sample a signal $\mb{s}^{(t)}$ from $\{\mb{s_{\text{split}}}\}$.\\
Solve convolutional sparse coding problem (\ref{eqn:cbpdn_t}) to obtain $\mb{x}^{(t)}$.\\
\uIf{Option I}
{Update $A^{(t)}_{\mathrm{mod}}, \mb{b}^{(t)}_{\mathrm{mod}}$ in the spatial-domain with sparse matrix $X^{(t)}$:
\vspace{-2mm}\[A^{(t)}_{\mathrm{mod}} = \alpha^{(t)} A^{(t-1)}_{\mathrm{mod}} + (X^{(t)})^T X^{(t)} ,~
\mb{b}^{(t)}_{\mathrm{mod}} = \alpha^{(t)}\mb{b}^{(t-1)}_{\mathrm{mod}} + (X^{(t)})^T \mb{s}^{(t)}
\vspace{-1mm}
\]

Solve the following subproblem with FISTA (stopping condition (\ref{eqn:stop_condition})):
\vspace{-2mm}
\[
\mb{d}^{(t)} = \argmin_{\mb{d}\in\R^{ML}} \F^{(t)}_{\mathrm{mod}}(\mb{d}) + \iota_{\mathrm{C}}(\mb{d}) \;.
\vspace{-2mm}
\]
}
\ElseIf{Option II}{
Update $\hat{A}^{(t)}_{\mathrm{mod}}, \hat{\mb{b}}^{(t)}_{\mathrm{mod}}$ in the frequency-domain:
\vspace{-2mm}\[\hat{A}^{(t)}_{\mathrm{mod}} = \alpha^{(t)} \hat{A}^{(t-1)}_{\mathrm{mod}} + (\hat{X}^{(t)})^H \hat{X}^{(t)}, ~
\hat{\mb{b}}^{(t)}_{\mathrm{mod}} = \alpha^{(t)}\hat{\mb{b}}^{(t-1)}_{\mathrm{mod}} + (\hat{X}^{(t)})^H \hat{\mb{s}}^{(t)}\vspace{-3mm}\]

Solve the following subproblem with frequency-domain FISTA (stopping condition (\ref{eqn:stop_condition}), see Appendix \ref{sec:fista_freq}):
\vspace{-2mm}
\[
\mb{d}^{(t)} = \argmin_{\mb{d}\in\R^{MN}} \hat{\F}^{(t)}_{\mathrm{mod}}(\hat{\mb{d}}) + \iota_{\text{C}_{\text{PN}}}(\mb{d}) \;.
\vspace{-6mm}
\]
}
}
\KwOut{$\mb{d}^{(T)}$}
\caption{Online Convolutional Dictionary Learning (Surrogate-Splitting)}
\label{algo:surro-splitting}
\end{algorithm2e}

\textbf{Complexity analysis of Algorithm \ref{algo:surro-splitting}.}
If we solve (\ref{eqn:d-update-mod}) directly, the operator $X^{(t)}$ is a linear operator from $\R^{LM}$ to $\R^{N}$. Thus, the complexity of computing the Hessian matrix of $f^{(t)}$, $(X^{(t)})^T X^{(t)}$, is $\co(L^2M^2N)$ and the memory cost is $\co(L^2M^2)$.
Otherwise, if we solve (\ref{eqn:d-update-mod}) utilizing the sparsity of $X$, the computational cost of computing $(X^{(t)})^T X^{(t)}$ can be reduced to $\co(L^2M^2N\rho)$, where $\rho$ is the density of sparse matrix $X^{(t)}$, but the memory cost is still $\co(L^2M^2)$ because $(X^{(t)})^T X^{(t)}$ is not sparse although $X^{(t)}$ is.
In comparison, if we solve (\ref{eqn:d-update-mod}) in the frequency domain, the frequency-domain operator $\hat{X}^{(t)} = (\hat{X}_1, \hat{X}_2, \cdots, \hat{X}_M)$ is a linear operator from $\C^{MN}$ to $\C^{N}$, which seems to lead to a larger complexity to compute the Hessian: $\co(M^2N^3)$ flops and $\co(M^2N^2)$ memory cost. However, since each component $\hat{X}_m$ is diagonal, the frequency-domain product $(\hat{X}^{(t)})^H \hat{X}^{(t)}$ has only $\co(M^2N)$ non-zero values. Both the number of flops and memory cost are $\co(M^2N)$.  The complexities are listed in Table \ref{tab:surro}.

\begin{table}[t]
\centering
\begin{tabular}{|b{0.268\linewidth}|b{0.48\linewidth}|b{0.268\linewidth}|}
\hline
Scheme & Single step complexity & Memory usage\\
\hline \hline
Spatial (dense matrix) & $T_{\text{CBPDN}} + \co(L^2M^2N) + J \times \co(L^2M^2)$ & $\co(L^2M^2) + \co(LMN)$\\
\hline
Spatial (sparse matrix) & $T_{\text{CBPDN}} +\co(L^2M^2N\rho) + J \times \co(L^2M^2)$ & $\co(L^2M^2) + \co(LMN\rho)$\\
\hline
Frequency update & $T_{\text{CBPDN}} + \big(\co(M^2N) + \co(MN\log(N)) \big) + J \times \big(\co(M^2N)+\co(MN\log(N))\big)$ & $\co(M^2N)$\\
\hline
\end{tabular}
\caption{Single step complexity and memory usage of Algorithm \ref{algo:surro-splitting}. $N$: signal dimension; $M$: number of dictionary kernels; $L$: size of each kernel; $\rho$: average density of the coefficient maps; $J$: average loops of FISTA in each step. This is numerically tested in Table \ref{tab:spatial_vs_freq}.
}
\label{tab:surro}
\end{table}

\subsection{Convergence of Algorithm \ref{algo:surro-splitting}}

First, we start with some assumptions\footnote{The specific formulas for Assumptions \ref{assume:uniqueness} and \ref{assume:surro} are shown in Appendix \ref{app:assume-details}.}:
\begin{assume}
\label{assume:bdd_signal}
All the signals are drawn from a distribution with a compact support.
\end{assume}
\begin{assume}
\label{assume:uniqueness}
Each sparse coding step (\ref{eqn:cbpdn}) has a unique solution.
\end{assume}
\begin{assume}
\label{assume:surro}
The surrogate functions are strongly convex.
\end{assume}

Assumption \ref{assume:bdd_signal} can easily be guaranteed by normalizing each training signal.  Assumption \ref{assume:uniqueness} is a common assumption in dictionary learning and other linear regression papers \cite{mairal2009online, efron2004least}. Practically, it must be guaranteed by choosing a sufficiently large penalty parameter $\lambda$ in (\ref{eqn:cbpdn}), because a larger penalty parameter leads to a sparser $\mb{x}$.  See Appendix \ref{app:assume-details} for details. Assumption \ref{assume:surro} is a common assumption in RLS (see Definition (3.1) in \cite{johnstone1982exponential}) and dictionary learning (see Assumption B in \cite{mairal-2010-online}).

\begin{prop}[Weighted central limit theorem]
\label{lemma:weight_clt}
Suppose $Z_i \overset{\text{i.i.d}}{\sim}P_Z(z)$, with a compact support, expectation $\mu$, and variance $\sigma^2$. Define the weighted approximation of $Z$: $\hat{Z}_{\mathrm{mod}}^n\triangleq\frac{1}{\sum_{i=1}^n(i/n)^p}\sum_{i=1}^n (i/n)^p Z_i$. Then, we have
\begin{equation}\label{eqn:weight_clt}
\sqrt{n} (\hat{Z}^n_{\mathrm{mod}} - \mu)  \overset{\text{d}}{\to} N\Big(0,\frac{p+1}{\sqrt{2p+1}}\sigma\Big) \;.
\end{equation}
\begin{equation}
\label{eqn:weight_clt_bdd}
\E\Big[\sqrt{n}\big|\hat{Z}^n_{\mathrm{mod}} - \mu\big|\Big] = \co(1) \;.
\end{equation}
\end{prop}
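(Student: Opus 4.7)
The plan is to reduce everything to a Lindeberg--Feller CLT for a triangular array together with some routine Riemann-sum asymptotics for the weights. Let $w_i^{(n)} = (i/n)^p$ and $W_n = \sum_{i=1}^n w_i^{(n)}$, so that
\[
\hat{Z}^n_{\mathrm{mod}} - \mu \;=\; \frac{1}{W_n}\sum_{i=1}^n w_i^{(n)}(Z_i - \mu).
\]
First I would establish the two Riemann-sum estimates
\[
\frac{W_n}{n} \;=\; \frac{1}{n}\sum_{i=1}^n (i/n)^p \;\longrightarrow\; \int_0^1 x^p\,dx \;=\; \frac{1}{p+1},
\qquad
\frac{1}{n}\sum_{i=1}^n (i/n)^{2p} \;\longrightarrow\; \frac{1}{2p+1},
\]
which are immediate since $x \mapsto x^p$ and $x \mapsto x^{2p}$ are continuous on $[0,1]$. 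Combining these gives
\[
\mathrm{Var}\!\left(\sqrt{n}\,(\hat{Z}^n_{\mathrm{mod}} - \mu)\right) \;=\; \frac{n\,\sigma^2 \sum_i (w_i^{(n)})^2}{W_n^2} \;\longrightarrow\; \frac{(p+1)^2}{2p+1}\,\sigma^2,
\]
which is exactly the asserted limiting variance (after taking square root to obtain the standard deviation in \eqref{eqn:weight_clt}).

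Next I would apply the Lindeberg--Feller CLT to the triangular array
\[
Y_{n,i} \;\triangleq\; \frac{\sqrt{n}\,w_i^{(n)}\,(Z_i-\mu)}{W_n}, \qquad i=1,\dots,n,
\]
whose rows are independent, mean zero, and whose variances sum to the finite limit computed above. Assumption~\ref{assume:bdd_signal} (compact support) gives a constant $B$ with $|Z_i - \mu| \leq B$ almost surely, so
\[
\max_{1\leq i\leq n}|Y_{n,i}| \;\leq\; \frac{\sqrt{n}\,B}{W_n} \;=\; O\!\left(\frac{1}{\sqrt{n}}\right) \;\longrightarrow\; 0,
\]
since $W_n \sim n/(p+1)$. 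Hence for any fixed $\varepsilon > 0$ the event $\{|Y_{n,i}|>\varepsilon\}$ is empty for all large $n$, making the Lindeberg condition
\[
\sum_{i=1}^n \E\!\left[Y_{n,i}^2\,\mathbf{1}_{\{|Y_{n,i}|>\varepsilon\}}\right] \;\longrightarrow\; 0
\]
trivially satisfied. The Lindeberg--Feller theorem then yields convergence of $\sum_i Y_{n,i} = \sqrt{n}(\hat{Z}^n_{\mathrm{mod}} - \mu)$ in distribution to a centered Gaussian with the variance computed above, establishing \eqref{eqn:weight_clt}.

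Finally, \eqref{eqn:weight_clt_bdd} follows by a one-line second-moment bound: since $\E[\hat{Z}^n_{\mathrm{mod}}] = \mu$, Jensen's inequality gives
\[
\E\!\left[\sqrt{n}\,\bigl|\hat{Z}^n_{\mathrm{mod}} - \mu\bigr|\right] \;\leq\; \sqrt{\,n\,\mathrm{Var}(\hat{Z}^n_{\mathrm{mod}})\,} \;=\; \sigma\,\sqrt{\frac{n\sum_i (w_i^{(n)})^2}{W_n^2}} \;=\; O(1),
\]
using the same Riemann-sum asymptotics. There is no real obstacle in this proof, as the compact-support assumption makes the Lindeberg condition essentially free; the only care needed is in verifying the Riemann-sum limits (which are standard) and in keeping track of the constants so that the limiting variance matches $(p+1)^2\sigma^2/(2p+1)$ exactly.
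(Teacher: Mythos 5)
Your proposal is correct and follows essentially the same route as the paper's proof: both reduce the claim to a classical CLT for independent, non-identically weighted summands together with the asymptotics $\sum_{i=1}^n (i/n)^p \sim n/(p+1)$ and $\sum_{i=1}^n (i/n)^{2p} \sim n/(2p+1)$, and both prove \eqref{eqn:weight_clt_bdd} by the same second-moment/Jensen bound. The only (immaterial) difference is that you verify the Lindeberg condition for the normalized triangular array directly---which the compact support makes trivial---whereas the paper checks the Lyapunov condition for the unnormalized sums $\sum_i i^p Z_i$ and then converts to the $\sqrt{n}$ normalization via integral comparison bounds.
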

This proposition is an extension of the central limit theorem (CLT). As $p\to0$, it reduces to the standard CLT. The proof is given in Appendix \ref{app:weight_clt}.

\begin{prop}[Convergence of functions]
\label{lemma:weight}With Assumptions \ref{assume:bdd_signal}-\ref{assume:surro}, we have
\begin{equation}
\label{eqn:donsker}
\E\Big[\sqrt{t}\big\|F - F^{(t)}\big\|_{\infty}\Big] \leq M \;,
\end{equation}
\begin{equation}
\label{eqn:weight-clt-f}
\E\Big[\sqrt{t}\big\|F - F^{(t)}_{\mathrm{mod}}\big\|_{\infty}\Big] \leq \frac{p+1}{\sqrt{2p+1}}M \;,
\end{equation}
where $M>0$ is some constant unrelated with $t$, and $\|f\|_{\infty} = \sup_{\mb{d}\in\text{C}}\|f(\mb{d})\|$.
\end{prop}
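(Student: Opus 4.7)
The plan is to reduce each uniform bound to a pointwise $\sqrt{t}$-bound obtained from the (weighted) central limit theorem, and then pay only a polylogarithmic price to sup over $\mb{d}\in\text{C}$ by exploiting the compactness of $\text{C}$ together with a uniform Lipschitz bound on $\mb{d}\mapsto f(\mb{d};\mb{s})$. For the pointwise step, I would fix $\mb{d}\in\text{C}$ and set $Z_\tau = f(\mb{d};\mb{s}^{(\tau)})$. These are i.i.d.\ with mean $F(\mb{d})$ and, under Assumption~\ref{assume:bdd_signal}, are bounded, hence have finite variance $\sigma^2(\mb{d})\leq \sigma_{\max}^2$. The classical CLT then yields $\E[\sqrt{t}|F^{(t)}(\mb{d})-F(\mb{d})|]\leq \sigma(\mb{d})\sqrt{2/\pi}+o(1)$, which proves (\ref{eqn:donsker}) pointwise. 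Applying Proposition~\ref{lemma:weight_clt} to the same $Z_\tau$ gives the pointwise version of (\ref{eqn:weight-clt-f}) with the extra factor $(p+1)/\sqrt{2p+1}$.

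For the uniformization step I would establish that $f(\cdot;\mb{s})$ is Lipschitz on $\text{C}$ with constant $L_f$ independent of $\mb{s}$ on the (compact) support of $P_S$. This follows from Danskin's theorem applied at $\mb{x}^*(\mb{d};\mb{s})$ (unique by Assumption~\ref{assume:uniqueness}), which gives $\nabla_\mb{d} f(\mb{d};\mb{s}) = (X^*)^T(X^*\mb{d}-\mb{s})$; boundedness of $\mb{d}$, $\mb{s}$, and $\mb{x}^*$ (the last from strong convexity in Assumption~\ref{assume:surro} applied to the surrogate and coercivity of the sparse-coding loss) yields a uniform $L_f$. Cover $\text{C}$ by $N_\epsilon$ balls of radius $\epsilon$ with centers $\mb{d}_1,\ldots,\mb{d}_{N_\epsilon}$; since $\text{C}\subset\R^{ML}$ is bounded, $N_\epsilon = \co(\epsilon^{-ML})$. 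On each ball, Lipschitzness gives $|F^{(t)}(\mb{d})-F^{(t)}(\mb{d}_i)|\leq L_f\epsilon$ and similarly for $F$ and for $F^{(t)}_{\mathrm{mod}}$, so
\begin{equation*}
\|F-F^{(t)}\|_\infty \leq 2L_f\epsilon + \max_{1\leq i\leq N_\epsilon}|F(\mb{d}_i)-F^{(t)}(\mb{d}_i)|.
\end{equation*}
Taking expectations, bounding the maximum by a sum, applying the pointwise CLT bound to each summand, and choosing $\epsilon = 1/\sqrt{t}$ gives an $\co(1/\sqrt{t})$ bound on the uniform error (the $N_\epsilon$ factor is absorbed into the constant since each pointwise expectation is $O(1/\sqrt{t})$ and, if a sharper bound is needed, a standard chaining/subgaussian tail argument upgrades the maximum to $O(\sqrt{\log N_\epsilon/t})$). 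The weighted case is identical with the pointwise bound from Proposition~\ref{lemma:weight_clt}, yielding the stated $(p+1)/\sqrt{2p+1}$ prefactor.

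The main obstacle I expect is the uniform Lipschitz bound on $f(\cdot;\mb{s})$, because this requires controlling the sensitivity of the CBPDN minimizer $\mb{x}^*(\mb{d};\mb{s})$ to $\mb{d}$; this is where Assumptions~\ref{assume:uniqueness} and~\ref{assume:surro} enter, and one must verify that the $\ell_1$ support of $\mb{x}^*$ is locally stable so that Danskin's theorem yields a bounded $\nabla_\mb{d} f$. A secondary issue is ensuring that the sub-Gaussian/bounded tail of $f(\mb{d};\mb{s})-F(\mb{d})$ is good enough that the union bound over the $\epsilon$-net does not degrade the $1/\sqrt{t}$ rate; Assumption~\ref{assume:bdd_signal} (bounded support) guarantees bounded differences, so a Hoeffding-type tail bound suffices and the chaining upgrade is routine.
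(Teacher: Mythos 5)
Your overall architecture (a pointwise $\sqrt{t}$-bound from the weighted CLT, then uniformization over the compact set $\text{C}$ via a uniform-in-$\mb{s}$ Lipschitz bound on $f(\cdot;\mb{s})$) matches the paper's, and your pointwise step and your derivation of the Lipschitz constant are fine --- indeed boundedness of $\mb{x}^*$ already follows from $\lambda\norm{\mb{x}^*}_1\leq l(\mb{d},0;\mb{s})=\norm{\mb{s}}^2/2$, so you need neither Assumption \ref{assume:surro} nor support stability there; the paper obtains the same bound, (\ref{eqn:x_bdd}), from (\ref{eqn:uniquesln}). The genuine gap is in the uniformization step. With $\epsilon=1/\sqrt{t}$ the net size is $N_\epsilon=\co(t^{ML/2})$, which grows with $t$, so ``bounding the maximum by a sum'' gives $N_\epsilon\cdot\co(1/\sqrt{t})=\co(t^{(ML-1)/2})$, which diverges; the $N_\epsilon$ factor is emphatically not absorbed into the constant. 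Your fallback, the single-scale sub-Gaussian maximal inequality, yields $\E\big[\max_i\sqrt{t}\,|F(\mb{d}_i)-F^{(t)}(\mb{d}_i)|\big]=\co(\sqrt{\log N_\epsilon})=\co(\sqrt{\log t})$, which still grows with $t$ and therefore does not give the constant $M$ claimed in (\ref{eqn:donsker})--(\ref{eqn:weight-clt-f}) (nor even the ``polylogarithmic price'' you announce, which would itself already be too weak for the statement). To close the gap you must run the full multiscale chaining argument: since $\log N(\epsilon)=\co(ML\log(1/\epsilon))$, Dudley's entropy integral $\int_0^{\mathrm{diam}(\text{C})}\sqrt{\log N(\epsilon)}\,\mathrm{d}\epsilon$ is finite and one obtains $\E\big[\sup_{\mb{d}\in\text{C}}\sqrt{t}\,|F(\mb{d})-F^{(t)}(\mb{d})|\big]=\co(1)$ with no logarithmic loss; the weighted case goes through identically because the weighted increments remain bounded and independent with the same variance scaling, producing the $(p+1)/\sqrt{2p+1}$ prefactor.

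For comparison, the paper does not carry out the chaining explicitly: it proves the pointwise bound from Proposition \ref{lemma:weight_clt}, observes that the relevant functions have uniformly bounded derivatives by (\ref{eqn:x_bdd}), and then appeals to Donsker's theorem for Lipschitz-parametrized classes (Lemma 7 of \cite{mairal-2010-online}, Chapter 19 of \cite{van2000asymptotic}) to pass from the pointwise to the uniform bound --- which is exactly the packaged form of the chaining argument you would otherwise have to write out. So your plan is salvageable, but as written the key quantitative step is wrong, and the bound you would actually establish is weaker than the proposition.
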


This proposition is an extension of Donsker's theorem (see Lemma 7 in \cite{mairal-2010-online} and Chapter 19 in \cite{van2000asymptotic}). The proof is given in Appendix \ref{app:weight-clt-f}.

Moreover, it shows that weighted approximation $F^{(t)}_{\mathrm{mod}}$ and standard approximation $F^{(t)}$ have the same asymptotic convergence rate $\co(1/\sqrt{t})$. However, the error bound factor $(p+1)/\sqrt{2p+1}$ is a monotone increasing function in $p\geq0$. Thus, a larger $p$ leads to a larger variance and slower convergence of $F^{(t)}_{\mathrm{mod}}$. This explains why we cannot choose $p$ to be too large.

\begin{prop}[Convergence of FPR implies convergence of iterates]
\label{lemma:fista} Let $(\mb{d}^*)^{(t)}$ be the exact minimizer of the $t^{\text{th}}$ subproblem:
\begin{equation}
\label{eqn:exact-t}
(\mb{d}^*)^{(t)} = \argmin_{\mb{d}}\F^{(t)}_{\mathrm{mod}}(\mb{d}) + \iota_{\text{C}}(\mb{d}) \;.
\end{equation}
Let $\mb{d}^{(t)}$ be the solution obtained by the frequency-domain FISTA (Algorithm \ref{algo:fista}) with our proposed stopping condition (\ref{eqn:stop_condition}). Then, we have
\begin{equation}
\label{eqn:fpr_bdd}
\normsz[\big]{\mb{d}^{(t)}-(\mb{d}^*)^{(t)}} \leq \co\left(t^{-1} \right) \;.
\end{equation}
\end{prop}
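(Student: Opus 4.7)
The plan is to reduce the bound to a standard stability estimate for the proximal gradient map of a strongly convex, Lipschitz-smooth function, and then invoke the stopping rule (\ref{eqn:stop_condition}). Throughout, let $T^{(t)}(\mb{d}) \triangleq \text{Proj}_{\text{C}}\bigl(\mb{d}-\eta\nabla\F^{(t)}_{\mathrm{mod}}(\mb{d})\bigr)$, so that $\fpr^{(t)}(\mb{d}) = \norm{\mb{d}-T^{(t)}(\mb{d})}$ and $(\mb{d}^*)^{(t)}$ is the unique fixed point of $T^{(t)}$.

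\textbf{Step 1 (uniform smoothness and strong convexity).} I would first show that there exist constants $0<\mu\le L<\infty$, \emph{independent of $t$}, such that $\mu I \preceq \mathrm{Hess}\,\F^{(t)}_{\mathrm{mod}} \preceq L I$ on the relevant subspace. The upper bound comes from Assumption \ref{assume:bdd_signal}: signals lie in a compact set, so $\mb{x}^{(\tau)}$ is uniformly bounded in norm (by continuity of $\mb{x}^*(\cdot;\cdot)$), hence $\norm{(X^{(\tau)})^T X^{(\tau)}}$ is uniformly bounded by some $L$. Because $\F^{(t)}_{\mathrm{mod}}$ is a convex combination of the $f^{(\tau)}$'s (with weights summing to one after normalizing by $\Lambda^{(t)}$), its Hessian inherits the same upper bound. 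The uniform lower bound $\mu$ is exactly the content of Assumption \ref{assume:surro}.

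\textbf{Step 2 (contraction of the proximal gradient operator).} With $\eta = 1/L$, the standard strongly-convex/Lipschitz-smooth analysis yields a uniform contraction factor $\gamma \triangleq \sqrt{1-\mu/L} < 1$ for $T^{(t)}$: for any $\mb{d}$,
\begin{equation*}
\norm{T^{(t)}(\mb{d}) - (\mb{d}^*)^{(t)}} = \norm{T^{(t)}(\mb{d}) - T^{(t)}((\mb{d}^*)^{(t)})} \le \gamma \norm{\mb{d} - (\mb{d}^*)^{(t)}}.
\end{equation*}
Crucially, $\gamma$ is the same for every $t$, thanks to Step 1.

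\textbf{Step 3 (FPR controls distance to optimum).} Applying the triangle inequality at the FISTA auxiliary point $\mb{g}^{j}_{\mathrm{aux}}$ that triggers the stopping rule,
\begin{equation*}
\norm{\mb{g}^{j}_{\mathrm{aux}} - (\mb{d}^*)^{(t)}} \le \fpr^{(t)}(\mb{g}^{j}_{\mathrm{aux}}) + \norm{T^{(t)}(\mb{g}^{j}_{\mathrm{aux}}) - (\mb{d}^*)^{(t)}} \le \fpr^{(t)}(\mb{g}^{j}_{\mathrm{aux}}) + \gamma \norm{\mb{g}^{j}_{\mathrm{aux}} - (\mb{d}^*)^{(t)}},
\end{equation*}
which rearranges to $\norm{\mb{g}^{j}_{\mathrm{aux}} - (\mb{d}^*)^{(t)}} \le \fpr^{(t)}(\mb{g}^{j}_{\mathrm{aux}})/(1-\gamma)$. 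Since the returned iterate is $\mb{d}^{(t)} = T^{(t)}(\mb{g}^{j}_{\mathrm{aux}})$, one more contraction step gives $\norm{\mb{d}^{(t)} - (\mb{d}^*)^{(t)}} \le \gamma \norm{\mb{g}^{j}_{\mathrm{aux}} - (\mb{d}^*)^{(t)}} \le \frac{\gamma}{1-\gamma}\fpr^{(t)}(\mb{g}^{j}_{\mathrm{aux}})$. Combined with the stopping rule (\ref{eqn:stop_condition}),
\begin{equation*}
\norm{\mb{d}^{(t)} - (\mb{d}^*)^{(t)}} \le \frac{\gamma}{1-\gamma}\cdot\frac{\tau_0}{1+\alpha t} = \co\bigl(t^{-1}\bigr),
\end{equation*}
which is the claim.

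\textbf{Main obstacle.} The nontrivial part is Step 1, specifically making the uniform strong-convexity bound rigorous when the update is carried out in the frequency domain and the constraint $\mathrm{C}_{\mathrm{PN}}$ restricts $\mb{d}_m$ to a low-dimensional subspace of $\R^N$. The surrogate's Hessian $\hat A^{(t)}_{\mathrm{mod}}/\Lambda^{(t)}$ is singular on all of $\C^{MN}$ (since most frequencies carry no signal information relative to the zero-padded filters), so ``strong convexity'' must be understood on the feasible subspace defined by $P$. I would verify that Assumption \ref{assume:surro}, read together with (\ref{eqn:equal_freq_spatial}), delivers a uniform positive lower bound for the Hessian restricted to this subspace, which is exactly what the contraction argument of Step 2 requires. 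Once that is in place, the remaining calculations are routine triangle-inequality manipulations.
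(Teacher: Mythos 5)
Your proposal is correct and follows essentially the same route as the paper's own proof: nonexpansiveness of the projection combined with the uniform strong convexity and smoothness of $\F^{(t)}_{\mathrm{mod}}$ (Assumption \ref{assume:surro} and Lemma \ref{lemma:lipschitz}) makes the projected-gradient map a $t$-independent contraction, the triangle inequality then converts the fixed-point residual at $\mb{g}^{j}_{\mathrm{aux}}$ into a bound on $\normsz[\big]{\mb{d}^{(t)}-(\mb{d}^*)^{(t)}}$, and the stopping rule (\ref{eqn:stop_condition}) delivers the $\co(t^{-1})$ rate, with the frequency-domain case reduced to the spatial one via (\ref{eqn:equal_freq_spatial}). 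The only cosmetic difference is the step-size constant: the paper takes $\eta \leq \min(\mu/L^2,1/\mu)$ so that the crude factor $\sqrt{1-2\mu\eta+\eta^2L^2}$ lies in $[0,1)$, whereas your $\gamma=\sqrt{1-\mu/L}$ at $\eta=1/L$ requires the sharper co-coercivity inequality to justify.
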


The proof is given in Appendix \ref{app:lemma-fista}.

\begin{prop}[The convergence rate of Algorithm \ref{algo:surro-splitting}]
\label{lemma:d_residual} Let $\mb{d}^{(t)}$ be the sequence generated by Algorithm \ref{algo:surro-splitting}. Then, we have
\begin{equation}
\label{eqn:d_residual}
\normsz[\big]{\mb{d}^{(t+1)}-\mb{d}^{(t)}} = \co\left(t^{-1}\right).
\end{equation}
\end{prop}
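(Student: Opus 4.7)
The plan is to insert the exact minimizers $(\mb{d}^*)^{(t)}$ and $(\mb{d}^*)^{(t+1)}$ between $\mb{d}^{(t)}$ and $\mb{d}^{(t+1)}$ and apply the triangle inequality:
\[
\normsz[\big]{\mb{d}^{(t+1)}-\mb{d}^{(t)}}
\;\le\;
\normsz[\big]{\mb{d}^{(t+1)}-(\mb{d}^*)^{(t+1)}}
+\normsz[\big]{(\mb{d}^*)^{(t+1)}-(\mb{d}^*)^{(t)}}
+\normsz[\big]{(\mb{d}^*)^{(t)}-\mb{d}^{(t)}}.
\]
The first and third terms are already $\co(t^{-1})$ by Proposition \ref{lemma:fista}, which controls the inexactness introduced by the diminishing‐tolerance stopping rule (\ref{eqn:stop_condition}). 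Thus the whole task reduces to showing that the sequence of \emph{exact} minimizers satisfies $\|(\mb{d}^*)^{(t+1)}-(\mb{d}^*)^{(t)}\|=\co(t^{-1})$.

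For this, I would use strong convexity of the surrogates (Assumption \ref{assume:surro}) together with the recursive identity (\ref{eqn:surrogate_mod}). Subtracting $\F^{(t)}_{\mathrm{mod}}(\mb{d})$ from both sides of (\ref{eqn:surrogate_mod}) and using (\ref{eqn:Lambda}) gives the clean expression
\[
\F^{(t+1)}_{\mathrm{mod}}(\mb{d})-\F^{(t)}_{\mathrm{mod}}(\mb{d})
=\frac{1}{\Lambda^{(t+1)}}\bigl(f^{(t+1)}(\mb{d})-\F^{(t)}_{\mathrm{mod}}(\mb{d})\bigr),
\]
and hence the same identity holds for gradients. The exact minimizers $(\mb{d}^*)^{(t)}, (\mb{d}^*)^{(t+1)} \in \text{C}$ satisfy the first‑order variational inequalities
\[
\langle \nabla\F^{(t)}_{\mathrm{mod}}((\mb{d}^*)^{(t)}),\,(\mb{d}^*)^{(t+1)}-(\mb{d}^*)^{(t)}\rangle \ge 0,\qquad
\langle \nabla\F^{(t+1)}_{\mathrm{mod}}((\mb{d}^*)^{(t+1)}),\,(\mb{d}^*)^{(t)}-(\mb{d}^*)^{(t+1)}\rangle \ge 0.
\]
Adding them, inserting $\pm\nabla\F^{(t+1)}_{\mathrm{mod}}((\mb{d}^*)^{(t)})$, and using $\mu$-strong convexity of $\F^{(t+1)}_{\mathrm{mod}}$ yields
\[
\mu\,\normsz[\big]{(\mb{d}^*)^{(t+1)}-(\mb{d}^*)^{(t)}}
\;\le\;
\normsz[\big]{\nabla\F^{(t+1)}_{\mathrm{mod}}((\mb{d}^*)^{(t)})-\nabla\F^{(t)}_{\mathrm{mod}}((\mb{d}^*)^{(t)})},
\]
after a Cauchy--Schwarz step and one cancellation of $\|(\mb{d}^*)^{(t+1)}-(\mb{d}^*)^{(t)}\|$.

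To finish, I would plug the recursive identity into the right‑hand side, giving
\[
\normsz[\big]{(\mb{d}^*)^{(t+1)}-(\mb{d}^*)^{(t)}}
\;\le\; \frac{1}{\mu\,\Lambda^{(t+1)}}\Bigl(\normsz[\big]{\nabla f^{(t+1)}((\mb{d}^*)^{(t)})}+\normsz[\big]{\nabla\F^{(t)}_{\mathrm{mod}}((\mb{d}^*)^{(t)})}\Bigr).
\]
By Assumption \ref{assume:bdd_signal} the signals $\mb{s}^{(t)}$ lie in a compact set, and because $(\mb{d}^*)^{(t)}\in\text{C}$ and each $\mb{x}^{(t)}$ solves a CBPDN problem with input on a compact set, each $\nabla f^{(\tau)}$ is uniformly bounded on $\text{C}$; averaging preserves this bound, so the numerator is $\co(1)$. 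A direct estimate of $\Lambda^{(t)}=\sum_{\tau=1}^{t}(\tau/t)^p$ as a Riemann sum gives $\Lambda^{(t)}\sim t/(p+1)$, so $1/\Lambda^{(t+1)}=\co(t^{-1})$. This produces $\|(\mb{d}^*)^{(t+1)}-(\mb{d}^*)^{(t)}\|=\co(t^{-1})$, which combined with the triangle inequality above yields (\ref{eqn:d_residual}).

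The main obstacle I anticipate is the uniform boundedness of $\nabla f^{(\tau)}$ and $\nabla \F^{(\tau)}_{\mathrm{mod}}$ on $\text{C}$; this depends on $X^{(\tau)}$, which in turn depends on the coefficient maps $\mb{x}^{(\tau)}$. One needs that CBPDN with bounded data and a dictionary in $\text{C}$ produces a bounded sparse code, which follows from the $\ell_1$ term in (\ref{eqn:basic_l}) (otherwise the objective at $\mb{x}=0$ would be smaller). With that lemma in hand the rest is routine.
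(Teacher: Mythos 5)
Your proof is correct, but it takes a genuinely different route from the paper's. You decompose $\normsz{\mb{d}^{(t+1)}-\mb{d}^{(t)}}$ through the exact minimizers via the triangle inequality, dispose of the two inexactness terms by Proposition \ref{lemma:fista}, and then control $\normsz{(\mb{d}^*)^{(t+1)}-(\mb{d}^*)^{(t)}}$ by the classical perturbation bound for strongly convex constrained minimization: adding the two variational inequalities and invoking strong monotonicity of $\nabla\F^{(t+1)}_{\mathrm{mod}}$ gives $\mu\normsz{(\mb{d}^*)^{(t+1)}-(\mb{d}^*)^{(t)}}\le\normsz{\nabla\F^{(t+1)}_{\mathrm{mod}}((\mb{d}^*)^{(t)})-\nabla\F^{(t)}_{\mathrm{mod}}((\mb{d}^*)^{(t)})}$, which is $\co(1/\Lambda^{(t+1)})=\co(t^{-1})$ by the recursion \eqref{eqn:surrogate_mod} and the uniform gradient bound. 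The paper instead works at the level of function values: it sandwiches $\F^{(t)}_{\mathrm{mod}}(\mb{d}^{(t+1)})-\F^{(t)}_{\mathrm{mod}}(\mb{d}^{(t)})$ between a strong-convexity lower bound $\mu(r^{(t)}-C/t)^2-LC^2/t^2$ and an upper bound $\frac{C_1}{t}r^{(t)}+\frac{LC^2}{t^2}$ obtained from a three-term splitting, then solves the resulting quadratic inequality in $r^{(t)}=\normsz{\mb{d}^{(t+1)}-\mb{d}^{(t)}}$. Both arguments use the same three ingredients (Assumption \ref{assume:surro}, the $\co(1/t)$ bound on the gradient of $\F^{(t+1)}_{\mathrm{mod}}-\F^{(t)}_{\mathrm{mod}}$ coming from $1/\Lambda^{(t+1)}\le p/(t+1)$ and the uniform boundedness of $X^{(\tau)}$, and Proposition \ref{lemma:fista}). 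Your version is arguably cleaner, since it avoids the case split on whether $r^{(t)}\le C/t$ and the quadratic inequality; the paper's version has the side benefit that its intermediate bound \eqref{eqn:ft1ft} on $\T_2$ is reused verbatim in the quasi-martingale argument of Theorem \ref{prop:main}. Your closing remark about the boundedness of the coefficient maps is the right thing to worry about, and your $\ell_1$-comparison-with-zero argument is a valid substitute for the paper's appeal to \eqref{eqn:uniquesln}.
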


Compared with Lemma 1 in \cite{mairal-2010-online}, which shows the convergence rate of the surrogate function method with exact $D$-update, our Proposition \ref{lemma:d_residual} shows that the inexact $D$-update (\ref{eqn:stop_condition}) shares the same rate. Since our inexact version stops FISTA earlier, it is faster. The proof of this proposition is given in Appendix \ref{app:lemma-d-residual}.

\begin{theorem}[Almost sure convergence of Algorithm \ref{algo:surro-splitting}]
\label{prop:main}
Let $\F^{(t)}_{\mathrm{mod}}$ be the surrogate function sequence, $\mb{d}^{(t)}$ the iterate sequence, both generated by Algorithm \ref{algo:surro-splitting}. Then we have, with probability $1$:
\begin{enumerate}
\item $\F^{(t)}_{\mathrm{mod}}(\mb{d}^{(t)})$ converges.
\item $\F^{(t)}_{\mathrm{mod}}(\mb{d}^{(t)}) - F(\mb{d}^{(t)}) \to 0$. \label{thm:2}
\item $F(\mb{d}^{(t)})$ converges.\label{thm:3}
\item $\mathrm{dist}(\mb{d}^{(t)},V) \to 0$, where $V$ is the set of stationary points of the CDL problem (\ref{eqn:cdl}).\label{thm:4}
\end{enumerate}
\end{theorem}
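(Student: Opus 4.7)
My plan is to follow the quasi-martingale strategy of Mairal et al.\ (Prop.\ 6--Thm.\ 1 in~\cite{mairal-2010-online}), but adapted in three places: (i) the surrogate $\F^{(t)}_{\mathrm{mod}}$ carries the forgetting weights $(\tau/t)^p$ instead of $1/t$, so I replace the standard empirical-process bounds with the weighted versions from Proposition~\ref{lemma:weight}; (ii) the $D$-update is inexact, so I carry the $\co(1/t)$ residual from Proposition~\ref{lemma:fista} throughout; and (iii) image-splitting only changes the training distribution $P_S$ but does not affect the structural assumptions, so it plays no role in the convergence argument.

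For part~1 (convergence of $\F^{(t)}_{\mathrm{mod}}(\mb{d}^{(t)})$), I would set up a quasi-martingale: expanding the recursion \eqref{eqn:surrogate_mod} and using the definition~\eqref{eqn:loss_surrogate} of $f^{(t+1)}$ together with the fact that $\mb{d}^{(t)}$ is an $\co(1/t)$-approximate minimizer of $\F^{(t)}_{\mathrm{mod}}+\iota_{\mathrm{C}}$, I can bound
\[
\E\bigl[\F^{(t+1)}_{\mathrm{mod}}(\mb{d}^{(t+1)})-\F^{(t)}_{\mathrm{mod}}(\mb{d}^{(t)})\,\big|\,\mc{F}_t\bigr]
\le \frac{F(\mb{d}^{(t)})-\F^{(t)}_{\mathrm{mod}}(\mb{d}^{(t)})}{\Lambda^{(t+1)}} + \co(t^{-2}),
\]
where the extra $\co(t^{-2})$ term captures the inexactness via Proposition~\ref{lemma:fista} and strong convexity (Assumption~\ref{assume:surro}). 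Because $F\le\F^{(t)}_{\mathrm{mod}}$ on average is not quite true, I instead show that the positive variation has summable expectation by combining Proposition~\ref{lemma:weight} (which gives $\E\|F-F^{(t)}_{\mathrm{mod}}\|_\infty=\co(1/\sqrt t)$) with $\Lambda^{(t)}=\Theta(t)$. Fisk's quasi-martingale theorem then yields almost-sure convergence of $\F^{(t)}_{\mathrm{mod}}(\mb{d}^{(t)})$ and, as a byproduct, $\sum_t |F(\mb{d}^{(t)})-\F^{(t)}_{\mathrm{mod}}(\mb{d}^{(t)})|/t<\infty$ a.s.

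For parts~2 and~3, I would combine the summability above with a Glivenko--Cantelli--type equicontinuity argument: Proposition~\ref{lemma:d_residual} gives $\|\mb{d}^{(t+1)}-\mb{d}^{(t)}\|=\co(1/t)$, so the sequence $F(\mb{d}^{(t)})-\F^{(t)}_{\mathrm{mod}}(\mb{d}^{(t)})$ has Lipschitz-controlled fluctuations (the Lipschitz constants of $F$ and $\F^{(t)}_{\mathrm{mod}}$ are uniformly bounded thanks to Assumption~\ref{assume:bdd_signal} and compactness of $\mathrm{C}$). Summability plus slow variation forces the term itself to converge to $0$, proving part~\ref{thm:2}. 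Part~\ref{thm:3} is then immediate from parts~1 and~\ref{thm:2}.

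For part~\ref{thm:4}, I would pass to the limit in the approximate first-order optimality condition produced by the stopping rule~\eqref{eqn:stop_condition}. Writing the FISTA termination as $\mb{d}^{(t)}=\mathrm{Proj}_{\mathrm{C}}(\mb{d}^{(t)}-\eta\nabla\F^{(t)}_{\mathrm{mod}}(\mb{d}^{(t)}))+\mathbf{r}^{(t)}$ with $\|\mathbf{r}^{(t)}\|=\co(1/t)$, I would use the uniform convergence $\F^{(t)}_{\mathrm{mod}}\to F$ (with gradients converging too, by strong convexity and an Arzel\`a--Ascoli / Lebesgue differentiation argument applied to the Hessians $A^{(t)}_{\mathrm{mod}}/\Lambda^{(t)}$) to conclude that any cluster point $\mb{d}^\infty$ of $\{\mb{d}^{(t)}\}$ satisfies the fixed-point equation $\mb{d}^\infty=\mathrm{Proj}_{\mathrm{C}}(\mb{d}^\infty-\eta\nabla F(\mb{d}^\infty))$, i.e.\ $\mb{d}^\infty\in V$. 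Because $\mathrm{C}$ is compact and the iterates move slowly (Proposition~\ref{lemma:d_residual}), $\mathrm{dist}(\mb{d}^{(t)},V)\to0$. The main obstacle will be this last step: verifying that the Hessian-level convergence $\nabla\F^{(t)}_{\mathrm{mod}}\to\nabla F$ really holds uniformly on $\mathrm{C}$ despite the forgetting weights (which requires a weighted law of large numbers for the operator-valued sequence $(X^{(t)})^TX^{(t)}$) and that the $\co(1/t)$ FISTA residual does not accumulate through the quasi-martingale bookkeeping---these are the two places where Mairal's original proof requires genuine modification.
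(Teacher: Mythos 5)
Your proposal follows essentially the same route as the paper's proof: part~1 is established via the quasi-martingale decomposition $\F^{(t+1)}_{\mathrm{mod}}(\mb{d}^{(t+1)})-\F^{(t)}_{\mathrm{mod}}(\mb{d}^{(t)})$ into a term controlled by the inexact-minimizer bound of Proposition~\ref{lemma:fista} (giving the $\co(t^{-2})$ contribution) and a term whose conditional positive part is bounded by $\|F-F^{(t)}_{\mathrm{mod}}\|_\infty/\Lambda^{(t+1)}=\co(t^{-3/2})$ via Proposition~\ref{lemma:weight}, followed by Fisk's theorem, with parts~2--4 handled by the Mairal-style arguments using Proposition~\ref{lemma:d_residual}, exactly as in Appendix~\ref{app:mainprop}. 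The only cosmetic difference is that you spell out more of parts~2--4 (which the paper delegates to \cite{mairal-2010-online}); note that the gradient-consistency step in part~4 can be obtained from the nonnegativity and vanishing of $\F^{(t)}_{\mathrm{mod}}-F^{(t)}_{\mathrm{mod}}$ at $\mb{d}^{(t)}$ together with uniformly bounded curvature, so the weighted operator-level law of large numbers you flag as an obstacle is not actually required.
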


The proof is given in Appendix \ref{app:mainprop}.

\section{Learning from masked images}
\label{sec:msk}

In this section, we focus on the masked CDL problem (\ref{eqn:batch-cdl-ms}), for which there are no existing online algorithms. Let
\[
l_{\text{mask}}(\mb{d},\mb{x};\mb{s}) \triangleq \frac{1}{2}
    \normsz[\Big]{W \odot \Big(\sum_{m=1}^M \mb{d}_m \ast \mb{x}_{m} - \mb{s}\Big)}_2^2 +
    \lambda  \sum_{m=1}^M \norm{\mb{x}_{m}}_1.
\]
The objective function is defined as
\begin{equation}
\label{eqn:loss-mask}
f_{\text{mask}}(\mb{d}; \mb{s}) \triangleq \min_{\mb{x}} l_{\text{mask}}(\mb{d},\mb{x};\mb{s}) \;,
\end{equation}
allowing (\ref{eqn:batch-cdl-ms}) to be written as a stochastic minimization problem:
\begin{equation}
\label{eqn:mask-cdl}
\min_{\mb{d}} \E_{\mb{s}}[ f_{\text{mask}}(\mb{d}; \mb{s})] + \iota_{\mathrm{C}}(\mb{d}) \;.
\end{equation}

Both Algorithms \ref{algo:sgd} and \ref{algo:surro-splitting} can be applied to the masked CDL problem (\ref{eqn:mask-cdl}). First, we write $l_{\text{mask}}$ in a concise form:
\[
l_{\text{mask}}(\mb{d},\mb{x};\mb{s}) = 1/2\norm{W\odot X \mb{d} - W\odot \mb{s}}_2^2 + \lambda\norm{\mb{x}}_1.
\]
Thus, if we substitute operator $X$ with $W\odot X$, $\mb{s}$ with $W\odot \mb{s}$, and substitute standard CSC with masked CSC~\cite{heide-2015-fast, wohlberg-2016-boundary}, everything is the same as CDL without $W$, then we can apply Algorithms \ref{algo:sgd} and \ref{algo:surro-splitting}, with Option I, on (\ref{eqn:mask-cdl}). The numerical results for masked CDL are reported in Section \ref{sec:msk_reslt}.

A variant of Algorithm \ref{algo:sgd} with Option II is also able to solve  (\ref{eqn:mask-cdl}). First, $l_{\text{mask}}$ on the frequency domain is
\[
\hat{l}_{\text{mask}}(\hat{\mb{d}},\hat{\mb{x}};\hat{\mb{s}}) = 1/2\norm{W\odot \text{IFFT}\big(\hat{X} \hat{\mb{d}} -  \hat{\mb{s}}\big)}_2^2 + \lambda\norm{\mb{x}}_1.
\]
Similarly to the derivation in Section \ref{sec:sgd}, we derive the conjugate cogradient of $\hat{f}_{\text{mask}}$:
\[
\begin{aligned}
\nabla \hat{f}_{\text{mask}}(\hat{\mb{d}}^{(t-1)}; \hat{\mb{s}}^{(t)}) =& \frac{\partial \hat{l}_{\text{mask}}}{\partial \hat{\mb{d}}} \big(\hat{\mb{d}}^{(t-1)},\hat{\mb{x}}^{(t)}; \hat{\mb{s}}^{(t)} \big)\\ =& \big(\hat{X}^{(t)}\big)^H \text{FFT} \Big\{W \odot \text{IFFT}\big(\hat{X}^{(t)}\hat{\mb{d}}^{(t-1)}-\hat{\mb{s}}^{(t)}\big)\Big\} \;.
\end{aligned}
\]
The masked variant of Algorithm \ref{algo:sgd} Option II can be derived from $\nabla \hat{f}_{\text{mask}}$ as
\[
\mb{d}^{(t)} = \text{Proj}_{\text{C}_\text{PN}}\bigg(\text{IFFT}\Big(\hat{\mb{d}}^{(t-1)} - \eta^{(t)}\nabla \hat{f}_{\text{mask}}(\hat{\mb{d}}^{(t-1)}; \hat{\mb{s}}^{(t)}) \Big)\bigg) \;.
\]

Algorithm \ref{algo:surro-splitting} Option II, however, is not easily applied to (\ref{eqn:mask-cdl}). Computing the Hessian matrix in the frequency domain
\[
\begin{aligned}\hat{A}^{(t)}_{\text{mask}} =& \alpha^{(t)} \hat{A}^{(t-1)}_{\text{mask}} + (\hat{X}^{(t)})^H \text{FFT} \big( W \odot \text{IFFT}( \hat{X}^{(t)} )\big)\\ =&  \alpha^{(t)} \hat{A}^{(t-1)}_{\text{mask}} + (\hat{X}^{(t)})^H \text{FFT} \big( W \odot X^{(t)} \big)\end{aligned}
\]
requires an FFT on each column of a matrix $W \odot X^{(t)} \in \R^{MN \times N}$, which is very computational expensive.

\section{Numerical results: learning from clean data set}
\label{sec:rslt}

All the experiments are computed using MATLAB R2016a running on a workstation with 2 Intel Xeon(R) X5650 CPUs clocked at 2.67GHz. Implementations of these algorithms are available in the Matlab version of the SPORCO software library~\cite{wohlberg-2016-sporco}, and will be included in a future release of the Python version of this library. The dictionary size is $12 \times 12 \times 64$, and the signal size is $256 \times 256$. Dictionaries are evaluated by comparing the functional values obtained by computing CBPDN (\ref{eqn:cbpdn}) on the test set. A smaller functional value indicates a better dictionary. Similar methods to evaluate the dictionary are also used in other dictionary learning works~\cite{mairal-2010-online, tang2012self}.   The regularization parameter is chosen as $\lambda=0.1$.

The training set consists of 40 images selected from the MIRFLICKR-1M dataset\footnote{The actual image data contained in this dataset is of very low resolution since the dataset is primarily targeted at image classification tasks. The original images from which those used here were derived were obtained by downloading the original images from Flickr that were used to derive the MIRFLICKR-1M images.} \cite{huiskes-2008-new}, and the test set consists of 20 different images from the same source.
All of the images used were originally of size $512\times512$. To accelerate the experiments, we crop the borders of both the training images and testing images and preserve the central part to yield $256\times256$. The training and testing images are pre-processed by dividing by 255 to rescale the pixel values to the range $[0, 1]$ and highpass filtering\footnote{The pre-processing is applied due to the inability of the standard CSC model to effectively represent low-frequency/large-scale image components~\cite[Sec. 3]{wohlberg-2016-convolutional2}. In this case the highpass component is computed as the difference between the input signal and a lowpass component computed by Tikhonov regularization with a gradient term~\cite[pg. 3]{wohlberg-2017-sporco}, with regularization parameter $5.0$. }.

In this work we solve the convolutional sparse coding step using an ADMM algorithm~\cite{wohlberg-2014-efficient} with an adaptive penalty parameter scheme~\cite{wohlberg2017admm}. The stopping condition is that both primal and dual normalized residuals~\cite{wohlberg2017admm} be less than $10^{-3}$, and the relaxation parameter is set to $1.8$~\cite{wohlberg-2016-efficient}.

\subsection{Validation of Algorithm \ref{algo:sgd}}\label{sec:sgd_rslt}

\begin{figure}[t]
\centering \small
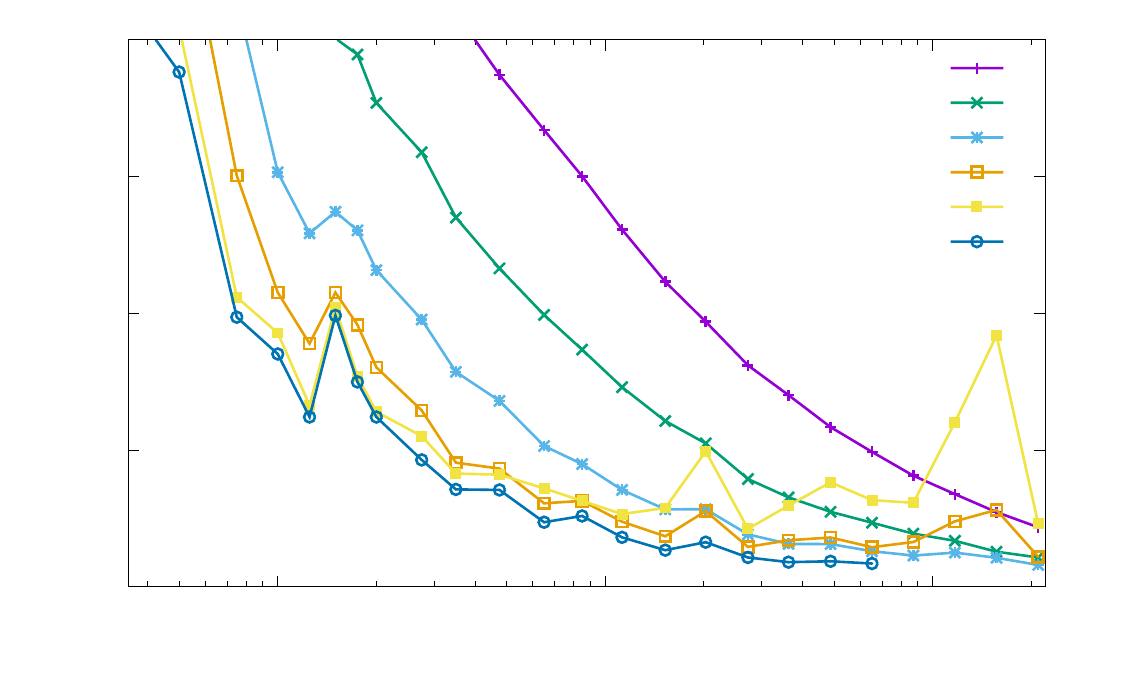
\caption{Tuning the step size of modified SGD (Algorithm \ref{algo:sgd}). A learning epoch is one pass through the whole training set.  With a large fixed $\eta$, the algorithm converges fast at first but becomes unstable later on; with a small fixed $\eta$, the algorithm converges slowly. A diminishing step size provides a good balance.}
\label{fig:sgd_all}
\end{figure}

First we test the effect of step size $\eta^{(t)}$ in Algorithm \ref{algo:sgd}. We can choose either a fixed step size or a diminishing step size:
\vspace{-2mm}
\[
\eta^{(t)} = \eta_0  \quad \text{or} \quad \eta^{(t)}=a/(t+b).
\vspace{-1mm}
\]
The results of experiments to determine the best choice of $\eta$ are reported in~\fig{sgd_all}.  We test the convergence performance of fixed step size scheme with values: $\eta_0 \in \{1,\; 0.3,\; 0.1,\; 0.03,\; 0.01\}$. We also test the convergence performance of the diminishing step size scheme with values: $ a \in \{ 5, 10, 20\}; b \in \{5, 10, 20\}$ and report the best ($a=10,b=5$) in \fig{sgd_all}. When a large fixed step size is used, the functional value decreases fast initially but becomes unstable later on. A smaller step size causes the opposite. A diminishing step size balances accuracy and convergence rate.

Second, we test the computational techniques (computing with sparsity / computing in the frequency domain), as Table \ref{tab:sgd} shows. Both techniques reduce the complexity of updating $\mb{d}^{(t)}$.  Option I has better memory cost while Option II has better calculation time. \fig{sgd} shows the objective values versus training time.

\begin{table}[]
\centering
\begin{tabular}{|c|c|c|c|c|c|}
\hline
\multirow{2}{*}{Schemes}  & \multicolumn{4}{c|}{Average single-step complexity (seconds)}  & \multirow{2}{*}{\begin{tabular}[c]{@{}c@{}}Memory \\ Usage (MB)\end{tabular}} \\ \cline{2-5}
                                   & CBPDN & FFT/IFFT & Update $\mb{d}^{(t)}$  & Total &                                                                               \\ \hline
Spatial (dense matrix)                & 14.8  & 0        & 1.978                 & 16.8 &                                                                              2346.44 \\ \hline
Spatial (sparse matrix)                          & 14.8  & 0        & 0.241                 & 15.1  &                                                                              111.38 \\ \hline
Frequency domain                  & 14.8  & 0.047    & 0.025                 & 14.9  &                                                                              154.84 \\ \hline
\end{tabular}
\caption{Comparison between different options of Algorithm \ref{algo:sgd}. $\lambda=0.1$, average density of $X$: 0.0037. This is the validation of Table \ref{tab:summary}.}
\label{tab:sgd}
\end{table}

\begin{figure}[t]
\centering \small
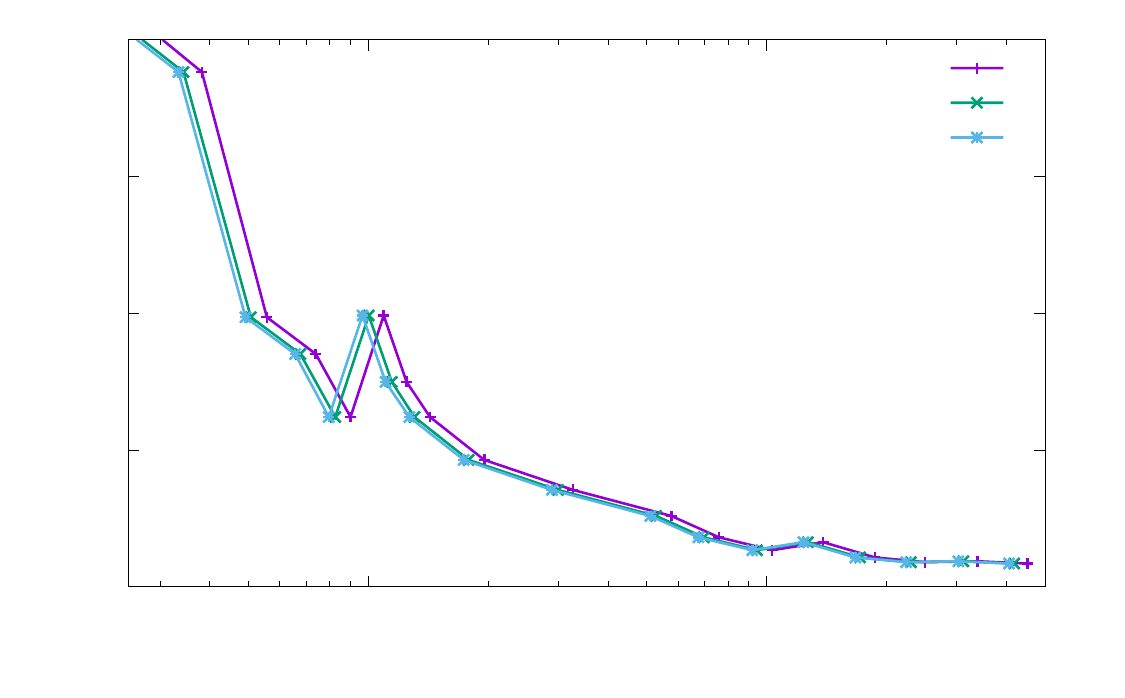
\caption{Different options of Algorithm \ref{algo:sgd}. Frequency-domain update (Option II) performs the best.}
\label{fig:sgd}
\end{figure}

\subsection{Validation of Algorithm \ref{algo:surro-splitting}}

For Algorithm \ref{algo:surro-splitting}, we test the four techniques separately: the forgetting exponent $p$, image splitting with size $\tilde{N}$, and stopping tolerance of FISTA $\tau^{(t)}$, and computational techniques (sparsity or frequency-domain update).

\subsubsection{Validation of Improvement I: forgetting exponent $p$}
\label{sec:compare_p}

\begin{figure}[t]
\centering \small
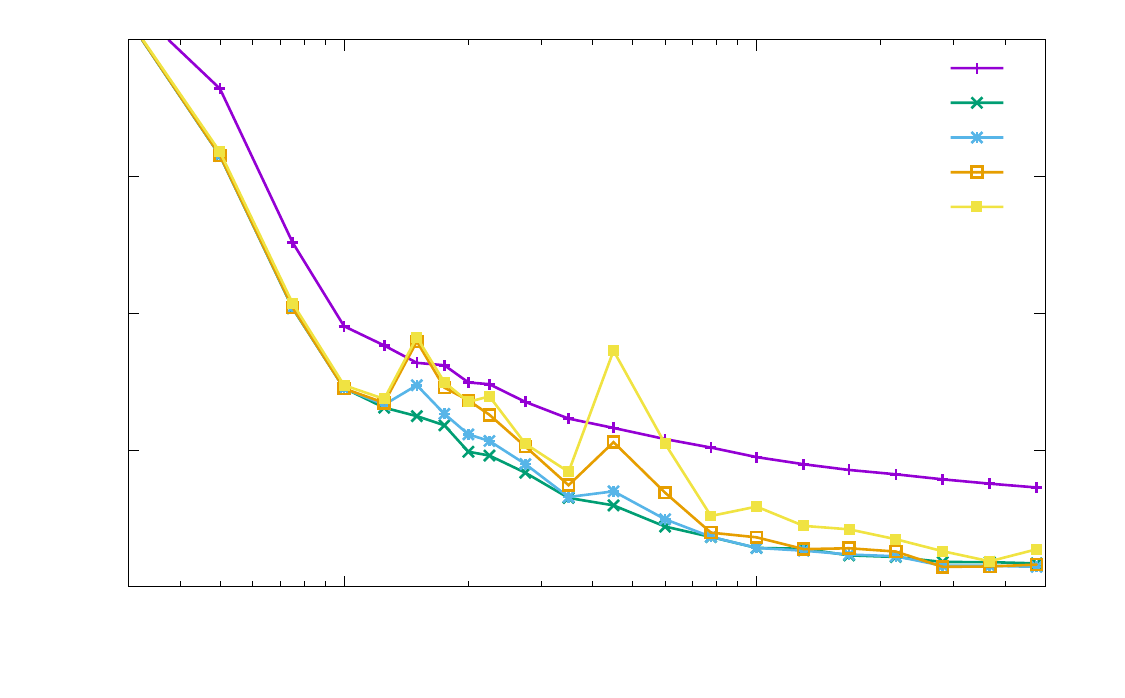
\caption{Effect of Technique I (forgetting exponent $p$) in Algorithm \ref{algo:surro-splitting}. A small $p$ leads to a higher functional value
while a large $p$ leads to instability. $p=10$ is a good choice.}
\label{fig:compare_p}
\end{figure}

In this section, we fix $\tilde{N}=256\times256$ (no splitting) and $\tau^{(t)} = 10^{-4}$, which is small enough to give an accurate solution.  \fig{compare_p} shows that, when $p=0$, the curve is monotonic and with small oscillation, but it converges to a higher functional value. When $p$ is larger, the algorithm converges to lower functional values. When $p$ is too large, for instance, $p \in \{40, 80\}$, the curve oscillates severely, which indicates large variance. These results are consistent with Propositions \ref{lemma:weight_clt} and \ref{lemma:weight}.  In the remaining sections we fix $p=10$ since it is shown to be a good choice.

\subsubsection{Validation of Improvement II: image splitting with size $\tilde{N}$ and boundary artifacts}
\label{sec:compare_n}

\begin{figure}[t]
\centering \small
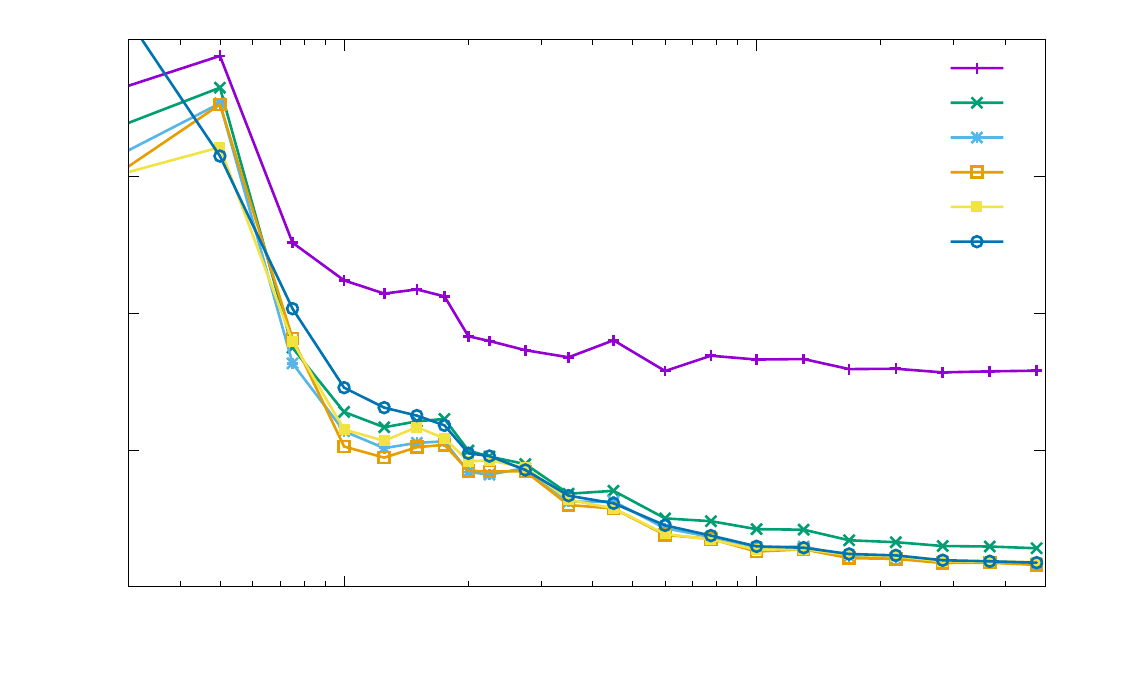
\caption{Effect of the Technique II (image-splitting with size $\tilde{N}$) in Algorithm \ref{algo:surro-splitting}. A learning epoch is one pass through the whole training set. Boundary artifacts become significant when the splitting region size is smaller than twice the dictionary kernel size. Here the kernel size is $12 \times 12$ so the threshold is $24\times24$.}
\label{fig:compare_n}
\end{figure}

\begin{figure}[t]
\centering \small
\begin{tabular}{ccc}
\hspace{-15mm}
\subfigure[][\parbox{0.21\textwidth}{Dictionaries learned by $\tilde{N}=12\times12$: some incomplete features.}]{
       \includegraphics[width=0.43\textwidth]{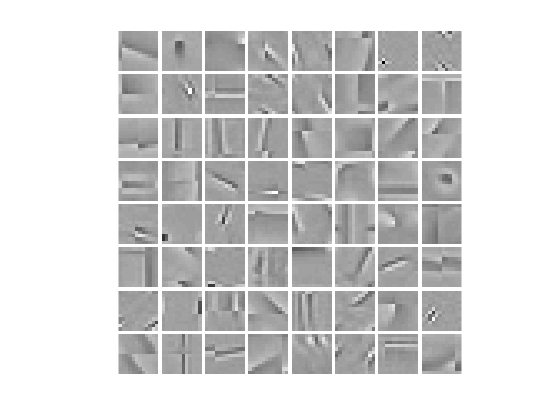}\label{fig:n12}}
&
\hspace{-15mm}
\subfigure[][\parbox{0.21\textwidth}{Dictionaries learned by $\tilde{N}=64\times64$.}]{
       \includegraphics[width=0.43\textwidth]{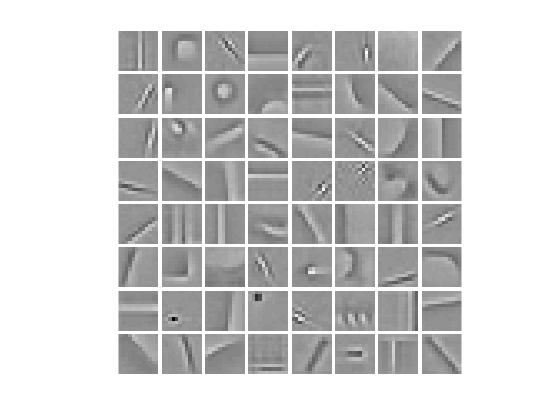}\label{fig:n64}}
&
\hspace{-15mm}
\subfigure[][\parbox{0.21\textwidth}{Dictionaries learned by $\tilde{N}=256\times256$ (no splitting).}]{
       \includegraphics[width=0.43\textwidth]{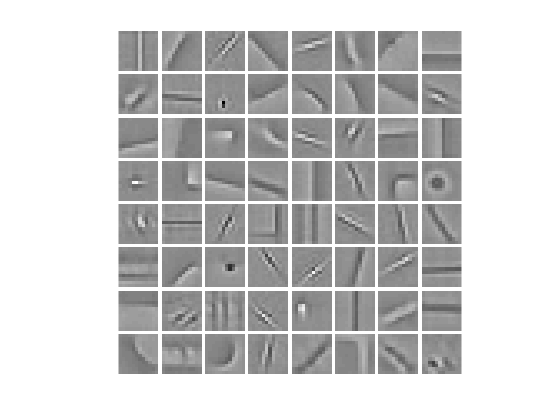}\label{fig:n256}}
\end{tabular}
\caption{Effect of Technique II (image-splitting with size $\tilde{N})$ in Algorithm \ref{algo:surro-splitting}: visualization of boundary artifacts.}
\label{fig:dics-n}
\end{figure}

In this section, we again fix $\tau^{(t)} = 10^{-4}$. Convergence comparisons are shown in~\fig{compare_n}, and the dictionaries obtained with different $\tilde{N}$ are displayed \fig{dics-n}. In our experiments, we only consider square signals ($\tilde{N} = 12\times12, 16\times16, 32\times32, 64\times64, 256\times256$) and square dictionary kernels ($L = 12\times 12$).  When $\tilde{N}\geq2^2L$, say $\tilde{N}=32\times32$ or $\tilde{N}=64\times64$, the algorithm converges to a good functional value, which is the same as that without image-splitting. However, when $\tilde{N}$ is smaller than the threshold $2^2L$, say $\tilde{N}=16\times16$ or $12\times12$, the algorithm converges to a higher functional value, which implies worse dictionaries. Thus, we can conclude that \emph{the splitting size should be at least twice the dictionary kernel size in each dimension.  Otherwise, it will lead to boundary artifacts}. This phenomenon is consistent with the discussion in Section \ref{sec:region-sample}. The artifacts are specifically displayed in \fig{dics-n}. When $\tilde{N}$ is smaller than the threshold, say $12\times12$, the features learned are incomplete.

This section only studies the effect, due to boundary artifacts, of image-splitting on objective functional values. As Table \ref{tab:surro} shows, it also helps reducing computing time and memory cost, which is numerically validated in Section \ref{sec:acc}.

\subsubsection{Validation of Improvement III: stopping tolerance of FISTA $\tau^{(t)}$}
\label{sec:fix-vs-diminish}

\begin{figure}[t]
\centering
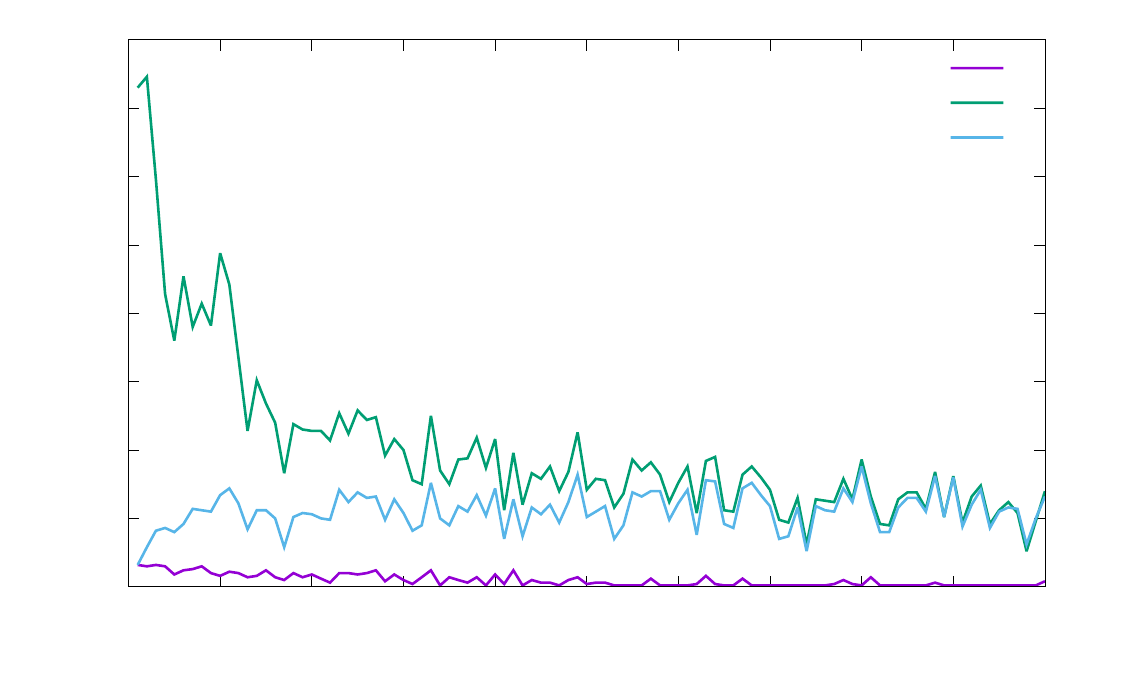
\caption{Effect of Technique III (stopping FISTA early) in Algorithm \ref{algo:surro-splitting}. Final objective with $\tau^{(t)}=10^{-4}$: $101.1$. Final objective with $\tau^{(t)}=10^{-2}$: $104.4$. Final objective with $\tau^{(t)}=0.01/t$: $101.3$, where $t$ is the iteration index with range $1\leq t \leq 100$. Our diminishing tolerance $\tau^{(t)} = 0.01/t$ provides a useful balance in that it reduces the number of FISTA iterations while losing little accuracy on the final functional value.
}
\label{fig:fix_vs_tol}
\end{figure}

In this section, we fix $p=10, \tilde{N} = 256\times256$ (no splitting).
\fig{fix_vs_tol} shows the effect of using different $\tau^{(t)}$. Using a small stopping tolerance $\tau^{(t)}=10^{-4}$ leads to a good functional value $101.1$ but large number of FISTA iterations, while a large tolerance $10^{-2}$ leads to a large functional value $104.4$ and small number of FISTA iterations. Consider our proposed diminishing tolerance rule (\ref{eqn:stop_condition}) $\tau^{(t)} = 0.01/t$. When the algorithm starts, $t=1$, we have $\tau^{(1)} = 10^{-2}$. At the end of the algorithm, $t=100$, $\tau^{(100)} = 10^{-4}$. Based on the results in \fig{fix_vs_tol}, our diminishing tolerance avoids large number of FISTA loops, especially at the initial steps, while losing little accuracy, as the final objective, $101.3$ is close to $101.1$.

\subsubsection{Validation of Improvement IV: computational techniques}
\label{sec:acc}

In this section, we fix $p=10,\tau^{(t)}=0.01/t$, and compare the calculation time and memory usage of spatial-domain update and frequency-domain update.  Table \ref{tab:spatial_vs_freq} illustrates that image-splitting helps reduce the single-step complexity and memory usage for both Option I (spatial-domain update) and Option II (frequency-domain update). For option II, the advantage of smaller splitting size $\tilde{N}$ is more significant than that of option I. When $\tilde{N}=256\times256$, option I is much better than option II; but when $\tilde{N}=64\times64$, the single step time of option II is comparable with that of option I. The reason for this is that, for option I, reducing $\tilde{N}$ only helps reduce the single-step time cost of CBPDN, updating Hessian matrix $A^{(t)}$ and the loops of FISTA, but does not help reduce the time cost of single-step time cost in FISTA. However, for option II, image-splitting not only reduces those three complexities, but also reduces the single-step complexity of FISTA. Furthermore, option II uses much less memory than option I when $\tilde{N}=64\times64$.

\begin{table}[]
\centering
\begin{tabular}{|c|c|c|c|c|r|}
\hline
\multirow{2}{*}{$\tilde{N}$} & \multicolumn{4}{c|}{Average single-step complexity (seconds)}                                                         & \multirow{2}{*}{\begin{tabular}[c]{@{}c@{}}Memory \\ Usage (MB)\end{tabular}} \\ \cline{2-5}
                   & CBPDN & Update $A^{(t)}$ & \begin{tabular}[c]{@{}c@{}}FISTA\\  (Loops $\times$ Single step)\end{tabular} & Total &                                  \\ \hline\hline
\multicolumn{6}{|c|}{Update in the spatial domain with dense matrix}     \\\hline
$256\times256$       & 14.8  & 25.1  & 57 $\times$ 0.017                            & 40.9 &                     3058.56           \\ \hline
$128\times128$       & 3.42  & 6.80  & 37 $\times$ 0.017                            & 10.8  &                       1258.37           \\ \hline
$64\times64$        & 1.05  & 2.25  & 24 $\times$ 0.017                            & 3.71 &                      808.32            \\ \hline\hline
\multicolumn{6}{|c|}{(Option I) Update in the spatial domain with sparse matrix}     \\\hline
$256\times256$       & 14.8  & 4.47  & 57 $\times$ 0.017                            & 20.3  &                     486.91             \\ \hline
$128\times128$       & 3.42  & 1.77  & 37 $\times$ 0.017                            & 5.82  &                       366.51           \\ \hline
$64\times64$        & 1.05  & 0.84  & 24 $\times$ 0.017                            & 2.30  &                      342.90            \\ \hline\hline
\multicolumn{6}{|c|}{(Option II) Update in the frequency domain (including extra time caused by FFT)}   \\\hline
$256\times256$       & 14.8  & 0.89  & 57 $\times$ 1.068                            & 76.6 &                2458.84                  \\ \hline
$128\times128$       & 3.42  & 0.22  & 37 $\times$ 0.244                            & 12.7 &                 622.28                 \\ \hline
$64\times64$        & 1.05  & 0.06  & 24 $\times$ 0.072                            & 2.84  &                    158.11              \\ \hline
\end{tabular}
\label{tab:spatial_vs_freq}
\caption{Comparison of two options in Algorithm \ref{algo:surro-splitting} with different splitting size $\tilde{N}$. $\lambda=0.1$, average density of $X$: 0.0037. This is the validation of Table \ref{tab:surro}. When $\tilde{N}=64\times64$, the two options share similar performance: Option I is better on time cost and Option II is better on memory cost. Image-splitting is necessary for Option II, but not necessary for Option I.}
\end{table}

\fig{sparse_n} and \fig{freq_n} compare the objective functional value versus time. \fig{sparse_n} indicates that reducing $\tilde{N}$ does \emph{not} help a lot for Option I. Table \ref{tab:spatial_vs_freq} shows that smaller $\tilde{N}$ reduces the single step complexity, but it also reduces the gain in each step because a smaller splitting size leads to less information used for training. This is a trade-off. By \fig{sparse_n}, $\tilde{N}=128\times128$ is a good choice.

Option II, in contrast, benefits more from smaller $\tilde{N}$, as can be seen from \fig{freq_n} and Table \ref{tab:spatial_vs_freq}. Although splitting a training image reduces the gain in each step, the benefit overwhelms the loss. Thus, for Option II, the smaller the splitting size the better, as long as $\tilde{N}$ is larger than the threshold for boundary artifacts.

\begin{figure}[t]
\centering \small
\begin{tabular}{cc}
\subfigure[][\parbox{5.6cm}{Algorithm \ref{algo:surro-splitting} Option I.  $\tilde{N}=128\!\times\!128$ is a good choice.}]{
       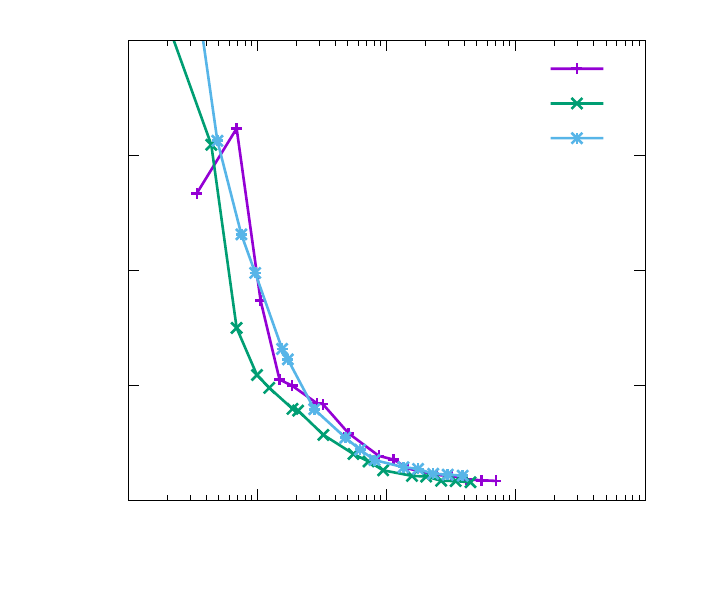
\label{fig:sparse_n}}
&
\hspace{-11mm} \subfigure[][\parbox{5.6cm}{Algorithm \ref{algo:surro-splitting} Option II. $\tilde{N}=64\times64$ converges fast.}]{
       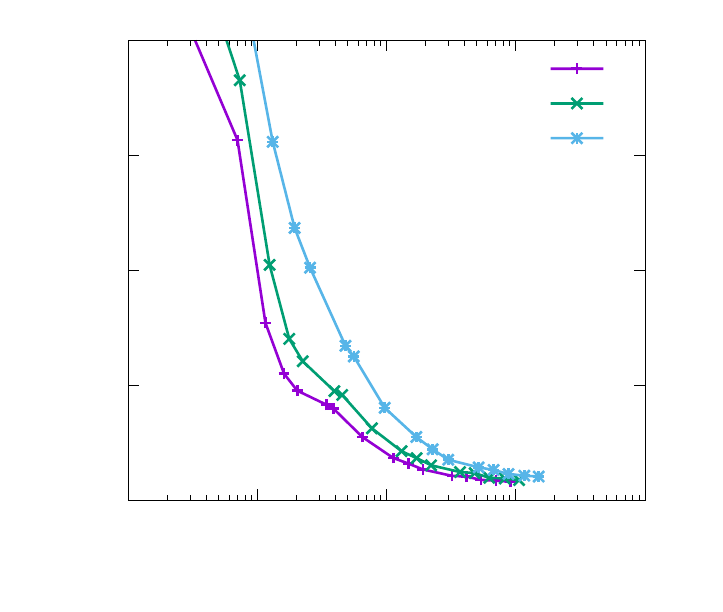
\label{fig:freq_n}}
\end{tabular}
\caption{Effect of splitting region size $\tilde{N}$ on different options.}
\label{fig:suro_n_speed}
\end{figure}

\subsection{Main result I: convergence speed}
\label{sec:compare_methods}

\begin{figure}[t]
\centering \small
\begin{tabular}{cc}
\subfigure[][\parbox{5.6cm}{Methods with spatial-domain dictionary update scheme. Online algorithms, both Algorithm \ref{algo:sgd} and \ref{algo:surro-splitting}, outperform batch method (Papyan et al. \cite{papyan2017convolutional}), Algorithm \ref{algo:surro-splitting} performs the best.}]{
  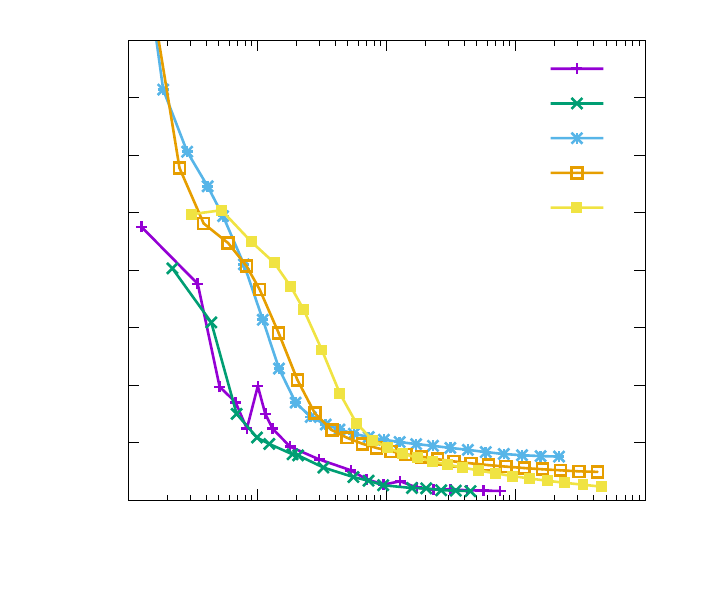
\label{fig:methods_spatial}}
&
\hspace{-11mm} \subfigure[][\parbox{5.6cm}{Methods with frequency-domain dictionary update scheme.  In this plot, ``Prev. Online'' refers to our algorithm ``Online-Samp'' proposed in~\cite{liu-2017-online}. Both the online algorithms converge faster than batch methods (ADMM consensus dictionary update~\cite{sorel-2016-fast, garcia-2017-subproblem}).}
]{
 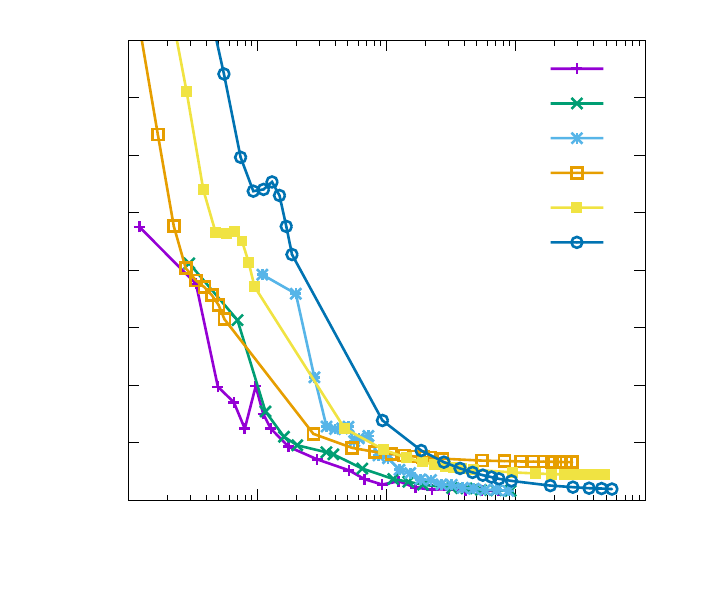
\label{fig:methods_freq}}
\end{tabular}
\caption{Main Result I: convergence speed comparison between online algorithms and batch algorithms.}
\label{fig:compare_methods}
\end{figure}

\begin{figure}[t]
\centering \small
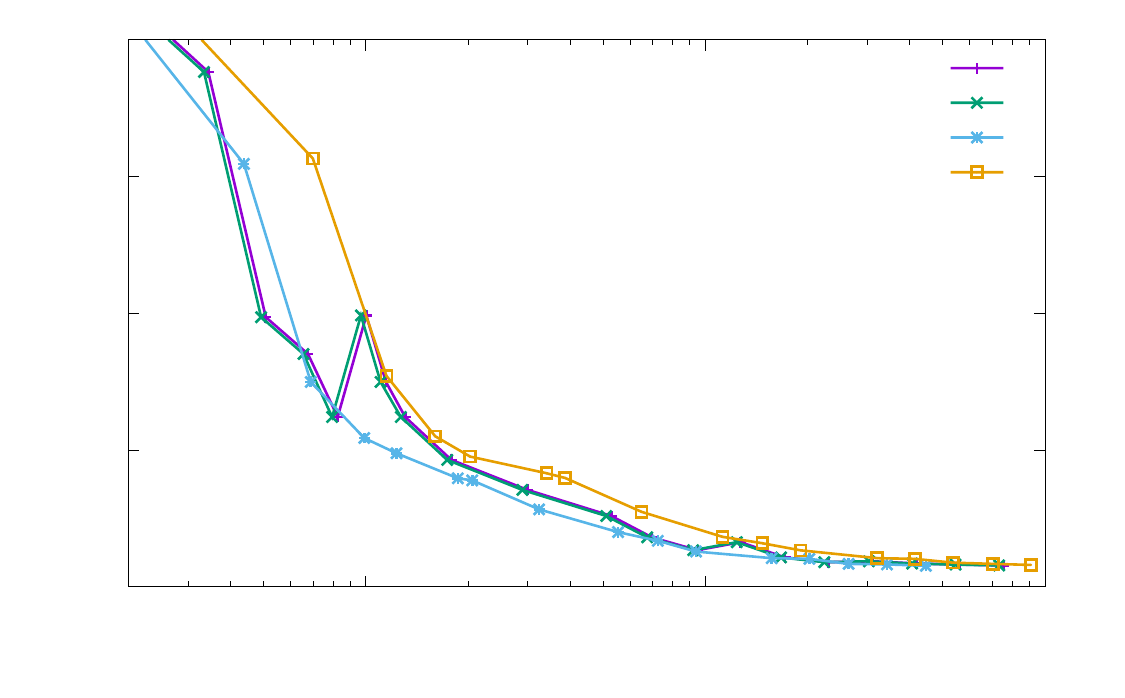
\caption{Main Result I: convergence speed comparison between online methods in this article.}
\label{fig:methods_online}
\end{figure}

In this section, we study the convergence speeds of all the methods on the clean data set, without a masking operator. We compare our methods with two leading batch learning algorithms: the method of Papyan et al. \cite{papyan2017convolutional}, which uses $K$-SVD and updates the dictionary in the spatial domain, and an algorithm~\cite{garcia-2017-subproblem} that uses the ADMM consensus dictionary update~\cite{sorel-2016-fast}, which is computed in the frequency domain. For batch learning algorithms, we test on subsets of 10, 20, and 40 images selected from the training set. For online learning algorithms, since they are scalable in the size of the training set, we just test our methods on the whole training set of 40 images.  All the parameters are tuned as follows.
For batch learning algorithm (Papyan et al.), we use the software they released, and for batch learning algorithm (ADMM consensus update), we use the ``adaptive penalty parameter'' scheme in \cite{wohlberg2017admm}.
For modified SGD (Algorithm \ref{algo:sgd}), we use the step size of $10/(5+t)$. For Surrogate-Splitting (Algorithm \ref{algo:surro-splitting}), we use $p=10, \tau^{(t)}=0.01/t, \tilde{N}=128\times128 $ for spatial-domain update, $\tilde{N}=64\times64$ for frequency-domain update, as we tuned in the previous sections.  For our algorithm proposed in~\cite{liu-2017-online}, we use $p=10, \tau^{(t)}=10^{-3}, \tilde{N}=64\times64$.

The performance comparison of batch and online methods is presented in~\fig{compare_methods}. The advantage of online learning is significant (note
 that the time axis is logarithmically scaled). To obtain the same functional value $101$ on the test set, batch learning takes 15 hours, our previous method~\cite{liu-2017-online} takes around 1.5 hours, Algorithm \ref{algo:surro-splitting} with option II takes around 1 hour, and Algorithm \ref{algo:sgd} and Algorithm \ref{algo:surro-splitting} with option I takes less than 1 hour. We can conclude that, both modified SGD (Algorithm \ref{algo:sgd}) and Surrogate-Splitting (Algorithm \ref{algo:surro-splitting}) converge faster than the batch learning algorithms and our previous method.

\subsection{Main result II: memory usage}

As Table \ref{tab:memory} shows, both Algorithm \ref{algo:sgd} and \ref{algo:surro-splitting} save a large amount of memory.
\begin{table}[t]
\centering
\begin{tabular}{|c|r|}
\hline
Scheme & Memory (MB) \\
\hline \hline
Batch learning (consensus update, batch $K=10$) & 1959.58\\
\hline
Batch learning (consensus update, batch $K=20$)& 3887.08\\
\hline
Batch learning  (consensus update, batch $K=40$)& 7742.08 \\
\hline
Batch learning  (Papyan et al. \cite{papyan2017convolutional}, batch $K=10$) & 1802.29\\
\hline
Batch learning  (Papyan et al. \cite{papyan2017convolutional}, batch $K=20$)& 3390.24\\
\hline
Batch learning  (Papyan et al. \cite{papyan2017convolutional}, batch $K=40$)& 6566.15 \\
\hline
Our algorithm ``Online-Samp'' in~\cite{liu-2017-online} & 158.11 \\
\hline
Algorithm \ref{algo:sgd} Option I (sgd-spatial) & 111.38 \\
\hline
Algorithm \ref{algo:sgd} Option II (sgd-frequency) & 154.84 \\
\hline
Algorithm \ref{algo:surro-splitting} Option I (surro-spatial) & 342.90 \\
\hline
Algorithm \ref{algo:surro-splitting} Option II (surro-frequency) & 158.11 \\
\hline
\end{tabular}
\caption{Main Result II: Memory Usage Comparison in Megabytes.
}
\label{tab:memory}
\end{table}

\subsection{Main result III: dictionaries obtained by different algorithms}

In \fig{dics} we display the dictionaries obtained by the algorithms in Section \ref{sec:compare_methods}. A small training set, say 10 images, leads to some random kernels in the dictionaries. A training set containing 40 images works much better. Our algorithms can learn comparable dictionaries (see \fig{dic40}, \fign{dic-surro}, and \fign{dic-sgd}) within much less time (see \fig{compare_methods}) and much less memory usage (see \tbl{memory}).
\begin{figure}
\centering
\begin{tabular}{ccc}
\hspace{-12mm}
\subfigure[][\parbox{0.25\textwidth}{Dictionaries learned by batch learning algorithm (consensus update, 10 training images): many ``random" kernels.}]{
\includegraphics[width=0.42\textwidth]{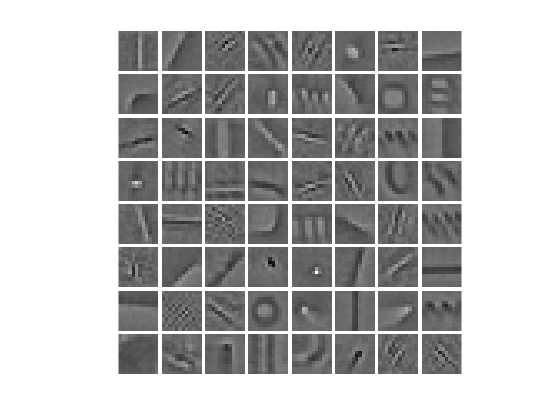}\label{fig:dic10}}
&
\hspace{-15mm}
\subfigure[][\parbox{0.25\textwidth}{Dictionaries learned by batch learning algorithm (consensus update, 20 training images): less ``random" kernels, more valid features.}]{
\includegraphics[width=0.42\textwidth]{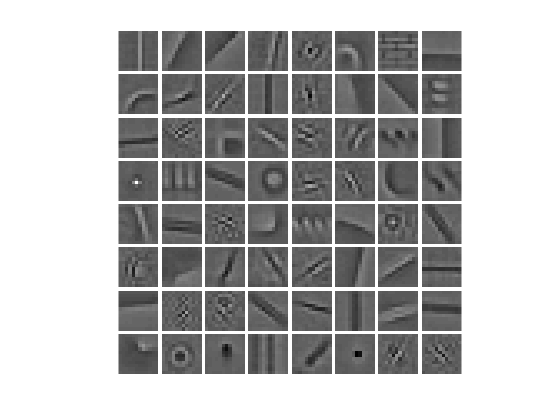}\label{fig:dic20}}
&
\hspace{-15mm}
\subfigure[][\parbox{0.25\textwidth}{Dictionaries learned by batch learning algorithm (consensus update, 40 training images): almost all kernels are valid.}]{
\includegraphics[width=0.42\textwidth]{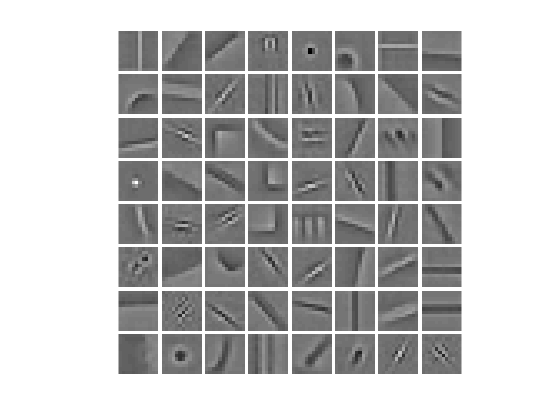}\label{fig:dic40}}
\\
\hspace{-12mm}
\subfigure[][\parbox{0.25\textwidth}{Dictionaries learned by batch learning algorithm (Papyan et al. \cite{papyan2017convolutional}, 10 training images): many ``random" kernels.}]{
\includegraphics[width=0.42\textwidth]{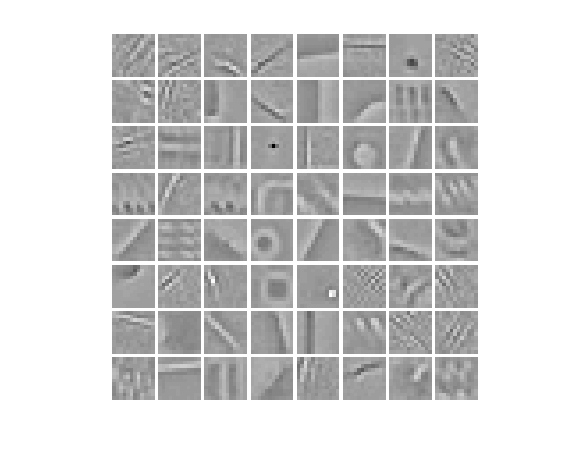}\label{fig:papyan10}}
&
\hspace{-15mm}
\subfigure[][\parbox{0.25\textwidth}{Dictionaries learned by batch learning algorithm (Papyan et al. \cite{papyan2017convolutional}, 20 training images): less ``random" kernels, more valid features.}]{
\includegraphics[width=0.42\textwidth]{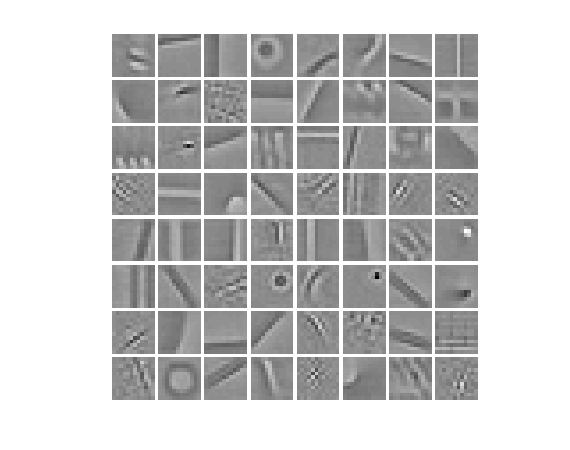}\label{fig:papyan20}}
&
\hspace{-15mm}
\subfigure[][\parbox{0.25\textwidth}{Dictionaries learned by batch learning algorithm (Papyan et al. \cite{papyan2017convolutional}, 40 training images): almost all kernels are valid.}]{
\includegraphics[width=0.42\textwidth]{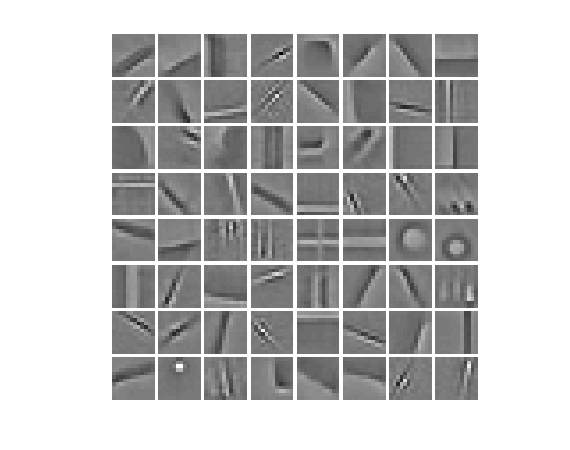}\label{fig:papyan40}}
\\
\hspace{-12mm}
\subfigure[][\parbox{0.22\textwidth}{Dictionaries learned by \cite{liu-2017-online}, almost all kernels are valid.}]{
\includegraphics[width=0.42\textwidth]{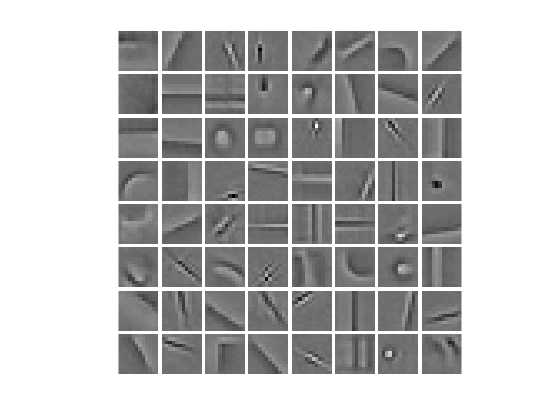}\label{fig:dic-icip}}
&
\hspace{-15mm}
\subfigure[][\parbox{0.22\textwidth}{Dictionaries learned by Algorithm \ref{algo:sgd}, almost all kernels are valid.}]{
\includegraphics[width=0.42\textwidth]{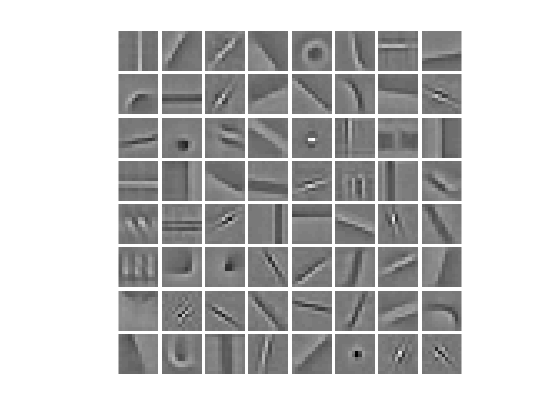}\label{fig:dic-sgd}}
&
\hspace{-15mm}
\subfigure[][\parbox{0.22\textwidth}{Dictionaries learned by Algorithm \ref{algo:surro-splitting}, almost all kernels are valid.}]{
\includegraphics[width=0.42\textwidth]{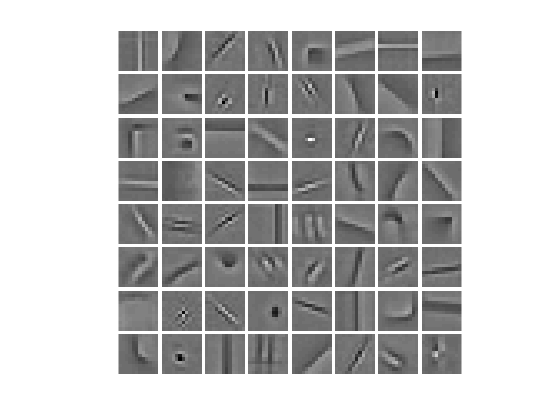}\label{fig:dic-surro}}
\end{tabular}
\caption{Main result III: A comparison of dictionaries learned using different algorithms.
}
\label{fig:dics}
\end{figure}

\section{Numerical results: learning from the noisy data set}\label{sec:msk_reslt}

\begin{figure}
\centering
\begin{tabular}{ccc}
\hspace{-12mm}
\subfigure[][\parbox{0.28\textwidth}{One of the training images. (10\% pixels corrupted (salt-and-pepper noise))}]{
\includegraphics[width=0.42\textwidth]{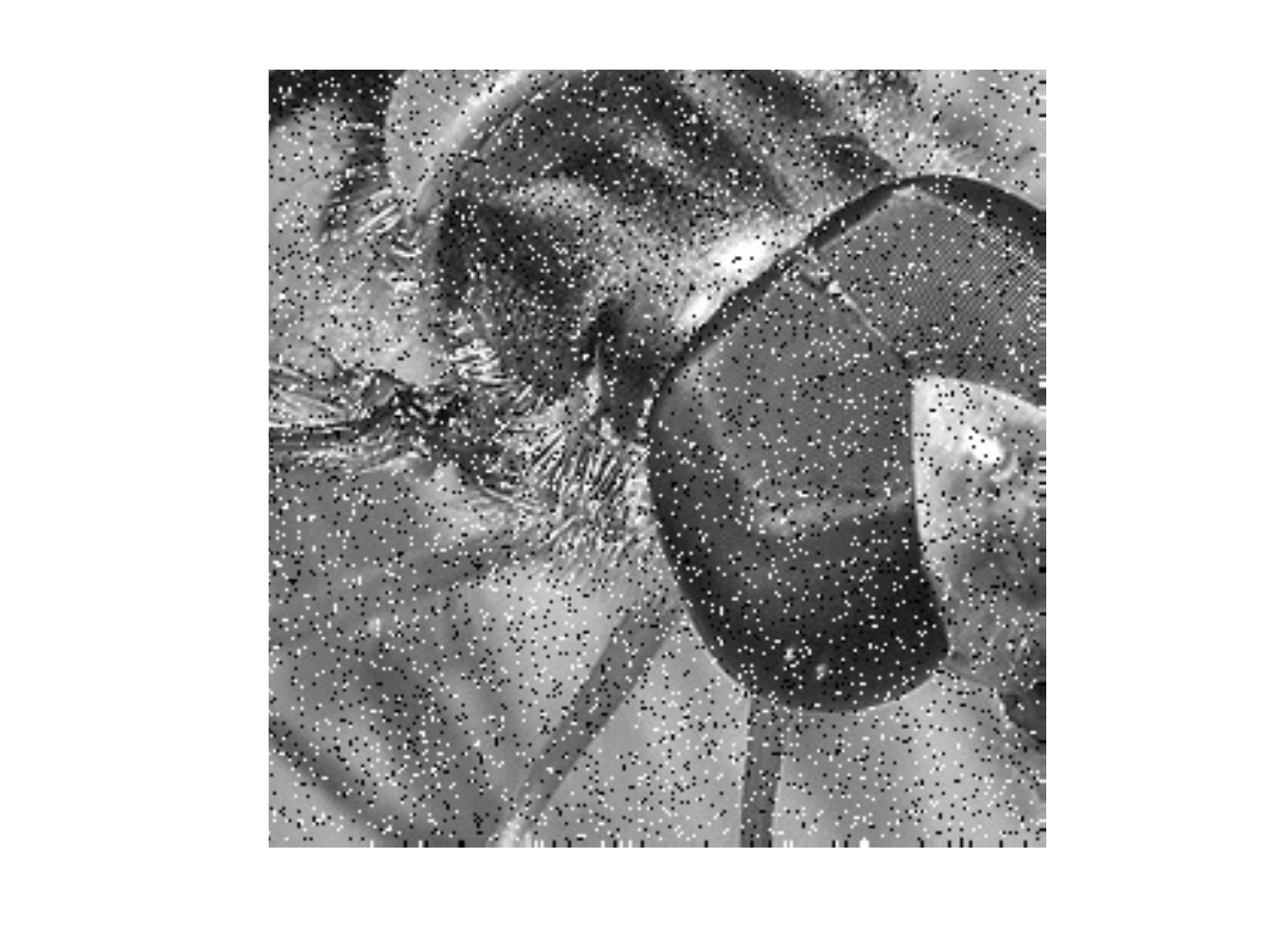}\label{fig:pic01}}
&
\hspace{-15mm}
\subfigure[][\parbox{0.22\textwidth}{Learning with normal Algorithm \ref{algo:sgd}: some features learned.}]{
\includegraphics[width=0.42\textwidth]{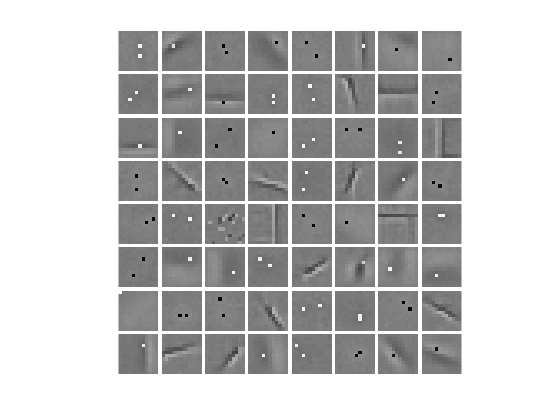}\label{fig:dic01}}
&
\hspace{-15mm}
\subfigure[][\parbox{0.28\textwidth}{Learning with Algorithm \ref{algo:sgd} on masked loss function: clean features learned.}]{
 \includegraphics[width=0.42\textwidth]{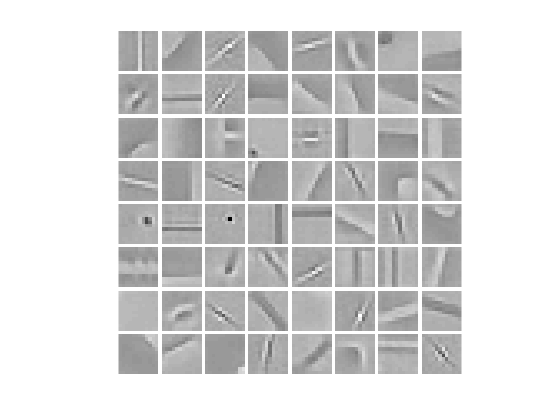}\label{fig:dicms01}}
\\
\hspace{-12mm}
\subfigure[][\parbox{0.28\textwidth}{One of the training images. (20\% pixels corrupted (salt-and-pepper noise))}]{
 \includegraphics[width=0.42\textwidth]{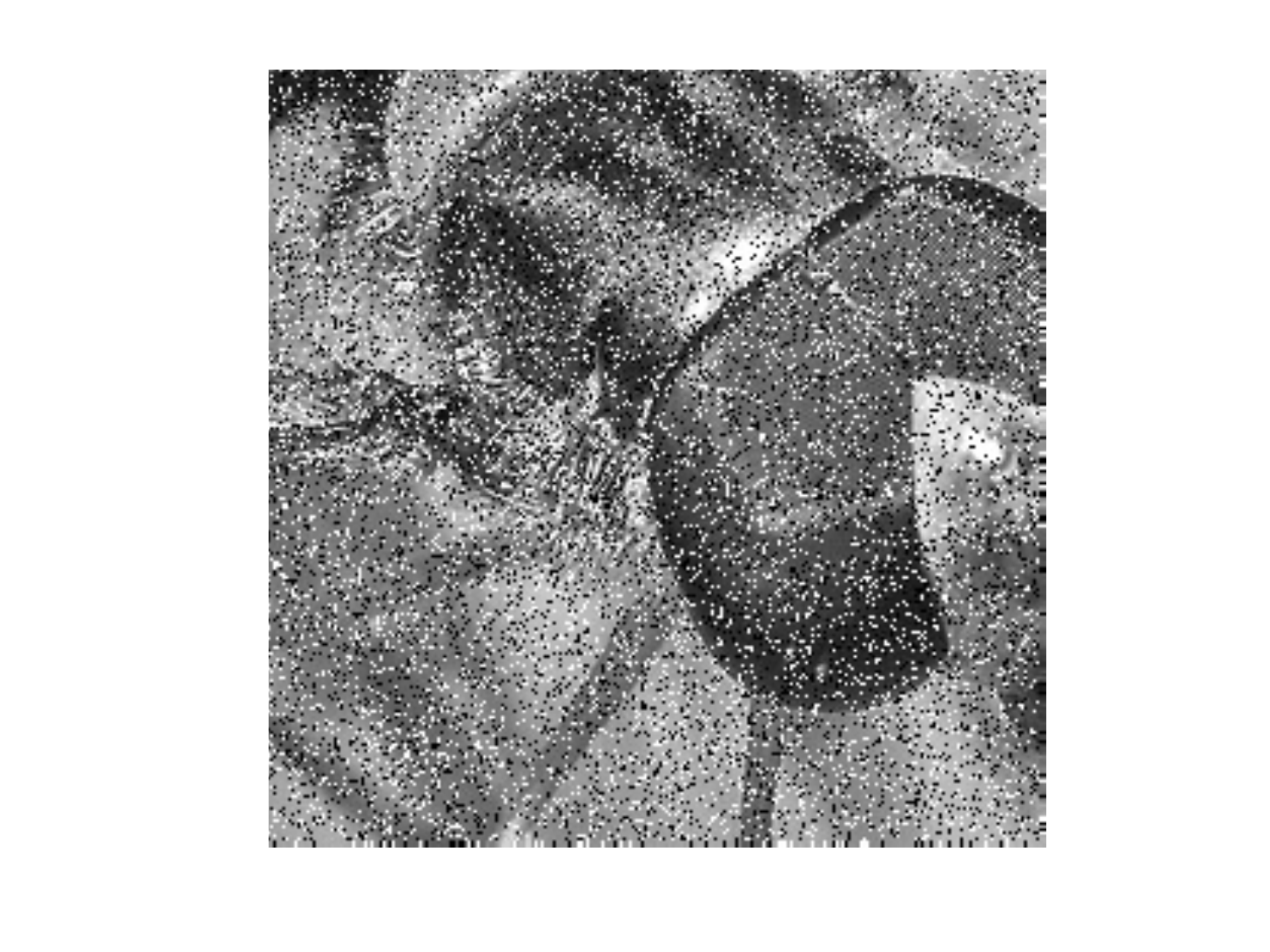}\label{fig:pic05}}
&
\hspace{-15mm}
\subfigure[][\parbox{0.22\textwidth}{Learning with normal Algorithm \ref{algo:sgd}: few features learned.}]{
\includegraphics[width=0.42\textwidth]{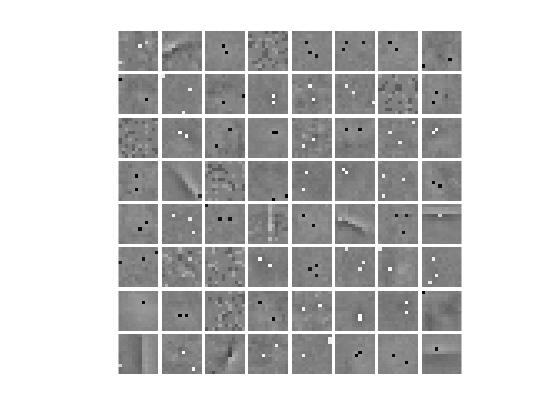}\label{fig:dic05}}
&
\hspace{-15mm}
\subfigure[][\parbox{0.28\textwidth}{Learning with Algorithm \ref{algo:sgd} on masked loss function: clean features learned.}]{
\includegraphics[width=0.42\textwidth]{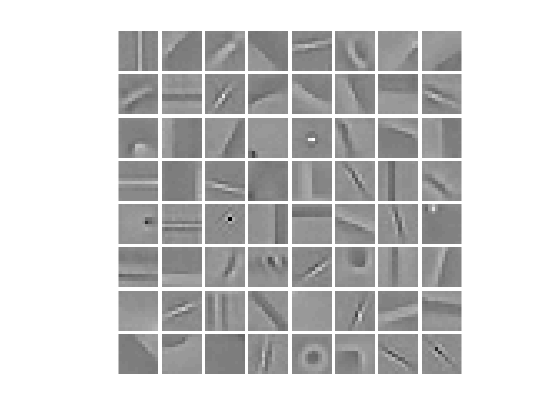}\label{fig:dicms05}}
\\
\hspace{-12mm}
\subfigure[][\parbox{0.28\textwidth}{One of the training images. (30\% pixels corrupted (salt-and-pepper noise))}]{
\includegraphics[width=0.42\textwidth]{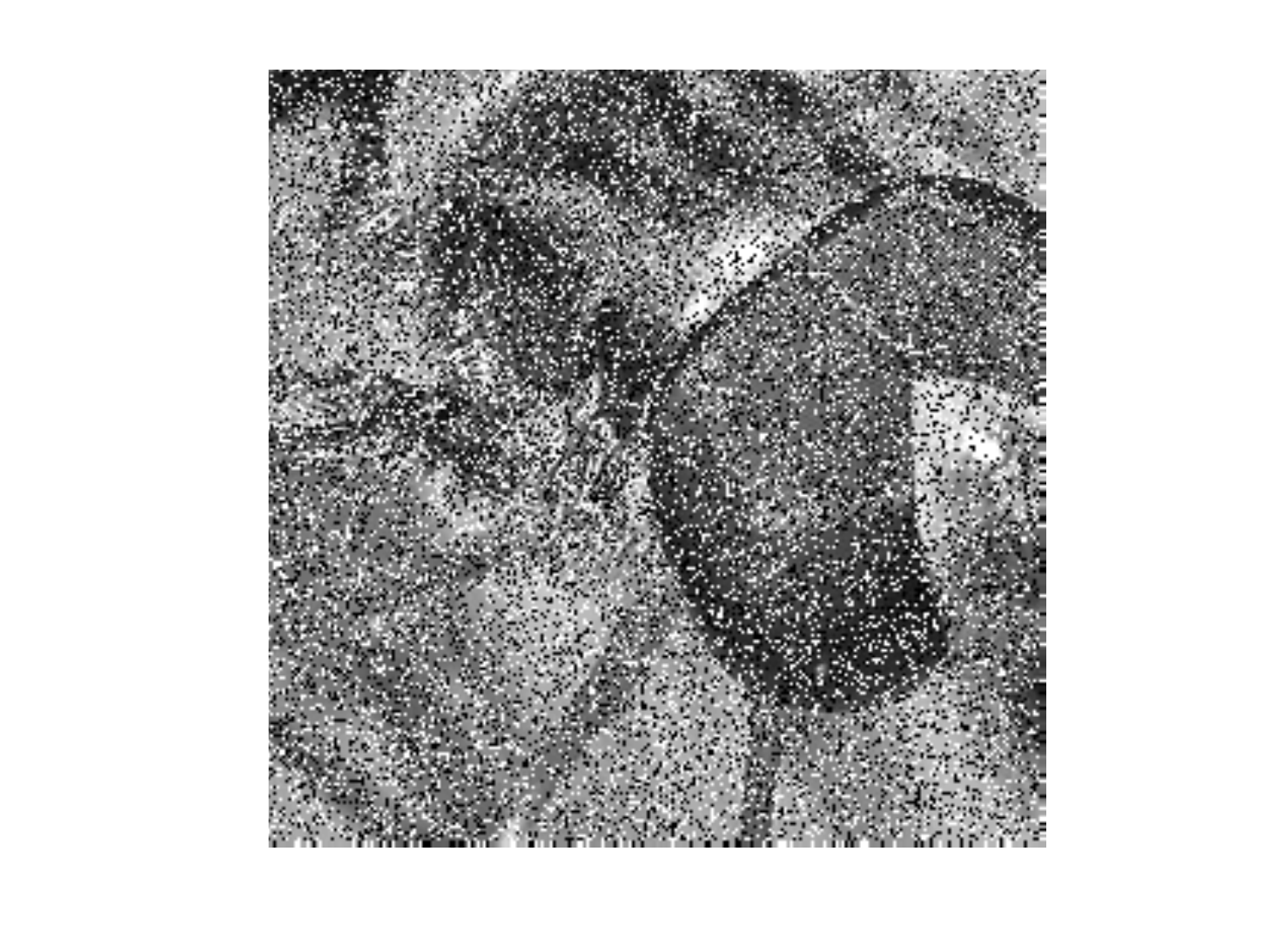}\label{fig:pic1}}
&
\hspace{-15mm}
\subfigure[][\parbox{0.22\textwidth}{Learning with normal Algorithm \ref{algo:sgd}: almost no valid features.}]{
 \includegraphics[width=0.42\textwidth]{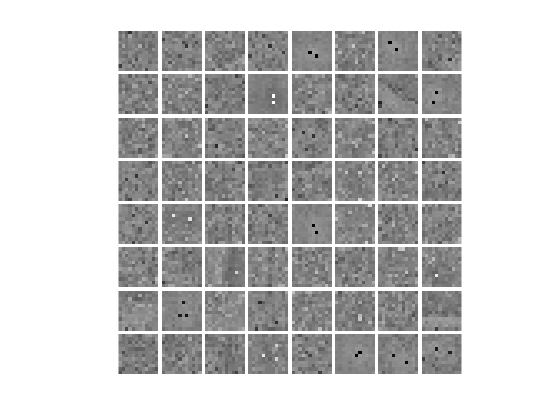}\label{fig:dic1}}
&
\hspace{-15mm}
\subfigure[][\parbox{0.28\textwidth}{Learning with Algorithm \ref{algo:sgd} on masked loss function: clean features learned.}]{
 \includegraphics[width=0.42\textwidth]{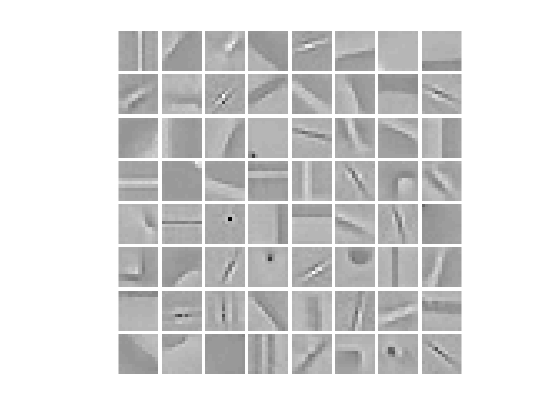}\label{fig:dicms1}}
\end{tabular}
\caption{Learning from the noisy training set.}
\label{fig:dics-ms}
\end{figure}

In this section, we try to learn dictionaries from noisy images. We test the algorithms on the training set with \emph{salt-and-pepper} impulse noise at known pixel locations.  We apply the noise to $10\%, 20\%$, and $30\%$ of the pixels, as \fig{dics-ms} shows.  We use the data set with 40 training images and 20 testing images with uniform size $256\times256$, which are the same with those in Section \ref{sec:rslt}. All the images are pre-processed by applying a highpass filter computed as the difference between the input and a non-linear lowpass filter\footnote{The lowpass component was computed by $\ell_2$ total-variation denoising~\cite{rudin-1992-nonlinear} of the input image with a spatial mask informed by the known locations of corrupted pixels in the data fidelity term.}.  When the number of noisy pixels is low, say $10\%$, SGD without masking (Algorithm \ref{algo:sgd}) still can learn some features, as \fig{dic01} demonstrates. When the number of noisy pixels is significant, say $30\%$, SGD without masking ``learns'' nothing valid, as \fig{dic1} demonstrates. However, SGD with masking technique works much better because it ``ignores'' the noisy pixels.

\subsection{Masked CDL: online algorithms vs batch algorithm}

We compare our algorithms with masked loss function and a batch dictionary learning algorithm\footnote{We used the implementation \texttt{cbpdndlmd.m} from the Matlab version of the SPORCO library~\cite{wohlberg-2016-sporco}, with the Iterated Sherman-Morrison dictionary update solver option~\cite{wohlberg-2016-efficient}.}~\cite{wohlberg-2016-boundary} incorporating the mask via the mask decoupling technique~\cite{heide-2015-fast}.  We use Additive Mask Simulation (AMS)~\cite{wohlberg-2016-boundary} to solve sparse coding step with the masked objective function.  The parameters for Algorithms \ref{algo:sgd} and \ref{algo:surro-splitting} are chosen similarly as those in Section \ref{sec:rslt}. For Algorithm \ref{algo:sgd}, we choose $\eta^{(t)} = 10/(5+t)$ and Option I. For Algorithm \ref{algo:surro-splitting}, we choose $p=10,\tilde{N}=128^2,\tau^{(t)} = 0.01/(10+t)$ and Option I.  A comparison of functional values on the \emph{noise-free} test set is shown in \fig{methods_ms}.  Our online algorithms converge much faster and more stably. Algorithms \ref{algo:sgd} and \ref{algo:surro-splitting} take around $1$ hour to converge, while the mask-decoupling scheme requires more than $10$ hours.

\begin{figure}[t]
\centering \small
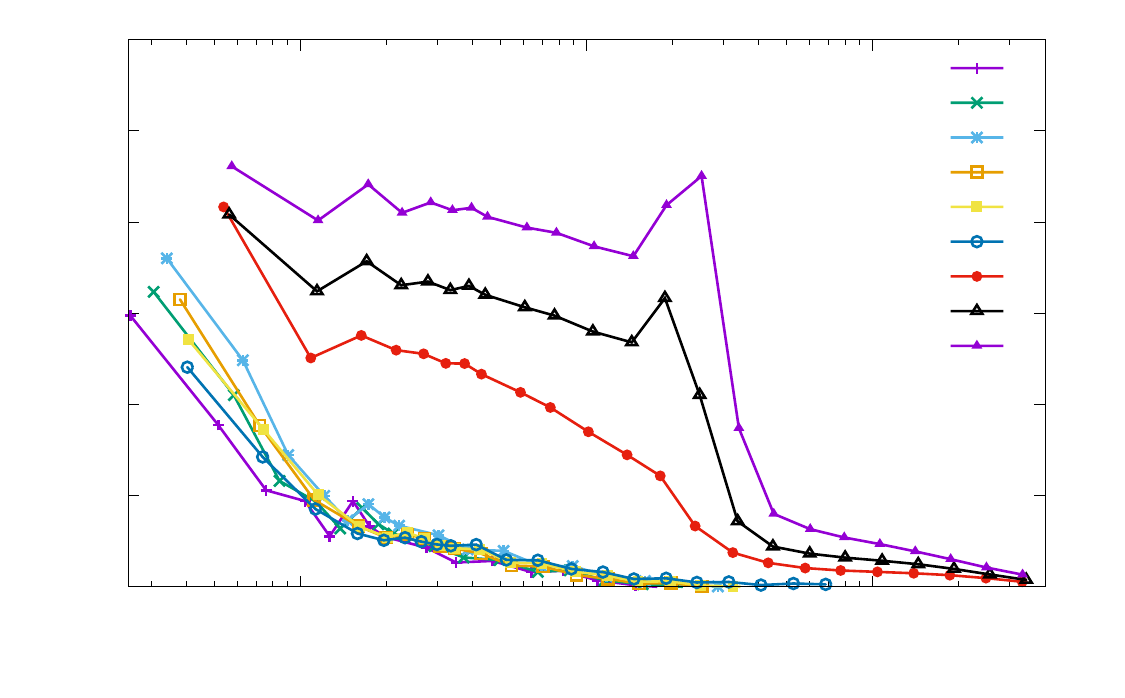
\caption{Learning from the noisy training set: speed test. Both Algorithms \ref{algo:sgd} and \ref{algo:surro-splitting} use Option I. ``Batch'' refers to Iterated Sherman-Morrison dictionary update~\cite{wohlberg-2016-efficient} with mask decoupling technique~\cite{heide-2015-fast}, as in~\cite{wohlberg-2016-boundary}.}
\label{fig:methods_ms}
\end{figure}

\section{Numerical results: learning from large data set}

In this section, we test the feasibility of our methods on large data set, which is not tractable for batch methods. This training set consists of 1000 images of size $256\times256$ selected from the MIRFLICKR-1M dataset, and the testing set consists of 50 distinct images with the same size from the same source. A dictionary of 100 kernels with size $12\times12$ is trained and the related experiment results are reported in \fig{large}.

The parameters for Algorithms \ref{algo:sgd} and \ref{algo:surro-splitting} are chosen the same as those in Section \ref{sec:rslt}. For Algorithm \ref{algo:sgd}, we choose $\eta^{(t)} = 10/(5+t)$ and Option I. For Algorithm \ref{algo:surro-splitting}, we choose $p=10,\tilde{N}=128^2,\tau^{(t)} = 0.01/t$ and Option I.

\begin{figure}[htbp]
\centering \small
\begin{tabular}{cc}
\subfigure[Time cost on large data set.]{
       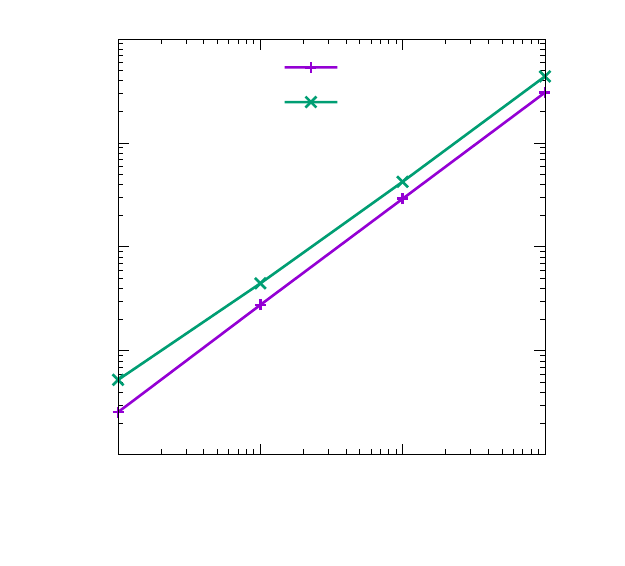
\label{fig:large_t}}
&
\hspace{-9mm} \subfigure[Memory cost on large data set.]{
      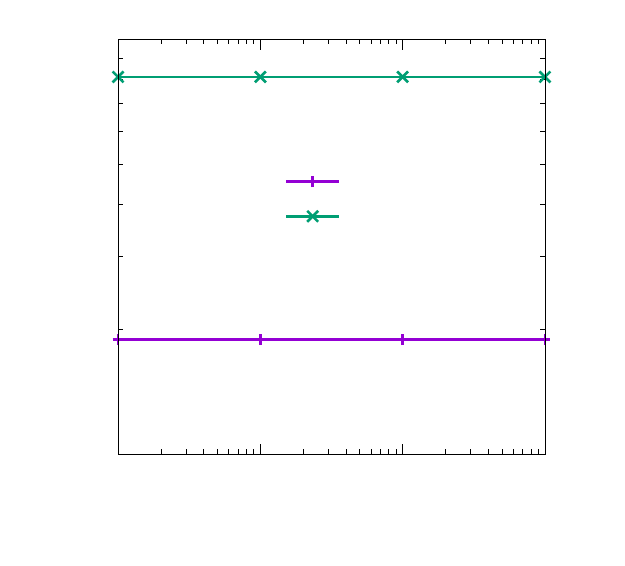
\label{fig:large_m}}
\\
 \subfigure[Functional value on large data set.]{
       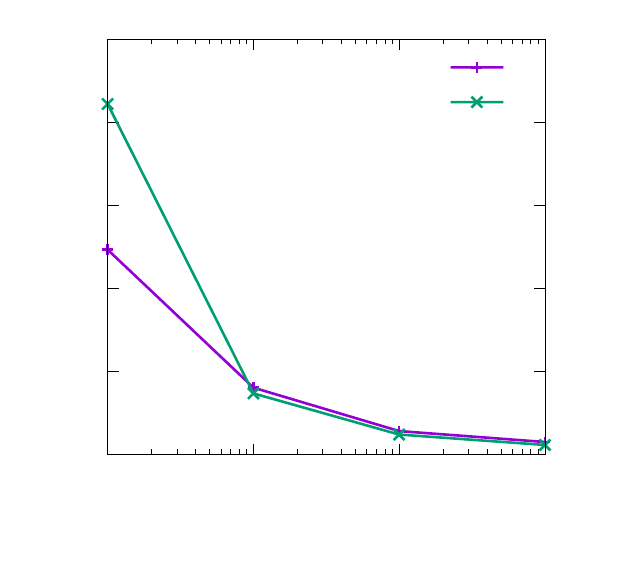
\label{fig:large_f}}
&
\hspace{-9mm} \subfigure[Functional value with training time.]{
       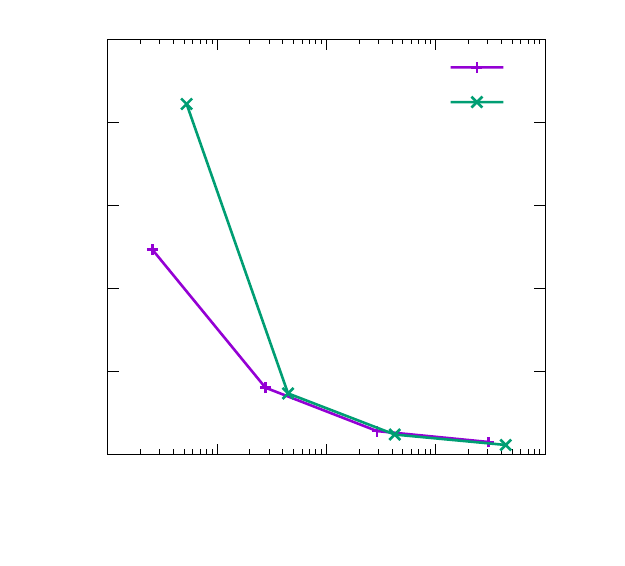
\label{fig:large_ft}}
\end{tabular}
\caption{Learning from a large data set.}
\label{fig:large}
\end{figure}

Unlike the experiments in Section \ref{sec:rslt}, we run our algorithms with only one epoch, \ie true online learning. Results in \fig{large} demonstrate that our Algorithm \ref{algo:sgd} and \ref{algo:surro-splitting} are both feasible on large data set. The first order method, Algorithm \ref{algo:sgd}, has a cheaper single step, but it learns less with the same number of iterations. The second order method, Algorithm \ref{algo:surro-splitting}, has the converse behavior, achieveing a slightly smaller functional value with the same number of iterations. Finally,~\fig{large_ft} shows that the two algorithms have similar performance in terms of functional value on the test set with respect to training time. This result is consistent with those in Section \ref{sec:rslt}.

\section{Conclusions}
\label{sec:concl}

We have proposed two efficient online convolutional dictionary learning methods. Both of them have a theoretical convergence guarantee and show good performance on both time and memory usage. Compared to recent online CDL works \cite{degraux-2017-online, wang2017online}, which use the same framework but different $D$-update algorithms, our second-order method improves the framework by several practical techniques. Our first-order method, to the best of our knowledge, is the first attempt to use first order methods in online CDL.  It shows better performance in time and memory usage, and requires fewer parameters to tune.  Moreover, based on these two methods, we have also proposed an online dictionary learning method, which is able to learn meaningful dictionaries from a partially masked training set. Although only single-channel images are considered in this article, our online methods can easily be extended to the multi-channel case~\cite{wohlberg-2016-convolutional}.

\section*{Acknowledgement} The authors thank the two anonymous reviewers for their careful reading and valuable comments that helped improve the final version of this manuscript.

\appendix

\section{Derivation of conjugate cogradient (\ref{eqn:basic_gradient_freq}) of the frequency-domain loss function $\hat{l}$}
\label{app:derivation}

Consider a real-valued function defined on the complex domain $f:\C^n \to \R$, which can be viewed as a function defined on the $2n$ dimensional real domain: $f(x) = f\big(\Re(x) + i \Im(x)\big)$, where $\Re(x), \Im(x) \in \R^n$ are the real part and imaginary part, respectively. By \cite{sorber2012unconstrained}, ``conjugate cogradient'' is defined as
\begin{equation}
\label{eqn:cogra}
\nabla f(x) \triangleq \frac{\partial f}{\partial \Re(x)} + i \frac{\partial f}{\partial \Im(x)} \;.
\end{equation}
Based on (\ref{eqn:cogra}), we give a derivation of (\ref{eqn:basic_gradient_freq}).

Recall the definition $\hat{l}(\hat{\mb{d}},\hat{\mb{x}};\hat{\mb{s}}) = 1/2\normsz[\big]{\hat{X}\hat{\mb{d}}-\hat{\mb{s}}}^2$. Substituting $\hat{X} = \Re(\hat{X}) + i \Im(\hat{X})$, $\hat{\mb{d}}=\Re(\hat{\mb{d}})+i \Im(\hat{\mb{d}})$, and $\hat{\mb{s}} = \Re(\hat{\mb{s}}) + i \Im(\hat{\mb{s}})$ into $\hat{l}$, we have
\[
\begin{aligned}
&\hat{l}(\hat{\mb{d}},\hat{\mb{x}};\hat{\mb{s}})  \\
=& \frac{1}{2}\normsz[\big]{\Re(\hat{X})\Re(\hat{\mb{d}}) - \Im(\hat{X})\Im(\hat{\mb{d}})-\Re(\hat{\mb{s}}) + i\big(\Im(\hat{X})\Re(\hat{\mb{d}})+ \Re(\hat{X}) \Im(\hat{\mb{d}}) - \Im(\hat{\mb{s}})\big)}^2\\
=&\frac{1}{2}\normsz[\big]{\Re(\hat{X})\Re(\hat{\mb{d}}) - \Im(\hat{X})\Im(\hat{\mb{d}}) - \Re(\hat{\mb{s}})}^2 + \frac{1}{2} \normsz[\big]{\Im(\hat{X})\Re(\hat{\mb{d}}) + \Re(\hat{X}) \Im(\hat{\mb{d}}) - \Im(\hat{\mb{s}})}^2 \;.
\end{aligned}
\]
The partial derivatives on $\Re(\hat{\mb{d}})$ and $\Im(\hat{\mb{d}})$ are, respectively,
\[
\begin{aligned}
\frac{\partial \hat{l}}{\partial \Re(\hat{\mb{d}})} =&  \Re(\hat{X})^T \big(\Re(\hat{X})\Re(\hat{\mb{d}}) - \Im(\hat{X})\Im(\hat{\mb{d}})-\Re(\hat{\mb{s}})\big) + \Im(\hat{X})^T \big(\Im(\hat{X})\Re(\hat{\mb{d}}) + \Re(\hat{X}) \Im(\hat{\mb{d}}) - \Im(\hat{\mb{s}})\big)\\
\frac{\partial \hat{l}}{\partial \Im(\hat{\mb{d}})} = & \Im(\hat{X})^T \big(-\Re(\hat{X})\Re(\hat{\mb{d}}) + \Im(\hat{X})\Im(\hat{\mb{d}}) + \Re(\hat{\mb{s}})\big)  + \Re(\hat{X})^T \big(\Im(\hat{X})\Re(\hat{\mb{d}}) + \Re(\hat{X}) \Im(\hat{\mb{d}}) - \Im(\hat{\mb{s}})\big) \;.
\end{aligned}
\]
Therefore,
\[
\begin{aligned}
&\hat{X}^H(\hat{X}\hat{\mb{d}}-\hat{\mb{s}})\\ = & (\Re(\hat{X}) - i \Im(\hat{X}))^T \Big( (\Re(\hat{X})\Re(\hat{\mb{d}}) - \Im(\hat{X})\Im(\hat{\mb{d}})-\Re(\hat{\mb{s}}))+ i \big( \Im(\hat{X})\Re(\hat{\mb{d}}) + \Re(\hat{X}) \Im(\hat{\mb{d}}) - \Im(\hat{\mb{s}}) \big) \Big)\\ = & \frac{\partial \hat{l}}{\partial R\hat{\mb{d}}} + i \frac{\partial \hat{l}}{\partial I\hat{\mb{d}}} \;.
\end{aligned}
\]
By the definition of conjugate cogradient (\ref{eqn:cogra}), the right side of the above equation is the conjugate cogradient of $\hat{l}$, i.e.
\[
\nabla \hat{l}(\hat{\mb{d}},\hat{\mb{x}};\hat{\mb{s}}) = \hat{X}^H(\hat{X}\hat{\mb{d}}-\hat{\mb{s}}) \;.
\]

\section{Proof of the equivalence between the gradients in the frequency domain and spatial domain: (\ref{eqn:equal_freq_spatial}) }
\label{app:equal}

\begin{proof}
Let $\F$ be the Fourier operator from $\C^N$ to $\C^N$, so that $\F^{-1}=\F^H$ is the inverse Fourier operator. $\mb{x}$ and $X$ are the vector form and operator form of the coefficient map, respectively. $\hat{\mb{x}}$ and $\hat{X}$ are the corresponding vector and operator in the frequency domain.
By definition, we have that $\hat{\mb{x}} = \F \mb{x}$.
We claim that
\begin{equation}
\label{eqn:operator-freq}
 \hat{X} = \F X \F^{H} \;.
\end{equation}
 To prove this, notice that
\[\hat{X}\hat{\mb{d}} = \F\big( \mb{x} \ast \mb{d} \big) = \F( X\mb{d} \big) = \F X \F^H \F \mb{d} = \F X \F^H \hat{\mb{d}} \;, \quad \forall \mb{d}\in\R^N \;. \]
Thus we have $\hat{X} = \F X \F^{H}$. With this equation, we have
\[
\begin{aligned}
\hat{X}^H(\hat{X} \hat{\mb{d}} - \hat{\mb{s}}) &= ( \F X \F^{H})^H ( \F X \F^{H} \F\mb{d} - \F\mb{s}) = (\F X^T \F^H) (\F X\mb{d} - \F \mb{s})\\& = \F \big( X^T(X\mb{d}-\mb{s}) \big) \;,
\end{aligned}
\]
which is exactly (\ref{eqn:equal_freq_spatial}).
\end{proof}

\section{Frequency-domain FISTA}
\label{sec:fista_freq}

To solve (\ref{eqn:d-update-mod}), we propose frequency-domain FISTA, Algorithm \ref{algo:fista}. It calculates the gradient in the frequency domain and do projection and extrapolation in the spatial domain. Mathematically speaking, (\ref{eqn:equal_freq_spatial}) illustrates that frequency-domain FISTA is actually equivalent with standard FISTA. However, calculating convolutional operator in the frequency domain reduces computing time. Thus, our algorithm is faster.

\begin{algorithm2e}[h]
\SetKwInOut{initial}{Initialize}
\KwIn{Hessian matrix $\hat{A}^{(t)}_{\mathrm{mod}}$ and vector $\hat{\mb{b}}^{(t)}_{\mathrm{mod}}$.\\
Dictionary of last iterate: $\mb{d}^{(t-1)}$.}
\initial{Let $\mb{g}^0 = \mb{d}^{(t-1)}$  (warm start),
$\mb{g}^0_{\text{aux}} = \mb{g}^0$, $\gamma^0 = 1$.}
\For{$j = 0,1,2,\ldots$ until condition (\ref{eqn:stop_condition}) is satisfied} {
Compute DFT: $\hat{\mb{g}}^j_{\text{aux}}= \text{FFT}(\mb{g}^j_{\text{aux}})$. \\
Compute conjugate cogradient:  $\nabla \hat{\F}^{(t)}_{\mathrm{mod}} (\hat{\mb{g}}^{j}_{\text{aux}}) = \frac{1}{\Lambda^{(t)}} \big(\hat{A}^{(t)}_{\mathrm{mod}} \hat{\mb{g}}^{j}_{\text{aux}} - \hat{\mb{b}}^{(t)}_{\mathrm{mod}}\big)$ .\\
Compute the next iterate:
\vspace{-2mm}
\begin{equation}
\label{ista}
\mb{g}^{j+1} = \text{Proj}_{\text{C}_\text{PN}} \Big(\text{IFFT}\big(\hat{\mb{g}}^{j}_{\text{aux}} - \eta \nabla \hat{\F}^{(t)}_{\mathrm{mod}} (\hat{\mb{g}}^{j}_{\text{aux}})\big)\Big) \;.
\vspace{-2mm}
\end{equation}
Let $\gamma^{j+1} = \big(1+\sqrt{1+4(\gamma^j)^2}\big) / 2$, then compute the auxiliary variable:
\vspace{-2mm}
\begin{equation}
\mb{g}^{j+1}_{\text{aux}} = \mb{g}^{j+1} + \frac{\gamma^{j}-1}{\gamma^{j+1}} (\mb{g}^{j+1}-\mb{g}^j) \;.
\vspace{-4mm}
\end{equation}
}
\KwOut{$\mb{d}^{(t)} \leftarrow \mb{g}^J$, where $J$ is the last iterate.}
\caption{Frequency-domain FISTA for solving subproblem (\ref{eqn:d-update-mod})}\label{algo:fista}
\end{algorithm2e}

\section{Details of the assumptions}\label{app:assume-details}

\subsection{Description of Assumption \ref{assume:uniqueness}}

To represent Assumption \ref{assume:uniqueness} in a concise way, we use the  notation
\[
D \mb{x} = \sum_{m=1}^M \mb{d}_m \ast \mb{x}_{m} \approx \mb{s} \;,
\]
where $\mb{x}\in\R^{MN}$, $\mb{s}\in\R^{N}$, $D: \R^{MN}\to\R^{N}$ is the convolutional dictionary. Then CBPDN problem (\ref{eq:cbpdn}) could be written as
\begin{equation}
\label{eqn:cbpdn_D}
\min_{\mb{x}\in\R^{MN}} (1/2)\norm{D\mb{x}-\mb{s}}_2^2 + \lambda \norm{\mb{x}}_1 \;.
\end{equation}
The coefficient map $\mb{x}$ is usually sparse, and $\Lambda$ is the set of indices of non-zero elements in $\mb{x}$. Then, we have
$D\mb{x} = D_{\Lambda}\mb{x}_{\Lambda}$.
By the results in \cite{fuchs2005recovery}, problem (\ref{eqn:cbpdn_D}) has the unique solution if $D_{\Lambda}^TD_{\Lambda}$ is invertible\footnote{Although \cite{fuchs2005recovery} only studies standard sparse coding, the uniqueness condition can be applied to the convolutional case because the only condition in their proof is ``for a  convex function $f(x)$ on $\R^n$, $x$ a minimum if and only if $0\in \partial f(x)$''. The only assumption is the convexity of the function, with no assumptions on the signals and dictionaries. Thus, large signals and convolutional dictionaries as in our case are consistent with the condition in \cite{fuchs2005recovery}.}, and its unique solution satisfies
\begin{equation}
\label{eqn:uniquesln}
\mb{x}^*_{\Lambda} = (D_{\Lambda}^TD_{\Lambda})^{-1}(D_{\Lambda}^T\mb{s} - \lambda \text{sign}(\mb{x}^*_{\Lambda})) \;.
\end{equation}
Specifically, Assumption \ref{assume:uniqueness} is: \emph{for all signals $\mb{s}$ and dictionaries $\mb{d}$, the smallest singular value of $D_{\Lambda}^TD_{\Lambda}$ is lower bounded by a positive number, i.e.}
\begin{equation}
\label{eqn:unique}
\sigma_{\text{min}}(D_{\Lambda}^TD_{\Lambda}) \geq \kappa \;.
\end{equation}

Except for condition (\ref{eqn:unique}), other types of uniqueness conditions of CSC are studied in recent works~\cite{papyan2017convolutional2, papyan2017working, sulam2017multi}.

\subsection{Description of Assumption \ref{assume:surro}}

Specifically, Assumption \ref{assume:surro} is, \emph{the surrogate functions $\F^{(t)}_{\mathrm{mod}}(\mb{d})$ are uniformly strongly convex, i.e.}
\begin{equation}
\label{eqn:strong-convex}
\langle  \nabla \F^{(t)}_{\mathrm{mod}}(\mb{d}) - \nabla \F^{(t)}_{\mathrm{mod}}(\tilde{\mb{d}})  , \mb{d} - \tilde{\mb{d}}\rangle \geq \mu \normsz[\big]{ \mb{d} - \tilde{\mb{d}} }^2 \;,
\end{equation}
\emph{for all $t, \mb{d},\tilde{\mb{d}}$, for some $\mu > 0$.}

\section{Proofs of propositions and the theorem}

Before proving propositions, we introduce a useful lemma.

\begin{lemma}[Uniform smoothness of surrogate functions]
\label{lemma:lipschitz} Under Assumptions \ref{assume:bdd_signal} and \ref{assume:uniqueness}, we have $f^{(t)}$ (\ref{eqn:loss_surrogate}) and $\F^{(t)}_{\mathrm{mod}} (\ref{eqn:F}) $ are uniformly $L$-smooth, i.e.
\begin{equation}
\label{eqn:lipschitz}
\begin{aligned}
\normsz[\big]{\nabla f^{(t)}(\mb{d}) - \nabla f^{(t)}(\tilde{\mb{d}})} &\leq L_f \normsz[\big]{\mb{d} - \tilde{\mb{d}} }\\
\normsz[\big]{\nabla \F^{(t)}_{\mathrm{mod}}(\mb{d}) - \nabla \F^{(t)}_{\mathrm{mod}}(\tilde{\mb{d}})} &\leq L_{\F} \normsz[\big]{\mb{d} - \tilde{\mb{d}} } \;,
\end{aligned}
\end{equation}
for all $t, \mb{d}, \tilde{\mb{d}}$, for some constants $L_f > 0,  L_{\F} > 0$.
\end{lemma}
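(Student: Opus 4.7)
The plan is to show that the Hessians of the quadratic maps $f^{(t)}$ have operator norms bounded uniformly in $t$; uniform smoothness of $\F^{(t)}_{\mathrm{mod}}$ will then follow because it is a convex combination of these quadratics.

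First I would observe that, for fixed $\mb{x}^{(t)}$ and $\mb{s}^{(t)}$, the map $\mb{d} \mapsto f^{(t)}(\mb{d}) = l(\mb{d},\mb{x}^{(t)};\mb{s}^{(t)})$ is quadratic (the $\lambda\|\mb{x}\|_1$ term is constant in $\mb{d}$), so
\[
\nabla f^{(t)}(\mb{d}) - \nabla f^{(t)}(\tilde{\mb{d}}) = (X^{(t)})^T X^{(t)} (\mb{d} - \tilde{\mb{d}}),
\]
and it suffices to bound $\|X^{(t)}\|_{\mathrm{op}}^2$ by a constant independent of $t$.

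The key step is to bound $\|X^{(t)}\|_{\mathrm{op}}$ in terms of $\|\mb{x}^{(t)}\|_1$, and then to bound the latter. For the first, I would use that $X^{(t)}\mb{d} = \sum_m \mb{d}_m \ast \mb{x}^{(t)}_m$ together with Young's inequality for circular convolution ($\|\mb{d}_m \ast \mb{x}^{(t)}_m\|_2 \leq \|\mb{d}_m\|_2 \|\mb{x}^{(t)}_m\|_1$) and Cauchy-Schwarz across $m$, yielding
\[
\|X^{(t)}\mb{d}\|_2 \leq \Bigl(\sum_{m=1}^M \|\mb{x}^{(t)}_m\|_1^2\Bigr)^{1/2} \|\mb{d}\|_2 \leq \|\mb{x}^{(t)}\|_1 \|\mb{d}\|_2.
\]
For the second, I would use the optimality of $\mb{x}^{(t)}$ for (\ref{eqn:cbpdn_t}): comparing with the feasible point $\mb{x} = 0$ gives $\lambda \|\mb{x}^{(t)}\|_1 \leq \tfrac{1}{2}\|\mb{s}^{(t)}\|_2^2$, and by Assumption \ref{assume:bdd_signal} the right-hand side is uniformly bounded. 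Combining these yields $\|(X^{(t)})^T X^{(t)}\|_{\mathrm{op}} \leq L_f$ independently of $t$, proving the first inequality in (\ref{eqn:lipschitz}).

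For $\F^{(t)}_{\mathrm{mod}}$, I would note that (\ref{eqn:surrogate_mod_explicit}) writes it as a convex combination $\sum_{\tau=1}^t w_\tau^{(t)} f^{(\tau)}$ with $w_\tau^{(t)} = (\tau/t)^p/\Lambda^{(t)} \geq 0$ and $\sum_\tau w_\tau^{(t)} = 1$, so its Hessian is the convex combination of the individual Hessians. Each of the latter has operator norm at most $L_f$, so the triangle inequality yields uniform smoothness with the same constant $L_{\F} = L_f$. The main obstacle is simply getting the convolutional operator-norm bound clean; none of the steps are deep, but the reduction from $\|X^{(t)}\|_{\mathrm{op}}$ to $\|\mb{x}^{(t)}\|_1$ via Young's inequality is the place where the convolutional structure is really used, and it is needed precisely to translate the $\ell_1$ control coming from CBPDN into an $\ell_2 \to \ell_2$ operator bound.
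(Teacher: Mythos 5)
Your proposal is correct, and it follows the paper's overall architecture --- reduce both inequalities to a uniform bound on $\|(X^{(t)})^T X^{(t)}\|$, then treat $\F^{(t)}_{\mathrm{mod}}$ as a convex combination of the $f^{(\tau)}$ --- but the way you obtain the uniform bound on $X^{(t)}$ is genuinely different. The paper bounds $\mb{x}^{(t)}$ through the closed-form expression for the unique CBPDN solution, $\mb{x}^*_{\Lambda} = (D_{\Lambda}^T D_{\Lambda})^{-1}(D_{\Lambda}^T\mb{s} - \lambda\,\mathrm{sign}(\mb{x}^*_{\Lambda}))$, which invokes Assumption \ref{assume:uniqueness} (the uniform lower bound $\sigma_{\min}(D_\Lambda^T D_\Lambda)\ge\kappa$) together with the compactness of the supports of $\mb{d}$ and $\mb{s}$; it then asserts, without further detail, that boundedness of $\mb{x}^{(t)}$ gives boundedness of the operator $X^{(t)}$. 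You instead get $\lambda\|\mb{x}^{(t)}\|_1 \le \tfrac12\|\mb{s}^{(t)}\|_2^2$ for free from optimality of $\mb{x}^{(t)}$ against the feasible point $\mb{x}=0$, and convert this $\ell_1$ control into an $\ell_2\to\ell_2$ operator bound via Young's inequality for circular convolution plus Cauchy--Schwarz across the $M$ channels. This buys two things: your argument needs only Assumption \ref{assume:bdd_signal} (Assumption \ref{assume:uniqueness} is not used at all), and it makes fully explicit the step --- the passage from a bound on the coefficient vector to a bound on $\|X^{(t)}\|_{\mathrm{op}}$ --- that the paper leaves implicit. The treatment of $\F^{(t)}_{\mathrm{mod}}$ is essentially identical in both proofs, and your observation that one may take $L_{\F}=L_f$ is consistent with the paper's triangle-inequality computation, since the weights $(\tau/t)^p/\Lambda^{(t)}$ sum to one.
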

\begin{proof}
First, we consider a single surrogate function:
\[
\normsz[\big]{\nabla f^{(t)}(\mb{d}) - \nabla f^{(t)}(\tilde{\mb{d}})} = \normsz[\big]{(X^{(t)})^T(X^{(t)})(\mb{d}-\tilde{\mb{d}})} \;.
\]
By $\mb{d} \in \text{C}$ (the compact support of $\mb{d}$), Assumption \ref{assume:bdd_signal} (the compact support of $\mb{s}$), and equation (\ref{eqn:uniquesln}) (regularity of convolutional sparse coding), we have $\mb{x}^{(t)}$ is uniformly bounded. Therefore, $X^{(t)}$, the operator form of $\mb{x}^{(t)}$, is also uniformly bounded:
\begin{equation}
\label{eqn:x_bdd}
\normsz[\big]{X^{(t)}} \leq M,
\end{equation}
for all $t$, for some $M>0$, which is independent of $t$.\\
By (\ref{eqn:surrogate_mod_explicit}), we have
\[\begin{aligned}
  \normsz[\Big]{\nabla \F^{(t)}_{\mathrm{mod}}(\mb{d}) - \nabla \F^{(t)}_{\mathrm{mod}}(\tilde{\mb{d}})}
  = &\normsz[\bigg]{\frac{1}{\Lambda^{(t)}}\sum_{\tau=1}^t(\tau/t)^p(X^{(\tau)})^T(X^{(\tau)})(\mb{d}-\tilde{\mb{d}})} \\
  \leq & \frac{1}{\Lambda^{(t)}}\sum_{\tau=1}^t(\tau/t)^p \normsz[\Big]{(X^{(\tau)})^T(X^{(\tau)})(\mb{d}-\tilde{\mb{d}})} \;,
\end{aligned}\]
which, together with (\ref{eqn:x_bdd}), implies (\ref{eqn:lipschitz}).
\end{proof}

\subsection{Proof of Proposition \ref{lemma:fista}}
\label{app:lemma-fista}

Given the strong-convexity (\ref{eqn:strong-convex}) and smoothness (\ref{eqn:lipschitz}) of the surrogate function, we start to prove Proposition \ref{lemma:fista}.
\begin{proof} To prove (\ref{eqn:fpr_bdd}), we consider a more general case. Let $\mb{g}^*$ be the minimizer of the following subproblem:
\[
\mb{g}^* = \argmin_{\mb{d}}\F(\mb{d}) + \iota_{\text{C}}(\mb{d}) \;,
\]
where $\F$ is $\mu$-strongly convex and $L$-smooth. Moreover, $\mb{g}^{j}$ and $\mb{g}_{\text{aux}}^{j}$ are the  iterates generated in Algorithm \ref{algo:fista}, and $j$ is the loop index. Then, we want to show that
\begin{equation}
\label{eqn:fpr_bdd-proof}
\norm{\mb{g}^{j+1}-\mb{g}^*} \leq C \fpr^{(t)}(\mb{g}^{j}_{\text{aux}}) \;, \quad \forall j \geq 0 \;.
\end{equation}

By (\ref{eqn:equal_freq_spatial}), it is enough to prove the above for the spatial-domain FISTA. By strong convexity and smoothness of $\F$, we obtain
\[
\begin{aligned}
&\normsz{\mb{g}^{j+1} - \mb{g}^*}^2\\
=& \normsz[\Big]{\text{Proj}(\mb{g}_{\text{aux}}^j - \eta \nabla \F(\mb{g}_{\text{aux}}^j)) - \text{Proj}(\mb{g}^* - \eta \nabla \F(\mb{g}^*)) }^2\\
\leq & \normsz[\Big]{\mb{g}_{\text{aux}}^j - \eta \nabla \F(\mb{g}_{\text{aux}}^j) - \mb{g}^* - \eta \nabla \F(\mb{g}^*) }^2\\
=&  \normsz[\Big]{\mb{g}_{\text{aux}}^j - \mb{g}^* - \eta \big( \nabla \F(\mb{g}_{\text{aux}}^j) - \nabla \F(\mb{g}^*) \big)}^2\\
=& \|\mb{g}_{\text{aux}}^j - \mb{g}^*\|^2 - 2\eta\Big\langle  \mb{g}_{\text{aux}}^j - \mb{g}^*, \nabla \F(\mb{g}_{\text{aux}}^j) - \nabla \F(\mb{g}^*)  \Big\rangle + \eta^2 \normsz[\big]{\nabla \F(\mb{g}_{\text{aux}}^j) - \nabla \F(\mb{g}^*)}^2\\
\leq & (1-2\mu\eta+ \eta^2L^2)\|\mb{g}_{\text{aux}}^j - \mb{g}^*\|^2 \;.
\end{aligned}
\]
Combining the above inequality and the definition of FPR (\ref{eq:fpr}), we have
\[
\begin{aligned}
\fpr(\mb{g}^{j}_{\text{aux}}) =& \normsz[\Big]{ \mb{g}^{j}_{\text{aux}} - \text{Proj}\big(\mb{g}^{j}_{\text{aux}} - \eta\nabla \F(\mb{g}^{j}_{\text{aux}})\big)}\\
=& \normsz{ \mb{g}^{j}_{\text{aux}} - \mb{g}^{(j+1)}}\\
=& \normsz{ \mb{g}^{j}_{\text{aux}} - \mb{g}^* - (\mb{g}^{(j+1)} - \mb{g}^* )}\\
\geq & \|\mb{g}^{j}_{\text{aux}} - \mb{g}^*\| - \|\mb{g}^{(j+1)} - \mb{g}^*\|\\
\geq & \Big(1-\sqrt{1-2\mu\eta + \eta^2L^2}\Big) \|\mb{g}^{j}_{\text{aux}} - \mb{g}^*\|\\
\geq & \frac{1-\sqrt{1-2\mu\eta + \eta^2L^2}}{\sqrt{1-2\mu\eta+ \eta^2L^2}}\|\mb{g}^{j+1} - \mb{g}^*\| \;.
\end{aligned}
\]
Let the step size be small enough $\eta \leq \min{(\mu/L^2,1/\mu)}$, we have $0\leq1-2\mu\eta+ \eta^2L^2\leq1$, which implies (\ref{eqn:fpr_bdd-proof}). Combining (\ref{eqn:fpr_bdd-proof}) and (\ref{eqn:stop_condition}), we get (\ref{eqn:fpr_bdd}).
\end{proof}

\subsection{Proof of Proposition \ref{lemma:d_residual}}
\label{app:lemma-d-residual}

\begin{proof} Recall $(\mb{d}^*)^{(t)}$ (\ref{eqn:exact-t}) is the ``exact solution'' of the $t^{\text{th}}$ iterate, and $\mb{d}^{(t)}$ is the ``inexact solution'' of the $t^{\text{th}}$ iterate (i.e. the approximated solution obtained by stopping condition (\ref{eqn:stop_condition})). Then, by the strong convexity of $\F^{(t)}_{\mathrm{mod}}$, we have
\[
\begin{aligned}
&\F^{(t)}_{\mathrm{mod}}(\mb{d}^{(t+1)}) - \F^{(t)}_{\mathrm{mod}}(\mb{d}^{(t)})\\
=& \F^{(t)}_{\mathrm{mod}}(\mb{d}^{(t+1)}) - \F^{(t)}_{\mathrm{mod}}((\mb{d}^*)^{(t)}) - \Big(\F^{(t)}_{\mathrm{mod}}(\mb{d}^{(t)}) - \F^{(t)}_{\mathrm{mod}}((\mb{d}^*)^{(t)})\Big)\\
\geq & \mu \|\mb{d}^{(t+1)} - (\mb{d}^*)^{(t)}\|^2 - L \| \mb{d}^{(t)} - (\mb{d}^*)^{(t)} \|^2\\
\geq & \mu \Big(\|\mb{d}^{(t+1)} - \mb{d}^{(t)}\| - \|\mb{d}^{(t)} - (\mb{d}^*)^{(t)}\|\Big)^2- L \| \mb{d}^{(t)} - (\mb{d}^*)^{(t)} \|^2 \;.
\end{aligned}
\]
Let $r^{(t)} = \|\mb{d}^{(t+1)} - \mb{d}^{(t)}\|$. If $r^{(t)}\leq C/t$, Proposition \ref{lemma:d_residual} is directly proved. Otherwise,  Proposition \ref{lemma:fista} (\ref{eqn:fpr_bdd}) implies $r^{(t)} - \|\mb{d}^{(t)} - (\mb{d}^*)^{(t)}\| \geq r^{(t)} - C/t \geq 0$ and
\begin{equation}
\label{eq:first_bd}
\F^{(t)}_{\mathrm{mod}}(\mb{d}^{(t+1)}) - \F^{(t)}_{\mathrm{mod}}(\mb{d}^{(t)}) \geq \mu\Big(r^{(t)} - \frac{C }{t}\Big)^2 - \frac{LC^2}{t^2} \;.
\end{equation}

On the other hand,
\[
\begin{aligned}
\F^{(t)}_{\mathrm{mod}}(\mb{d}^{(t+1)}) - \F^{(t)}_{\mathrm{mod}}(\mb{d}^{(t)}) =& \underbrace{\F^{(t)}_{\mathrm{mod}}(\mb{d}^{(t+1)}) - \F^{(t+1)}_{\mathrm{mod}}(\mb{d}^{(t+1)})}_{\T_1}\\
+\underbrace{ \F^{(t+1)}_{\mathrm{mod}}(\mb{d}^{(t+1)}) - \F^{(t+1)}_{\mathrm{mod}}(\mb{d}^{(t)})}_{\T_2} &+ \underbrace{\F^{(t+1)}_{\mathrm{mod}}(\mb{d}^{(t)}) - \F^{(t)}_{\mathrm{mod}}(\mb{d}^{(t)})}_{\T_3} \;,
\end{aligned}
\]
Now we will give the upper bounds of $\T_1,\T_2,\T_3$. Given the smoothness of $\F_{\mathrm{mod}}^{(t)} $(\ref{eqn:lipschitz}) and $(\mb{d}^*)^{(t+1)}$ being the minimizer of $\F_{\mathrm{mod}}^{(t)}$, we have an upper bound of $\T_2$:
\begin{equation}
\label{eqn:ft1ft}
\begin{aligned}
\T_2 =& \F^{(t+1)}_{\mathrm{mod}}(\mb{d}^{(t+1)}) - \F^{(t+1)}_{\mathrm{mod}}(\mb{d}^{(t)})\\
=&\F^{(t+1)}_{\mathrm{mod}}(\mb{d}^{(t+1)}) - \F^{(t+1)}_{\mathrm{mod}}((\mb{d}^*)^{(t+1)}) - \Big(\F^{(t+1)}_{\mathrm{mod}}(\mb{d}^{(t)}) - \F^{(t+1)}_{\mathrm{mod}}((\mb{d}^*)^{(t+1)})\Big)\\
\leq& L\|\mb{d}^{(t+1)} - (\mb{d}^*)^{(t+1)}\|^2 - 0 \leq \frac{LC^2}{t^2} \;.
\end{aligned}
\end{equation}
Based on (\ref{eqn:surrogate_mod_explicit}), the gradient of $\F^{(t)}_{\mathrm{mod}}-\F^{(t+1)}_{\mathrm{mod}}$ is bounded by
\[
\begin{aligned}
&\norm{\nabla \F^{(t)}_{\mathrm{mod}}(\mb{d}) - \nabla \F^{(t+1)}_{\mathrm{mod}}(\mb{d}) }\\
=&\norm{\nabla \F^{(t)}_{\mathrm{mod}}(\mb{d}) - \frac{\alpha^{(t+1)}\Lambda^{(t)}}{\Lambda^{(t+1)}} \nabla \F^{(t)}_{\mathrm{mod}}(\mb{d}) - \frac{1}{\Lambda^{(t+1)}}\nabla f^{(t+1)}(\mb{d})}\\
\leq&\frac{1}{\Lambda^{(t+1)}} \norm{\nabla \F^{(t)}_{\mathrm{mod}}(\mb{d})} +  \frac{1}{\Lambda^{(t+1)}}\norm{\nabla f^{(t+1)}(\mb{d})}
\leq C_0 /(\Lambda^{(t+1)}) \leq C_1/t \;,
\end{aligned}
\]
for some constant $C_1>0$. The second inequality follows from $\mb{d} \in \text{C}$ (the compact support of $\mb{d}$), Assumption \ref{assume:bdd_signal} (the compact support of $\mb{s}$), and equation (\ref{eqn:x_bdd}) (boundedness of $X$). The last inequality is derived by the follows:
\[
\frac{1}{\Lambda^{(t+1)}} = \frac{(t+1)^p}{\sum_{\tau=1}^{(t+1)} \tau^p}\leq \frac{(t+1)^p}{\int_0^{(t+1)}\tau^p \mathrm{d}\tau } = \frac{p}{t+1} \;.
\]
Then, $\F^{(t)}_{\mathrm{mod}}-\F^{(t+1)}_{\mathrm{mod}}$ is a Lipschitz continuous function with $L = C_1 /t$, which implies
\[
\T_1+\T_3 \leq \frac{C_1}{t}r^{(t)} \;.
\]
Therefore,
\begin{equation}
\label{eq:second_bd}
\F^{(t)}_{\mathrm{mod}}(\mb{d}^{(t+1)}) - \F^{(t)}_{\mathrm{mod}}(\mb{d}^{(t)}) \leq \frac{C_1}{t}r^{(t)} + \frac{LC^2}{t^2} \;.
\end{equation}

Combining (\ref{eq:first_bd}) and (\ref{eq:second_bd}), we have
\[
\mu\Big(r^{(t)} - \frac{C }{t}\Big)^2 - \frac{LC^2}{t^2} \leq \frac{C_1}{t}r^{(t)} + \frac{LC^2}{t^2} \;,
\]
which implies
\[
(r^{(t)})^2 - \frac{2C+C_1}{t}r^{(t)}\leq \frac{2LC^2}{\mu t^2} \;.
\]
This can be written more neatly as
\[(r^{(t)})^2 - 2\frac{C_2}{t}r^{(t)} \leq \frac{C_3}{t^2} \;, \quad \text{for some $C_2>0,C_3>0$} \;.
\]
Finally, $r^{(t)}$ is bounded by $r^{(t)}\leq (C_2+\sqrt{C_2^2+C_3})/t$. (\ref{eqn:d_residual}) is proved.
\end{proof}

\subsection{Proof of Proposition \ref{lemma:weight_clt}}
\label{app:weight_clt}

\begin{proof}
Define a sequence of random variables $Y_{i} = i^p Z_i$. Their expectations and variances are $\mu_i = i^p \mu$ and  $\sigma^2_i = i^{2p} \sigma^2$, respectively. Now we apply the Lyapunov central limit theorem on the stochastic sequence $\{Y_i\}$. First, we check the Lyapunov condition~\cite{billingsley2008probability}.
Let
\[
s_n^2 = \sum_{i=1}^n \sigma^2_i = \sum_{i=1}^n i^{2p}\sigma^2 = \Theta (n^{2p+1}) \;,
\]
then we have
\begin{equation}
\label{eqn:lyapunov} \frac{1}{s_n^{2+\delta}}\sum_{i=1}^n \E\Big[|Y_i - \mu_i|^{2+\delta}\Big]\leq \frac{1}{s_n^{2+\delta}}\sum_{i=1}^n (i^p\sigma)^{2+\delta} = \co\left( \frac{n^{2p + 1 + \delta p}}{n^{2p+1+\delta p + \delta/2}} \right) = \co (n^{-\delta/2}) \;.
\end{equation}
The Lyapunov condition is satisfied, so, by the Lyapunov central limit theorem, we have $\frac{1}{s_n}\sum_{i=1}^n (Y_i - \mu_i ) \overset{\text{d}}{\to} N(0,1)$. Furthermore, the definition of $\hat{Z}^n_{\mathrm{mod}}$ indicates
\[
\begin{aligned}
\frac{1}{s_n}\sum_{i=1}^n (Y_i - \mu_i ) =& \; \frac{1}{s_n}\sum_{i=1}^n i^p(Z_i - \mu ) = \frac{\sum_{i=1}^ni^p}{\sqrt{\sum_{i=1}^ni^{2p}}\sigma}\bigg(\frac{1}{\sum_{i=1}^ni^p}\sum_{i=1}^n i^p(Z_i - \mu ) \bigg)\\ =& \;\frac{\sum_{i=1}^ni^p}{\sqrt{\sum_{i=1}^ni^{2p}}\sigma} (\hat{Z}^n_{\mathrm{mod}} - \mu) \;.
\end{aligned}
\]
Given the following inequalities:
\[
\sum_{i=1}^ni^p < \int^{n+1}_1 s^p \text{d}s < \frac{1}{p+1}(n+1)^{p+1} \;, \quad
\sum_{i=1}^ni^p > \int^{n}_0 s^p \text{d}s = \frac{1}{p+1}(n)^{p+1} \;,
\]
we have
\[
\begin{aligned}\frac{1}{s_n}\sum_{i=1}^n (Y_i - \mu_i ) \leq \Big(1+\frac{1}{n}\Big)^{p+1} \frac{1}{\sigma}\frac{\sqrt{2p+1}}{p+1}\sqrt{n} (\hat{Z}^n_{\mathrm{mod}} - \mu) \;,\\\frac{1}{s_n}\sum_{i=1}^n (Y_i - \mu_i ) \geq \Big(1+\frac{1}{n}\Big)^{-(p+1)} \frac{1}{\sigma}\frac{\sqrt{2p+1}}{p+1}\sqrt{n} (\hat{Z}^n_{\mathrm{mod}} - \mu) \;.
\end{aligned}
\]
Then (\ref{eqn:weight_clt}) is obtained by $\frac{1}{s_n}\sum_{i=1}^n (Y_i - \mu_i ) \overset{\text{d}}{\to} N(0,1)$ and $(1+1/n)\to 1$.

The formula $\text{Var}(X) = \E X^2 - (\E X)^2 \geq0$ implies
\[
\bigg(\E\Big[\sqrt{n}\big|\hat{Z}^n_{\mathrm{mod}} - \mu\big|\Big]\bigg)^2 \leq \E\Big[n\big|\hat{Z}^n_{\mathrm{mod}} - \mu\big|^2\Big] \;.
\]
By the independence of different $Z_i$, we have
\[
\begin{aligned}
\E\Big[n\big|\hat{Z}^n_{\mathrm{mod}} - \mu\big|^2\Big] =& \frac{n}{(\sum_{i=1}^ni^p)^2} \sum_{i=1}^n \E\Big[ i^{2p}\big|Z_i - \mu\big|^2 \Big] \leq\frac{(p+1)^2}{2p+1} B^2 \;,
\end{aligned}
\]
where $B$ is the upper bound of $Z_i$ as $Z_i$ is compact supported. (\ref{eqn:weight_clt_bdd}) is proved.
\end{proof}

\subsection{Proof of Proposition \ref{lemma:weight}}
\label{app:weight-clt-f}

\begin{proof}
First, we fix $\mb{d}\in\text{C}$.
Let $i\to\tau, n\to t, Z_i \to f(\mb{d};\mb{s}^{(\tau)})$, then, by Proposition \ref{lemma:weight_clt}, we have
\[
\E\Big[\sqrt{t}\big|F(\mb{d}) - F^{(t)}_{\mathrm{mod}}(\mb{d})\big|\Big] \leq \frac{p+1}{\sqrt{2p+1}}B \;, \quad \forall t \in \{1,2,\cdots\}
\]
for some $B>0$, for fixed $\mb{d}$. Since $F$ and $F^{(t)}_{\mathrm{mod}}$ are continuously differentiable and have uniformly bounded derivatives (\ref{eqn:x_bdd}), we have $\E\Big[\sqrt{t}\big|F(\mb{d}) - F^{(t)}_{\mathrm{mod}}(\mb{d})\big|\Big]$ is uniformly continuous w.r.t $\mb{d}$ on a compact set $C$. Thus, the boundedness of  $\E\Big[\sqrt{t}\big|F(\mb{d}) - F^{(t)}_{\mathrm{mod}}(\mb{d})\big|\Big]$ on each $\mb{d}$ implies the boundedness for all $\mb{d}\in \text{C}$. Inequality (\ref{eqn:weight-clt-f}) is proved. Taking $p\to0$, we have (\ref{eqn:donsker}).
\end{proof}

\subsection{Proof of Theorem \ref{prop:main}}\label{app:mainprop}
\begin{proof}Let $u^{(t)} = \F^{(t)}_{\mathrm{mod}}(\mb{d}^{(t)})$. Inspired by the proof of Proposition 3 in \cite{mairal-2010-online}, we will show that $u^{(t)}$ is a ``quasi-martingale'' \cite{fisk1965quasi}.
\[
\begin{aligned}
&u^{(t+1)} - u^{(t)}\\
= &\F^{(t+1)}_{\mathrm{mod}}(\mb{d}^{(t+1)}) - \F^{(t)}_{\mathrm{mod}}(\mb{d}^{(t)})\\
 =& \underbrace{\F^{(t+1)}_{\mathrm{mod}}(\mb{d}^{(t+1)}) - \F^{(t+1)}_{\mathrm{mod}}(\mb{d}^{(t)})}_{\T_2} + \underbrace{\F^{(t+1)}_{\mathrm{mod}}(\mb{d}^{(t)}) -  \F^{(t)}_{\mathrm{mod}}(\mb{d}^{(t)})}_{\T_4} \;.
\end{aligned}
\]
The bound of $\T_2$ is given by (\ref{eqn:ft1ft}).  Furthermore, definition (\ref{eqn:loss_surrogate}) tells us $f^{(t+1)}(\mb{d}^{(t)}) = f(\mb{d}^{(t)};\mb{s}^{(t+1)})$, which implies
\[
\begin{aligned}
\T_4 =& \F^{(t+1)}_{\mathrm{mod}}(\mb{d}^{(t)}) -  \F^{(t)}_{\mathrm{mod}}(\mb{d}^{(t)})\\
=& \bigg(\frac{1}{\Lambda^{(t+1)}}f(\mb{d}^{(t)};\mb{s}^{(t+1)}) + \frac{\alpha^{(t+1)}\Lambda^{(t)}}{\Lambda^{(t+1)}} \F^{(t)}_{\mathrm{mod}}(\mb{d}^{(t)}) \bigg) -  \F^{(t)}_{\mathrm{mod}}(\mb{d}^{(t)})\\
=&\frac{f(\mb{d}^{(t)};\mb{s}^{(t+1)}) - F^{(t)}_{\mathrm{mod}}(\mb{d}^{(t)})}{\Lambda^{(t+1)}} + \frac{F^{(t)}_{\mathrm{mod}}(\mb{d}^{(t)}) - \F^{(t)}_{\mathrm{mod}}(\mb{d}^{(t)})}{\Lambda^{(t+1)}} \;.
\end{aligned}
\]
By the definitions of $f$ (\ref{eqn:cbpdn}) and $F$ (\ref{eqn:F}), we have $F^{(t)}_{\mathrm{mod}}(\mb{d}^{(t)}) \leq \F^{(t)}_{\mathrm{mod}}(\mb{d}^{(t)})$. Define $\G^t$ as all the previous information: $\G^t \triangleq \{ \mb{x}^{(\tau)}, \mb{s}^{(\tau)}, \mb{d}^{(\tau)} \}_{\tau=1}^t$. Thus, taking conditional expectation, we obtain
\[
\E[\T_4|\G^t] \leq \frac{1}{\Lambda^{(t+1)}}\Big(   \E[ f(\mb{d}^{(t)};\mb{s}^{(t+1)}) |\G^t]  - F^{(t)}_{\mathrm{mod}}(\mb{d}^{(t)})\Big)
= \frac{1}{\Lambda^{(t+1)}}\Big(   F(\mb{d}^{(t)})  - F^{(t)}_{\mathrm{mod}}(\mb{d}^{(t)})\Big) \;.
\]
Therefore, the positive part of $\E[\T_4|\G^t]$ is bounded by
\[
\E[\T_4|\G^t]^{+} \leq \frac{1}{\Lambda^{(t+1)}}\|   F  - F^{(t)}_{\mathrm{mod}}\|_{\infty} = \co\left(\frac{1}{t^{3/2}}\right)\;,
 \]
where the second inequality follows from (\ref{eqn:weight-clt-f}). Given the bound of $\T_2$ (\ref{eqn:ft1ft}) and $\T_4$, we have
\[
\sum_{t=1}^\infty \E\big[\E[u^{(t+1)}-u^{(t)}|\G^t]^{+}\big] \leq \sum_{t=1}^\infty \left( \co\left(\frac{1}{t^{3/2}}\right) + \co\left(\frac{1}{t^2}\right) \right)< + \infty \;,
\]
which implies that $u^{(t+1)}$ generated by Algorithm \ref{algo:surro-splitting}
is a quasi-martingale. Thus, by results in \cite[Sec. 4.4]{bottou1998online} or \cite[Theorem 6]{mairal-2010-online}, we have $u^{(t)}$ converges almost surely. \\ For the proofs of \ref{thm:2}, \ref{thm:3} and \ref{thm:4}, using the results in Proposition \ref{lemma:d_residual}, \ref{lemma:weight} in this paper, following the same proof line of Proposition 3 and 4 in \cite{mairal-2010-online}, we can obtain the results in \ref{thm:2}, \ref{thm:3} and \ref{thm:4}.
\end{proof}

\bibliographystyle{siamplainmod}
\bibliography{olcdl}

\end{document}